\newtheorem{thm}{Theorem}
\begin{document}

\title{Adversarial Poisoning Attacks and Defense for General Multi-Class Models Based On Synthetic Reduced Nearest Neighbors}

\author{Pooya Tavallali\\
University of California, Merced\\
{\tt\small ptavallali@ucmerced.edu}
\and
Vahid Behzadan\\
Tagliatela College of Engineering\\
University of New Haven\\
{\tt\small vbehzadan@newhaven.edu}
\and
Peyman Tavallali\\
Independent Researcher\\
{\tt\small tavallali@gmail.com}
\and
Mukesh Singhal\\
University of California, Merced\\
{\tt\small msinghal@ucmerced.edu}
}

\maketitle
\begin{abstract}
State-of-the-art machine learning models are vulnerable to data poisoning attacks whose purpose is to undermine the integrity of the model. However, the current literature on data poisoning attacks is mainly focused on ad hoc techniques that are only applicable to specific machine learning models. Additionally, the existing data poisoning attacks in the literature are limited to either binary classifiers or to gradient-based algorithms. To address these limitations, this paper first proposes a novel model-free label-flipping attack based on the multi-modality of the data, in which the adversary targets the clusters of classes while constrained by a label-flipping budget. The complexity of our proposed attack algorithm is linear in time over the size of the dataset. Also, the proposed attack can increase the error up to two times for the same attack budget. Second, a novel defense technique based on the Synthetic Reduced Nearest Neighbor (SRNN) model is proposed. The defense technique can detect and exclude flipped samples on the fly during the training procedure. Through extensive experimental analysis, we demonstrate that (i) the proposed attack technique can deteriorate the accuracy of several models drastically, and (ii) under the proposed attack, the proposed defense technique significantly outperforms other conventional machine learning models in recovering the accuracy of the targeted model. 

\end{abstract}






\section{Introduction}
\label{sec:intro}
Machine learning models are known to be vulnerable to data poisoning attacks \cite{papernot2018sok} at training-time. In such attacks, the adversary intentionally manipulates the training data by perturbing, adding, or removing training samples with the goal of deteriorating the integrity of the model, thus, resulting in under-performing models with low accuracy \cite{barreno2006can}.

Alternatively, this scenario can be seen as learning under noisy data \cite{manwani2013noise}. Such attacks have the intention of altering the decision boundaries of targeted model, thus, threatening the integrity of the model \cite{kearns1993learning}. The modification to data is done by manipulating samples' information such as features and labels, or by inserting and removing samples. Generally, it is assumed that attacks are constrained by an attack budget to account for realistic consitions of such attacks \cite{papernot2018sok, nelson2006bounding}. Altering the training samples can be seen as modifying the modalities of the data that generated the training samples, thus, deteriorating the consistency of the trainset and the trained model. Much of the literature in this field is focused on the poisoning attacks against specific models and their robustness against ad hoc attacks \cite{papernot2018sok, silver2016mastering}. Authors of \cite{kearns1993learning} theoretically investigated the accuracy of classifiers under an adversarial attack that can modify a specific portion of the trainset.

Label manipulation is a common attack surface for adversaries\cite{papernot2018sok}. The adversarial label manipulation attack tends to perturb the minimum number of labels (constrained by an attack budget), such that the resulting error is maximized. Finding the optimum of such attack is shown to be at least NP-hard (\cite{biggio2011support, mozaffari2014systematic}). A baseline strategy is to perturb the labels at random. The study in \cite{biggio2011support} established that random perturbation of labels can degrade the accuracy of SVM classifiers by flipping about $40\%$ of the labels. However, this conclusion is limited to only binary classification problems in SVM models. Authors of \cite{biggio2011support} further showed that heuristics can improve the success of adversary in degrading the performance of the SVM. \cite{mozaffari2014systematic} proposed a similar approach, in which the attack consists of training a new ML model after poisoning each new sample to measure that sample's effect on the trained model's accuracy. However, this approach is computationally expensive. This is mainly due to the lack of knowledge about the relationship between the training set and test set \cite{papernot2018sok}. In case this relationship is known, then it is feasible to find near optimal samples for label manipulation \cite{xiao2012adversarial}.

While the body of work on label noise is vast, to the extent of our knowledge, no study has yet focused on the label noise that exists in an underlying manifold of the data such as clusters. It also has never been explored how such noise or intentional adversarial attack can be dealt with. This is a very significant type of attack since the adversary can focus on attacking minority groups/clusters hidden inside the dataset, thus, deteriorating the integrity and diversity of the model, while remaining undetected.

Much of the current research is focused on ad hoc data poisoning attacks that are designed for specific machine learning models. Furthermore, the majority of existing attacks in the literature are limited only to binary class problems and gradient-based algorithms. 


Therefore, to tackle the aforementioned issues, this paper proposes the first novel adversarial label-flipping algorithm that is not restricted to a specific machine learning model. The proposed attack technique is the first that is based on the multi-modality of the data. 
Afterwards, we propose a regularized SRNN model that can effectively find up to $80\%$ of malicious samples through novel regularization and pruning techniques. 
The optimization algorithm used here is based on the EM algorithm of \cite{tavallali2020interpretable} and is composed of three steps. The first step is the assignment step and consists of optimizing the regularization term and assigning the optimal label to each centroid. The update step consists of optimizing the centroids. We will show that in the update step by using a proper surrogate objective function, the malicious samples are automatically excluded. 
The contributions of this paper are as follows:
\begin{enumerate}
    \item We propose a novel label poisoning attack mechanism based on SRNN which is not limited to specific machine learning models and classification tasks. 
    \item We propose a defense technique comprised of a novel regularization method and pruning strategy for eliminating maliciously perturbed training samples.
    \item We experimentally demonstrate that the performance of the proposed attack mechanism is superior to similar attack/noisy techniques against a variety of well-known machine learning models and classification tasks.
    \item We experimentally establish the feasibility of our proposed defense technique, and demonstrate that it can detect up to 80\% of the malicious samples in a variety of known resilient machine learning models and classification tasks. 
\end{enumerate}
\section{Related Work}
\subsection{Data Poisoning Attacks}
Many papers have focused on manipulations in the feature space as the mode of the attack. In such settings, the adversary can corrupt both the labels and input features of the trainset. The literature in this area is generally focused on online learning setups and clustering as the model, where the adversarial strategy is to slowly displace the center of the cluster to induce misclassifications. Authors in \cite{kloft2010online} used such an approach in a trainset used for anomaly detection and demonstrated how the approach can gradually shift the decision boundaries of a centroid model. Another similar approach is introduced in \cite{biggio2014poisoning}.

Various other studies in the literature focus on gradient-based methods for selecting samples to poison in the context of SVM models \cite{biggio2012poisoning, mei2015using, xiao2012adversarial, xiao2015feature}. Manipulations of features in samples selected in this manner have been shown to incurr a devastating effect on reinforcement learning agents \cite{behzadan2017vulnerability}.

A data poisoning attack that uses labels as surface of attack can be modeled as noise in labels. Authors in \cite{frenay2014comprehensive} present a comprehensive survey of the label noise. In \cite{frenay2013classification}, inspired by \cite{schafer2002missing}, authors distinguish three types of label noises. First type is label Noise Completely at Random (NCAR). This type of noise happens at random regardless of the features or the true class. The second type is Noise at Random (NAR). This type of noise happen when some classes are more likely to be noisy. However, this type of noise is easy to detect by a human since a specific class is being constantly targeted. The third type of noise is Noise Not at Random (NNAR). In such settings, mislabeling of the samples is related to the features of the samples and the mislabeling can happen at the decision boundaries.

In general, there are three approaches to tackle the noisy label issue. First approach is to use label noise-robust machine learning models. It has been shown that most models are intrinsically robust to noise \cite{manwani2013noise}. However, some of the models are more robust than the others. For example, ensemble models are more robust than other machine learning models \cite{dietterich2000experimental, ratsch1999regularizing, khoshgoftaar2010supervised, ratsch2000robust, ratsch2001soft}. Also, in decision trees, the split criterion can improve robustness of the tree \cite{abellan2003building}. The second approach is to remove noisy samples using methods such as outlier detection \cite{beckman1983outlier, hodge2004survey} or anomaly detection \cite{chandola2009anomaly}. Some studies in the literature use heuristics to remove noisy samples. Examples of such heuristics can be found in reduced nearest neighbor \cite{gates1972reduced}. The work in\cite{gates1972reduced} proposes to remove samples whose removal does not affect misclassification of other samples. AdaBoost-based methods can also be used such as \cite{verbaeten2003ensemble, karmaker2006boosting}. The third approach consists of learning a model that is noise-tolerant such as \cite{gaba1992implications, joseph1995bayesian, swartz2004bayesian}. However, these methods are based on assumptions made for the probability distribution of the noise. Other practical examples of the third approach consist of applying clustering algorithms for detecting mislabled samples through a nearest neighbor approach \cite{younes2010evidential, bouveyron2009robust} or confidence of the model prediction \cite{denoeux2008k, denoeux2000neural}. Further, cross-validation itself can improve the robustness of the model against label noise \cite{hastie2009elements}.

As explored in this section, the state of the art in poisoning attacks is mostly comprised of attacks that focus either on specific machine learning models or are at most gradient based approaches. Therefore, they might not be applicable to general machine learning models, such as decision trees and forest models.

\subsection{Nearest Neighbor Methods}
One of the well-known conventional and the oldest models for classification is the Nearest Neighbor and Reduced Nearest Neighbor \cite{cover1967nearest, gates1972reduced}. Main reasons for their popularity is their simplicity and accuracy when combined with various distance metrics \cite{tran2008human, goldberger2005neighbourhood}. However, a major drawback of Nearest Neighbor models is their high inference complexity. Several studies have approached this issue by breaking down the inference complexity using data structures such as trees or cover/ball trees \cite{de2008orthogonal, beygelzimer2006cover, mathy2015boundary}. Such approaches yield high speed ups but they are vulnerable to curse of dimensionality, and suffer from high storage overhead and lack of interpretability.

Studies reported in \cite{cover1967nearest, gates1972reduced, angiulli2005fast} aim to solve the mentioned issues by reducing the trainset size. Also, alternative approaches include learning prototypes/centroids (synthetic samples) as the nearest neighbor model \cite{bermejo1999adaptive, frosst2019analyzing, kusner2014stochastic, gupta2017protonn, tavallali2020interpretable}. These models are interpretable in the sense that each prototype and its class label represent their corresponding data cluster and label, respectively. Therefore, each centroid represents a modality of the data. Similar to ensemble and decision trees, these models tend to partition the space into various disjoint regions.
\section{Proposed Method}
\subsection{Preliminaries}
\textbf{Synthetic Reduced Nearest Neighbor(SRNN):} A synthetic reduced nearest neighbor (also known as prototype nearest neighbor(PNN)) is a set of synthetic samples (centroids) that infer an input similar to a nearest neighbor model \cite{tavallali2020interpretable}. Assume a dataset of samples consisting of $N$ tuples of observation, $\{(x_i,y_i)\}_{i=1}^N$. $x_i \in \mathbb{R}^D$ is the $i^{th}$ sample's features and its target response is $y_i\in \{1,2,3,...,M\}$. The SRNN model consists of a set of centroids/prototypes $C=\{(c_j,\hat{y_j})\}_{j=1}^K$. At the test time, prediction of an input is the label of closest centroid to that input. The problem of learning a SRNN model is similar to that of a k-means problem except that each centroid is associated with a label that represents the prediction of the centroid. Mathematically speaking, the optimization of the SRNN model using 0-1 loss is:
\begin{equation}
	\label{eq:SRNN_obj}
	\begin{split}
	 \underset{\{(c_j,\hat{y_j})\}_1^K}{\text{min}}  \quad  &\sum_{i=1}^{N} L(y_i,NN(x_i))\\
	 \text{s.t.}  \quad & NN(x_i)=
	\hat{y_{j^*_i}} \\ & j^*_i= \underset{\{j\}_1^K}{\text{argmin}} \quad d(x_i-c_j)
	\end{split}
\end{equation}
where, $NN(.)$ represents the nearest neighbor function. $d(.)$ is a distance metric (chosen to be the Euclidean distance in this study). For simplicity, in the rest of this paper, we use $r_{ij}$ for $d(x_i-c_j)$. $j^*_i$ represents the index of closest centroid to $i^{th}$ sample. $L$ is a 0-1 loss function that outputs $0$ if both of its input arguments are equal, otherwise, it outputs $1$. The problem of \eqref{eq:SRNN_obj} is in fact the problem of finding a set of $K$ synthetic samples that achieve minimum error as a nearest neighbor model with $K$ samples. Intuitively, each centroid represents a modality of the data that the samples are generated from.
\subsection{Modality-based adversarial label flipping}

One common objective of adversarial data poisoning is to undermine the integrity of the trained model. This can be cast as weakening the performance of the trained model at the test time \cite{papernot2018sok}. In other words, the goal of the attacker is to increase the error of the trained model. In this paper, based on SRNN model, we propose the modality-based (or cluster-based) perturbation of the training labels. The problem of selecting optimal samples for proposed attack technique is as follows:
\begin{equation}
	\label{eq:Attack_prob}
	\begin{split}
	 \underset{\{I_i,y_i^p\}_1^N}{\text{max}}  \quad & \sum_{(x_i,y_i) \in S^{train}} L(y_i,NN^*(x_i))\\
	 s.t \quad & NN^*=\underset{\{(c_j,\hat{y_j})\}_1^K}{\text{argmin}} \sum_{x_i \in S^p} L(y_i,NN(x_i))\\
	 & S^p=\{(x_i,(y_i(I_i-1)+y_i^p(I_i))) \}\\
	 & \sum_{i=1}^N I_i \leq Cost
	\end{split}
\end{equation}
Where, $NN^*(.)$ represents the optimal model trained over the poisoned dataset $S^p$. The goal is to increase the error over the trainset with true labels $S^{train}$. $y_i^p$ represents the perturbed label and $I_i$ is an indicator variable that is either $0$ or $1$. $I_i$ is used for representing selected samples. The second constraint represents the poisoned dataset with perturbed labels. The third constraint shows the maximum allowed number of perturbations, given an attack budget ($Cost$).

The problem in \eqref{eq:Attack_prob} is a selection problem. The problem consists of selecting samples and changing their labels to another class such that the error of $NN^*$ is maximized over $S^{train}$. Intuitively, the trained model should perform poorly on the trainset with true labels in the hope that it performs poorly at the test time. Finding optimal solution of \eqref{eq:Attack_prob} is NP-hard \cite{biggio2011support, mozaffari2014systematic}. As a consequence, finding global optimum of \eqref{eq:Attack_prob} is computationally intractable and not practical. Therefore, we propose an efficient greedy algorithm that approximates the solution for problem \eqref{eq:Attack_prob} and also satisfies the constraints. Initially, there are no samples selected for poisoning. Therefore, the adversary has to train $NN^*$ over the trainset. Next step consists of selecting samples. This is done by fixing centroids of $NN^*$ while continuing optimization only over $\{I_i,y_i^p,\hat{y_j} \}$. At this step, the assignment of the train set samples are fixed and cannot be changed since the centroids are fixed. Therefore, the problem consists of changing labels of samples in $S^p$ such that the prediction labels of some of the centroids in $NN^*$ are changed, thus, increasing the error over $S^{train}$. This problem can be stated as follows:
\begin{equation}
	\label{eq:Attack_prob_clust}
	\begin{split}
	 \underset{\{I_i,y_i^p\}_1^N,\{y_j\}_{j=1}^{K}}{\text{max}}  \quad & \sum_{j=1}^{K}\sum_{(x_i,y_i) \in S^{train}_j} L(y_i,\hat{y_j})\\
	 s.t \quad & \hat{y_j}=mode(\{y_i\}_{ S^p_j}) \forall j=1...K\\
	 & S^p=\{(x_i,(y_i(I_i-1)+y_i^p(I_i))) \}\\
	 & \sum_{i=1}^N I_i \leq Cost
	\end{split}
\end{equation}
In \eqref{eq:Attack_prob_clust}, $S_j^p$ and $S_j^{train}$ represent the trainset samples assigned to $j^{th}$ centroid with perturbed and true labels, respectively. Note that the difference between \eqref{eq:Attack_prob_clust} and \eqref{eq:Attack_prob} is that the centroids of clusters are fixed, thus, assignments are fixed. $\hat{y_j}$ represents the $j^{th}$ centroid's optimal label. Therefore, in order to increase the objective function, $\hat{y_j}$ has to be changed through selecting samples for poisoning. $\hat{y_j}$ can only get changed if the majority label in $S_j^p$ gets changed. Many heuristics can be applied here such as changing some of the labels to the second most frequent label in the $S_j^p$. One intuitive approach is to randomly change the labels of half plus one of the samples in $S_j^{train}$ to the minority (i.e, least frequent) labels. This causes the information of the cluster (modality of the data) to become obscure and misleading. In other words, the attacker has to spend a specific cost to turn the $\hat{y_j}$ into a false label (minority label of the cluster). However, the attack has a limited budget and has to select clusters based on its budget. From a practical point of view, it can be seen as recognizing vulnerable groups in a data. All that remains is selecting clusters to attack. Assuming the cost of changing $j^{th}$ label is $Cost_j$, then the problem can be simplified as follows:
\begin{equation}
	\label{eq:Attack_prob_clust_sel}
	\begin{split}
	 \underset{\{I_j\}_{j=1}^K}{\text{max}}  \quad & \sum_{j=1}^K Cost_j I_j\\
	 s.t & \sum_{j=1}^K Cost_j I_j < Cost \\
	 & I_j=\{0,1\} \quad \forall j=1...K
	\end{split}
\end{equation}
Problem \eqref{eq:Attack_prob_clust_sel} can be solved greedily by selecting from clusters with lower cost until the first constraint is violated. Other algorithms such as dynamic programming or other greedy approaches can also solve the problem in \eqref{eq:Attack_prob_clust_sel}.

The proposed attack technique can have the capacity to increase the error over the trainset by two times the attack budget. Further details and proof are provided in the supplementary materials.

\textbf{Computational complexity:} The proposed attack technique first trains a SRNN that takes ${\cal O}(NDK)$ \cite{tavallali2020interpretable}. Then the attack technique perturbs labels. The perturbation step takes ${\cal O}(N)$ which is embarrassingly fast. Compared to other label flipping attacks \cite{mei2015using}, this attack is computationally very cheap and feasible.

Intuitively, the proposed attack targets the minority groups (smaller clusters). In our experiments, the proposed attack technique achieved significantly higher test error for various machine learning models compared to other label-flipping attacks and no attack. In other words, for an adversary, it would be more efficient in terms of budget to target vulnerable smaller clusters of a dataset and undermine the integrity of any machine learning model significantly. 

\subsection{Defense via Regularized Synthetic Reduced Nearest Neighbor (RSRNN)}

We introduce a new parameter named \emph{confidence range} ${r_j}_{j=1}^K$, as well as two new regularization terms for SRNN as the defense technique. Any sample beyond the confidence range, $r_{ij^*}>r_{j_i^*}$, is considered to be malicious. Note that $j_i^*$ represents the index of closest centroid to sample $i$. First term consists of regularizing the confidence range for each centroid. Second regularization term consists of adding cost complexity function over the SRNN structure. The cost function facilitates the pruning of centroids and further recognizes the attacked modalities of the data. The optimization problem of training RSRNN is given in \eqref{eq:RSRNN_obj}:
\begin{equation}
	\label{eq:RSRNN_obj}
	\begin{split}
	& \underset{\{(c_j,\hat{y_j},r_j)\}_1^K}{\text{min}}  \quad \sum_{i=1}^{N} L(y_i,NN(x_i)) + \lambda \sum_{j=1}^K r_j +\alpha \sum_{j=1}^K cost(S_j))\\
	& \text{s.t.} \quad NN(x_i)=\begin{cases}
	\hat{y_{j^*_i}} \quad r_{ij^*}<r_{j^*} \\ Malicious \quad \text{otherwise}
	\end{cases}
	\end{split}
\end{equation}

where, $\lambda$ is the penalty coefficient of $r_j$, $\alpha$ is the cost complexity coefficient, and $cost(.)$ represents the cost function of $j^{th}$ centroid. $S_j$ consists of the samples whose closest centroid from $C$ is $j^{th}$ centroid ($S_j=\{x_i | j=j^*_i \quad \forall i=1,2,...,N\}$). To solve the optimization problem in \eqref{eq:RSRNN_obj}, we follow the same EM algorithm as in \cite{tavallali2020interpretable} which was inspired by K-means algorithm \cite{lloyd1982least}. The optimization approach consists of three steps: the assignment step, the update/centroid step, and the pruning step based on a validation set. 

\subsubsection{Assignment Step:} This step has two parts. First part consists of assigning the train samples to their closest centroid. This is essentially calculating $S_j$ for $j=1...K$. Second part is finding optimal values to $\{\hat{y}_j\}_{j=1}^K$. The problem for $j^{th}$ centroid can be written as
\begin{equation}
	\label{eq:RSRNN_assign}
	\begin{split}
	\underset{\hat{y_j}}{\text{min}}  \quad \sum_{x_i \in S_j} L(y_i,\hat{y_j}) U(r_j-r_{ij})
	\end{split}
\end{equation}
Where $U(.)$ is a step function and is used to impose the constraint in \eqref{eq:RSRNN_obj} for $Malicious$ samples. The problem in \eqref{eq:RSRNN_assign} is that of finding the best constant predictor over the set of $S_j$. Its optimum is the most frequent label of samples in $S_j$ ($\hat{y}_j^{*}=\text{mode}(\{y_i|\forall x_i \in S_j\})$). 

\subsubsection{Update step:} This step consists of optimizing each centroid while the centroid labels are kept constant. In this step, first, $\{r_j\}_{j=1}^{K}$ are fixed and $\{c_j\}_{j=1}^{K}$ are optimized, and then $\{c_j\}_{j=1}^{K}$ are fixed and ${r_j}_{j=1}^{K}$ are optimized. It is shown that the problem for optimizing each centroid is a binary classification task. Further, the centroid problem is NP-hard, hence, it will be approximated using a novel surrogate loss function. The optimization problem of this step for $j^{th}$ centroid over $c_j$ is as follows:
\begin{equation}
	\label{eq:RSRNN_cent}
	\begin{split}
	\underset{c_j}{\text{min}}  \quad  \sum_{x_i \in S_j} & (L(y_i,\hat{y}_j) U(r_j-r_{ij})+U(r_{ij}-r_j)) +\\ & \sum_{x_i \in S_j^c}L(y_i,NN_{C^\prime}(x_i))
	\end{split}
\end{equation}
where $S_j^c$ represents the complement set of $S_j$ (rest of samples that are not assigned to $S_j$). $NN_{C^\prime}$ represents nearest neighbor function over the set of centroids without $j^{th}$ centroid ($C^\prime=C-(c_j,\hat{y}_j)$). Here, the assignment of samples are not fixed and a sample might eventually get assigned to $S_j$ or $S_j^c$ depending on the position of $c_j$ in the feature space. This is in fact a binary classification problem because each sample has to get assigned to $S_j$ or $S_j^c$. However, prior to the optimization, the optimal assignment of each sample that contributes to decreasing the objective function in \eqref{eq:RSRNN_cent} is not known. From the perspective of EM algorithm, this is a latent variable because it is not clear whether the sample is generated by the distribution of $j^{th}$ centroid or the rest of centroids. The optimal assignment of each sample can be extracted by evaluating the correctness of prediction if the sample assigned to $S_j$ or $S_j^c$. For example a sample might get classified correctly only if the sample is assigned to $S_j$ because the label of the sample matches the label of $c_j$. On the other hand, the same sample might be classified incorrectly, or categorized as malicious if assigned to $S_j^c$. Therefore, this sample has to be assigned to $S_j$ during the optimization to contribute to decreasing the objective function of \eqref{eq:RSRNN_cent} (potentially, $c_j$ has to be closer to the sample than the rest of centroids). From the EM algorithm point of view, the calculation of outcome of assigning a sample to $S_j$ or $S_j^c$ is in fact calculating the posterior probability of the sample being generated by the distribution of $P(y_i|x_i \in S_j)$ or $P(y_i| x_i \in S_j^c)$. In general $8$ scenarios can happen for each sample and the optimal assignment of a sample can be found based on these scenarios. 

All $8$ scenarios are shown in table \ref{tab:assign}. The scenarios are based on three factors. The factors are correctness of classification if the sample is assigned to $S_j$, correctness of classification if the sample is assigned to $S_j^c$, and sample categorized as malicious if assigned to $S_j^c$. Therefore, in total there are $8$ scenarios. Note that the case of a sample being recognized as malicious by $S_j$ is not considered here as a factor. This is because during the optimization, the $c_j$ can move and a sample might fall in the $r_j$ ball or may remain outside of it. However, this factor is incorporated inside the surrogate objective function in the following part of this section. In table
\ref{tab:assign}, $j^{\prime *}_i$ represents the index of closest centroid to sample $i$ from the set $C^{\prime}$. From the table \ref{tab:assign}, the samples that belong to scenarios 1-3 cannot affect the objective function of \eqref{eq:RSRNN_cent}, since the samples will be classified incorrectly or are considered as malicious regardless of any set they are assigned to. The samples that belong to scenario 4 are the samples that have to be assigned to $S_j^c$ because if they are assigned to $S_j$ then, they are classified incorrectly, hence, they increase the objective function of \eqref{eq:RSRNN_cent}. We call such set as $S_j^{c*}$. Samples of scenarios 5-7 have to be assigned to $S_j$ to decrease the objective function of \eqref{eq:RSRNN_cent}. We denote such set of samples as $S_j^*$. Samples of scenario 8 are always classified correctly; thus, they are ineffective in the objective function of \eqref{eq:RSRNN_cent}. Intuitively, $c_j$ has to be replaced in the space such that $c_j$ is closest centroid to samples of $S_j^*$. Also, preferably, $c_j$ should be at a distance of $r_j$ from all samples of $S_j^*$. At the same time $c_j$ should remain further away from samples of $S_j^{c*}$ with at least a distance of $r_{ij_i^{\prime *}}$. 
\begin{table}
\caption{Update step scenarios. All scenarios for assigning a sample to $S_j$ or $S_j^{c}$.}
\begin{center}
\begin{small}
\begin{sc}
\begin{tabular}{@{}ccccccc@{}}   
  \# & $L(y_i,\hat{y}_j)$ & $L(y_i,\hat{y_{j^{\prime *}_i}})$ & $NN_{C^\prime}==Mal.$ & assign\\
    \hline
1 & $1$ & $1$ & $1$ & x\\
\hline
2 & $1$ & $1$ & $0$ & x\\
\hline
3 & $1$ & $0$ & $1$ & x\\
\hline
4 & $1$ & $0$ & $0$ & $S_j^{c*}$\\
\hline
5 & $0$ & $1$ & $1$ & $S_j^*$\\
\hline
6 & $0$ & $1$ & $0$ & $S_j^*$\\
\hline
7 & $0$ & $0$ & $1$ & $S_j^*$\\
\hline
8 & $0$ & $0$ & $0$ & x\\
\hline
\end{tabular}
\end{sc}
\end{small}
\end{center}
\vskip -0.15in
\label{tab:assign}
\end{table}
Using sets of $S_j^*$ and $S_j^{c*}$ and the given intuition in previous paragraph, the problem in \eqref{eq:RSRNN_cent} can be rewritten as follows
\begin{equation}
	\label{eq:RSRNN_cent1}
	\begin{split}
	\underset{c_j}{\text{min}}  \quad  \sum_{x_i \in S_j^*} & (U(r_{ij}-r_{ij_i^{\prime *}}) \lor U(r_{ij}-r_j)) +\\ & \sum_{x_i \in S_j^{c*}} U(r_{ij_i^{\prime *}}-r_{ij})
	\end{split}
\end{equation}
where $\lor$ is logical $OR$ operator. Finding the global optimum of \eqref{eq:RSRNN_cent1} is NP-hard and cannot be solved directly \cite{biggio2011support, mozaffari2014systematic}. Therefore, a surrogate objective function will be used to approximate the solution to \eqref{eq:RSRNN_cent1}. Intuitively, the interest of problem \eqref{eq:RSRNN_cent1} is to replace the $c_j$ close to the samples of $S_j^*$. At the same, time we are interested in nullifying the effect of possible $Malicious$ samples of $S_j^*$ that are outside of the $r_j$ automatically. Further, $c_j$ has to stay outside the ball of $r_{ij_i^{\prime *}}$ for the samples in $S_j^{c*}$. Based on the given intuition, the solution to problem \eqref{eq:RSRNN_cent1} is approximated by solving
\begin{equation}
	\label{eq:RSRNN_surrogate}
	\begin{split}
	 c_j^*(\mu)=\underset{c_j}{\text{argmin}} \quad & \sum_{x_i \in S_j^*}min(r_{ij},r_j)+\\ &\sum_{x_i \in S_j^{c*}}relu(\mu r_{ij_i^{\prime *}}-r_{ij}) 
	\end{split}
\end{equation}
where $min(.)$ returns the minimum of its input arguments. $relu(.)$ is a rectified linear unit. $\mu$ is a coefficient and acts like a slack variable \cite{tavallali2020interpretable} that is increased from $0$ to $1$. For every value of $\mu$ along the path, the problem of \eqref{eq:RSRNN_surrogate} can be solved using stochastic gradient descent. Along this path, the $c_j^*$ that returns the smallest loss for \eqref{eq:RSRNN_cent1} is selected. This is similar to solving an SVM for a linear binary classifier \cite{tavallali2020interpretable} while changing the slack variable. 

Finally, $\{r_j\}_{j=1}^K$ have to be optimized while fixing the rest of parameters. The problem of optimizing $r_j$ for a centroid is
\begin{equation}
	\label{eq:RSRNN_r_j}
	\begin{split}
	 \underset{r_j}{\text{min}}  \quad \sum_{x_i \in S_j} L(y_i,\hat{y}_j)U(r_j-r_{ij})+U(r_{ij}-r_j) + \lambda r_j\\
	\end{split}
\end{equation}
From problem \eqref{eq:RSRNN_r_j}, it can be observed that objective function is piecewise-constant over $r_j$ and objective function has a jump at every $r_{ij}$. This problem can be solved efficiently in ${\cal O}(|S_j|log(|S_j|))$. This is done by sorting ${r_{ij}}$ for every $x_i \in S_j$ and evaluating the objective function of \eqref{eq:RSRNN_r_j} for every $r_j=r_{ij}$ through an incremental algorithm. In total, finding optimum $r_j$ for all centroids is ${\cal O}(Nlog(N))$.

\textbf{Excluding malicious samples on the fly:} Note that the objective function in \eqref{eq:RSRNN_surrogate} has two terms that contain $c_j$. The first term is $min(r_{ij},r_j)$ that encourages samples of set $S_j^*$ to be close to $c_j$ but any sample that falls outside of the ball of $r_j$ will become ineffective in optimization of \eqref{eq:RSRNN_surrogate} because the gradient with respect to that sample becomes $0$. This is interesting since the optimization algorithm is in fact excluding samples that are suspicious to be malicious on-the-fly. Further, other malicious samples such as scenarios 1 and 3 are automatically not considered in the optimization.

In total, computational complexity of this step is ${\cal O} (NDK+NlogN)$ \cite{tavallali2020interpretable}.

Finally, by iterating over update step and assignment step, the first two terms of the objective function in \eqref{eq:RSRNN_obj} decrease over the trainset until no further improvement can happen over the parameters (proof in the supplementary material). It is noteworthy that the surrogate objective function for update step also decreases the $\{cost(S_j)\}_{j=1}^{K}$ since it tends to create pure sets for each $S_j$. 

\subsubsection{Pruning step:}
After optimizing the first two terms of \eqref{eq:RSRNN_obj}, the third term needs further attention. Similar to pruning for decision trees, the cost complexity function encourages to remove (prune) the impure centroids based on their assignment set $S_j$ and the coefficient of $\alpha$. In this paper, Gini-index is used as the cost function. However, the dataset's integrity can be under question, meaning that some of the samples/modes are malicious. Here, the intuition and aim of this step is not only to prune the centroids but also remove the malicious modalities of the trainset. Also note that any malicious sample is considered as a loss, thus, only centroids and modes are considered as malicious if removal of them decreases the error over the validation set. Therefore, for this step, all other parameters including assignment of samples are kept fixed. Further, a clean validation set is used to prune centroids and samples of malicious modes. The clean validation set is a set whose integrity is assured and in practice a tiny set for validation set such as $5-10$ percent of the whole trainset would be sufficient. Further details of pruning is presented in the supplementary material.

Finally, after removing the malicious samples and centroids, it is possible to either restart training using original SRNN over the cleaned data, or continue training with the remaining centroids and select the final model based on the error over the validation set.

\section{Experimental Results}
In this section, experimental results of the proposed attack and defense techniques are presented and compared with similar techniques. The attack techniques which are applicable to various machine learning models are used for comparison in this study. In terms of defense technique, models that are known to be resilient against the attack techniques are used for comparison. Further, the merits of the proposed attack and defense techniques are studied and presented. In the next subsection, the effect of proposed attack is explored and in the second subsection, the performance of the RSRNN is compared with several other resilient models. 

\subsection{Attack Experiments}
In this subsection, the performance of the modality-based adversarial label filliping (SRNN-att) is presented. The SRNN-att is compared with several other label filliping attacks that are generic and can affect all machine learning models. These label filliping techniques are gathered from the literature of label noise because they are found to be the most similar techniques that exist on the topic of this paper, as they are not designed for a specific machine learning model and can potentially deteriorate the performance of any machine learning model. 

Accordingly, in our experiments, SRNN-att is compared with NCAR, NNAR and no attack. NCAR works by changing the labels at random. NNAR is a practical approach that aims at changing labels at the margins of decision boundaries. Here, to apply this type of attack, a state of the art model is trained and then labels of the samples that are predicted with low confidence by the model are targeted. Labels of such samples are changed to the second most probable label, thus, intuitively, changing the decision boundaries. In the experiments, NNAR and NNAR-ADA represent the NNAR attack using a forest model and AdaBoost model, respectively. The size of these models is selected such that they have the same size as that of the SRNN model used for SRNN-att.

\begin{figure}
\centering
    \begin{tabular}{c@{}c@{}c@{}}
    test error for $5\%$ attack & test error for $10\%$ attack
    \\
    \includegraphics*[width=0.5\linewidth]{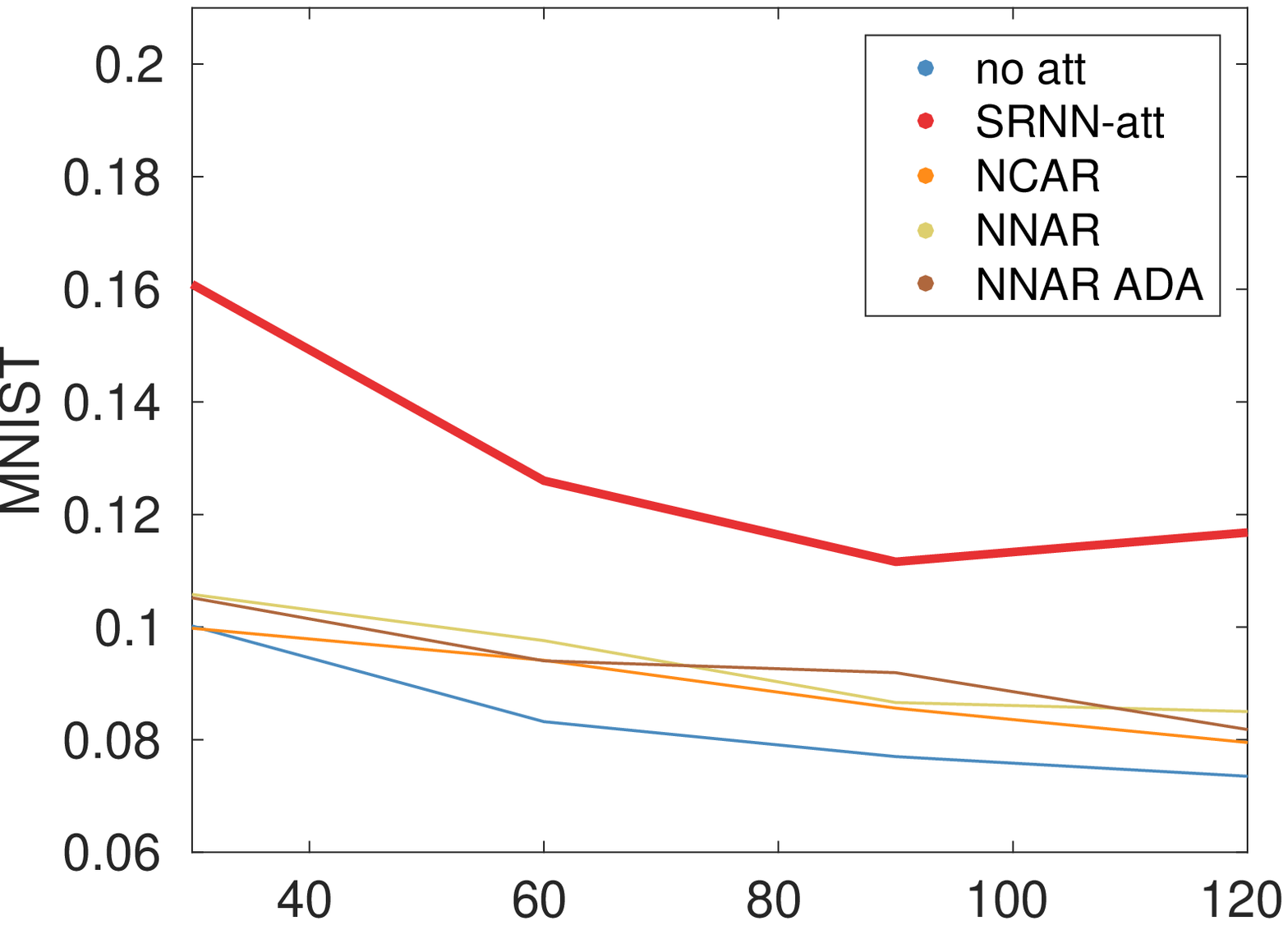}&
    \includegraphics*[width=0.5\linewidth]{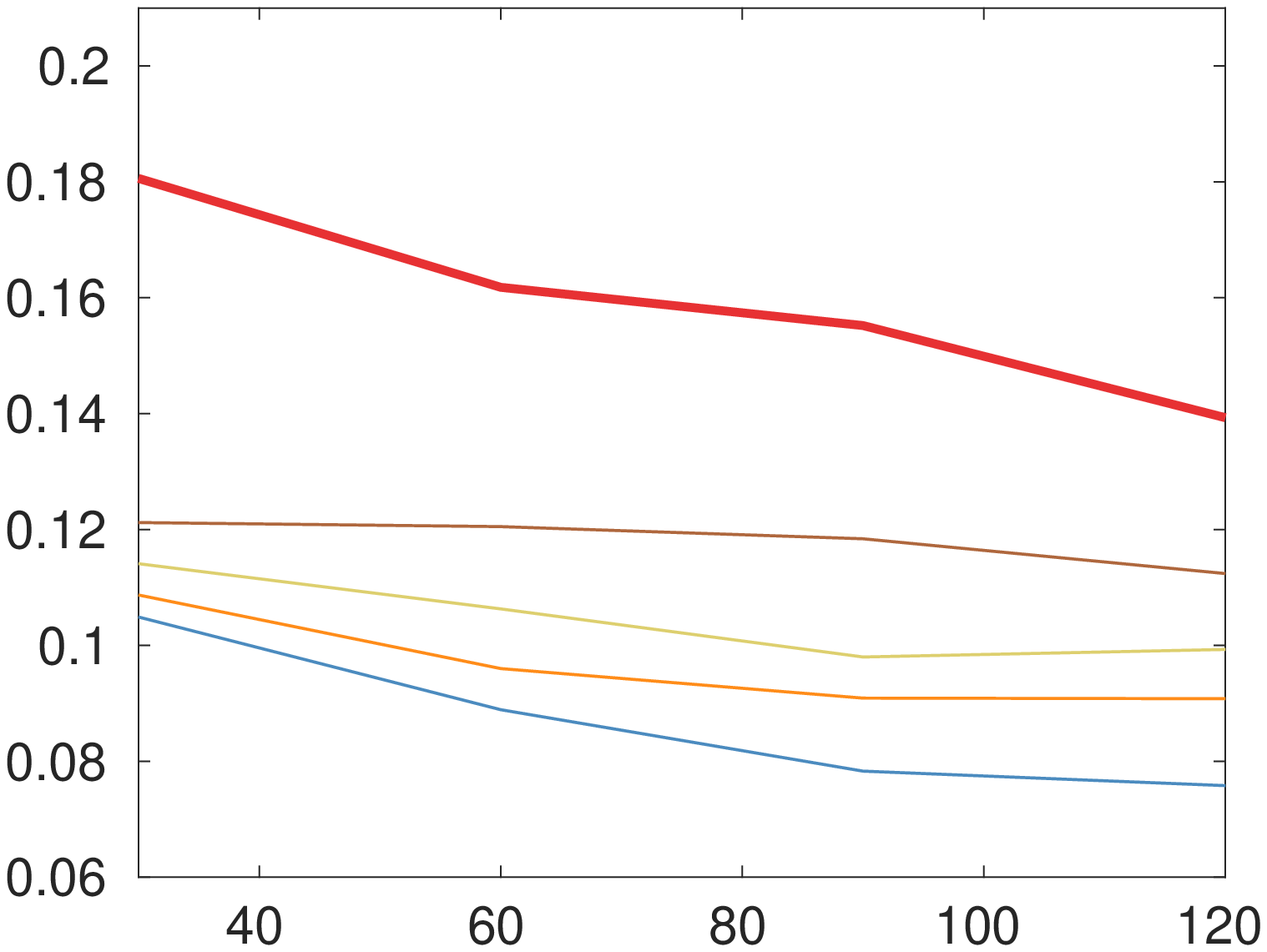}
    \\
    \includegraphics*[width=0.5\linewidth]{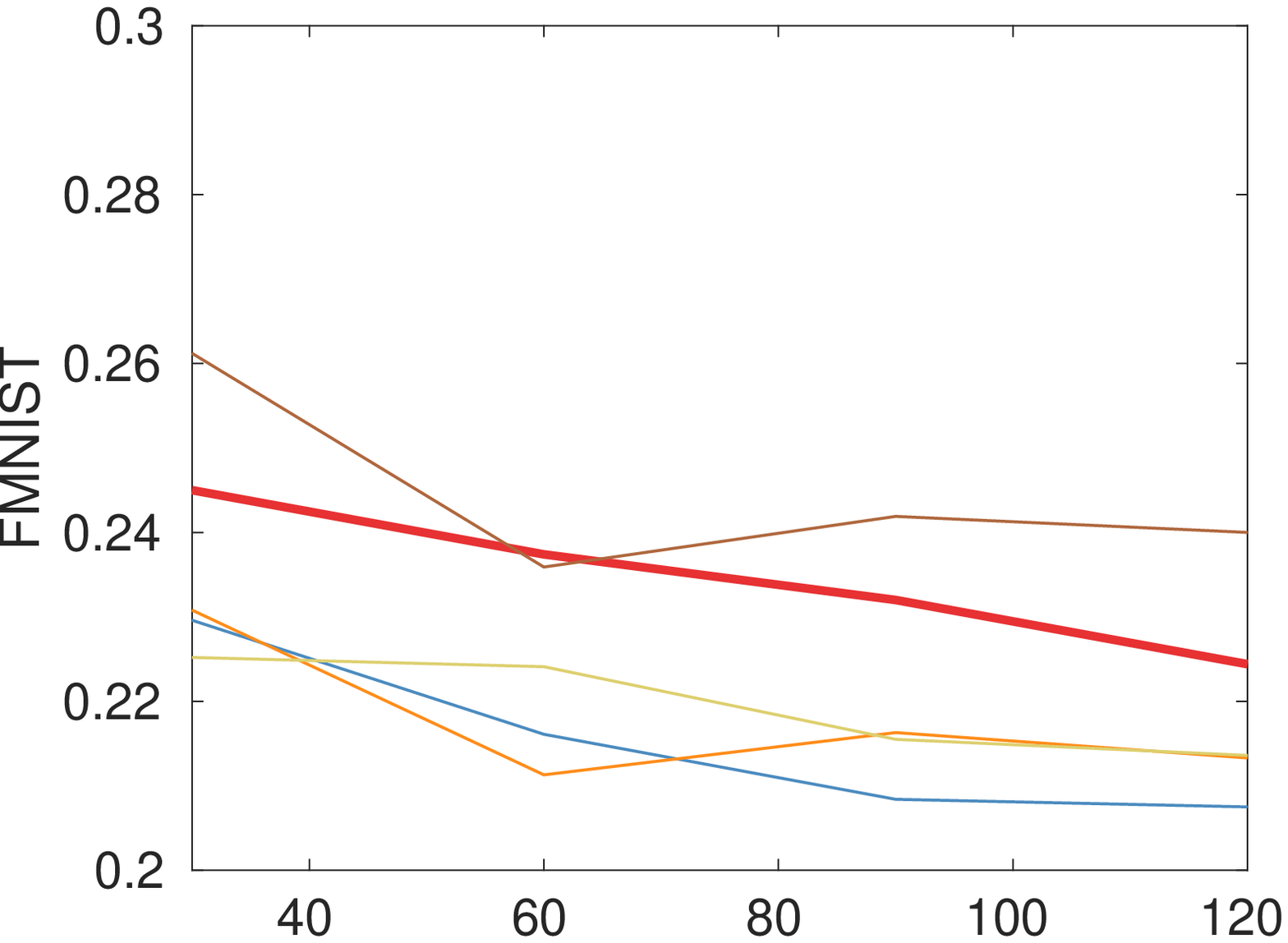}&
    \includegraphics*[width=0.5\linewidth]{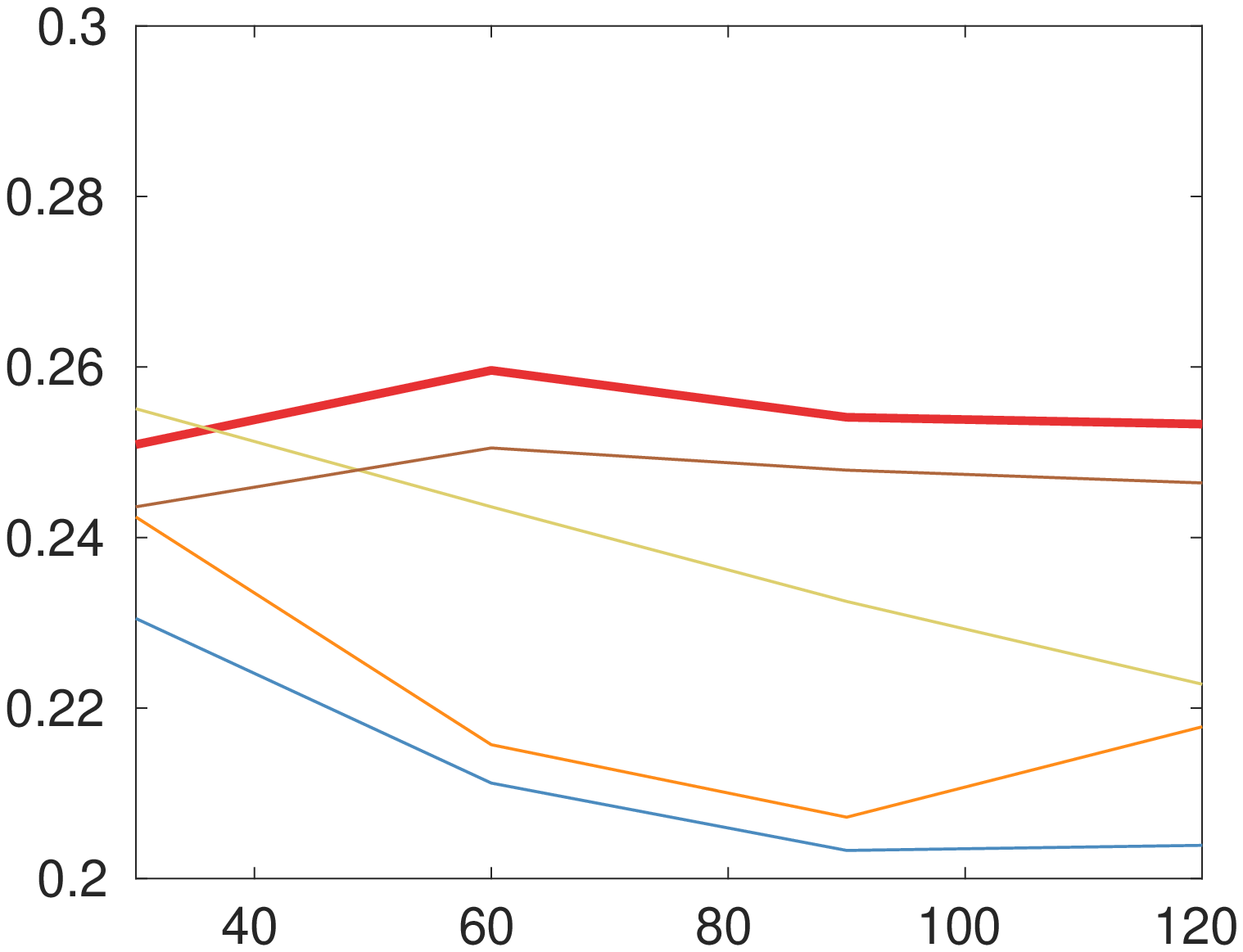}
    \\
    \includegraphics*[width=0.5\linewidth]{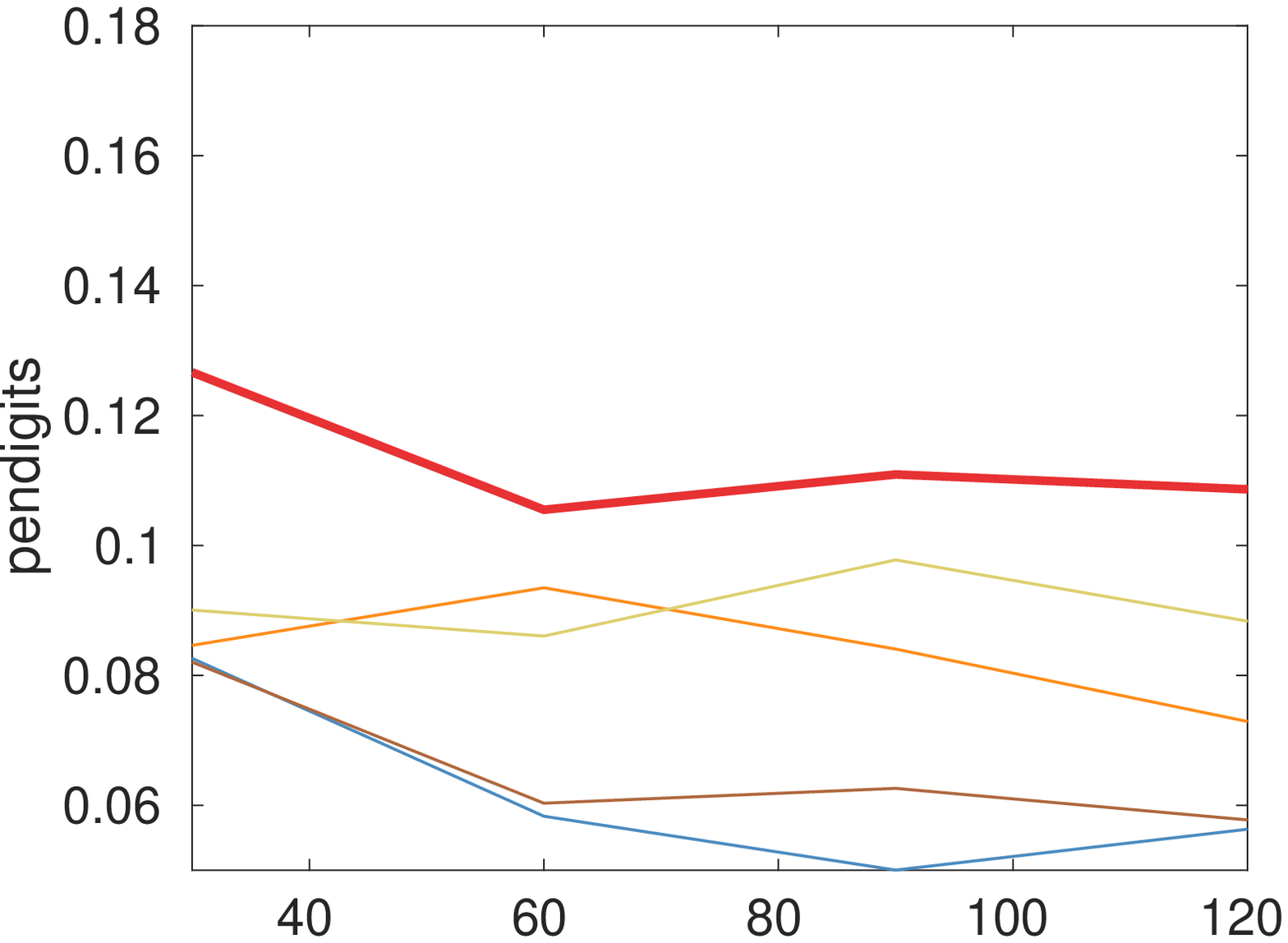}&
    \includegraphics*[width=0.5\linewidth]{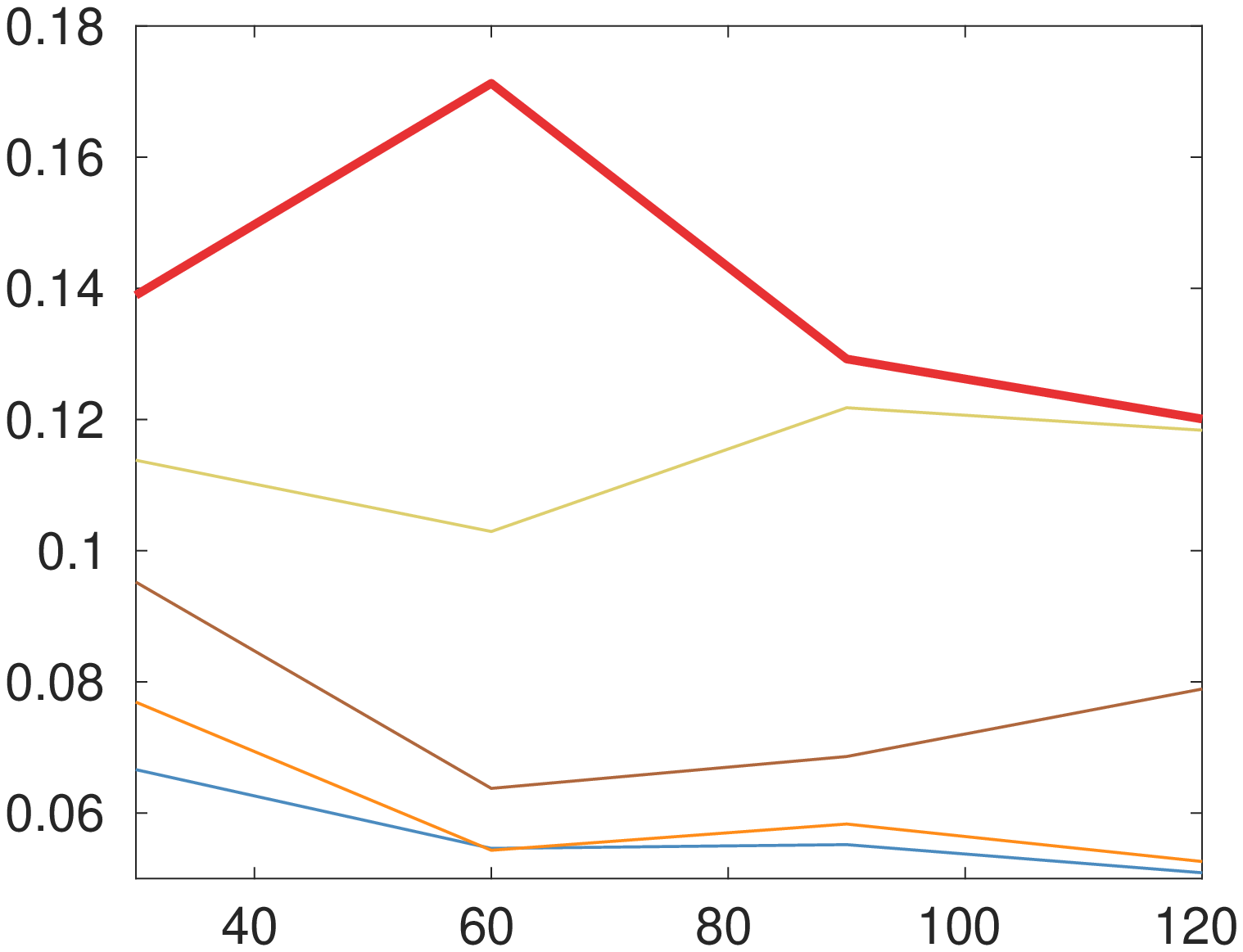}
    \\
    \includegraphics*[width=0.5\linewidth]{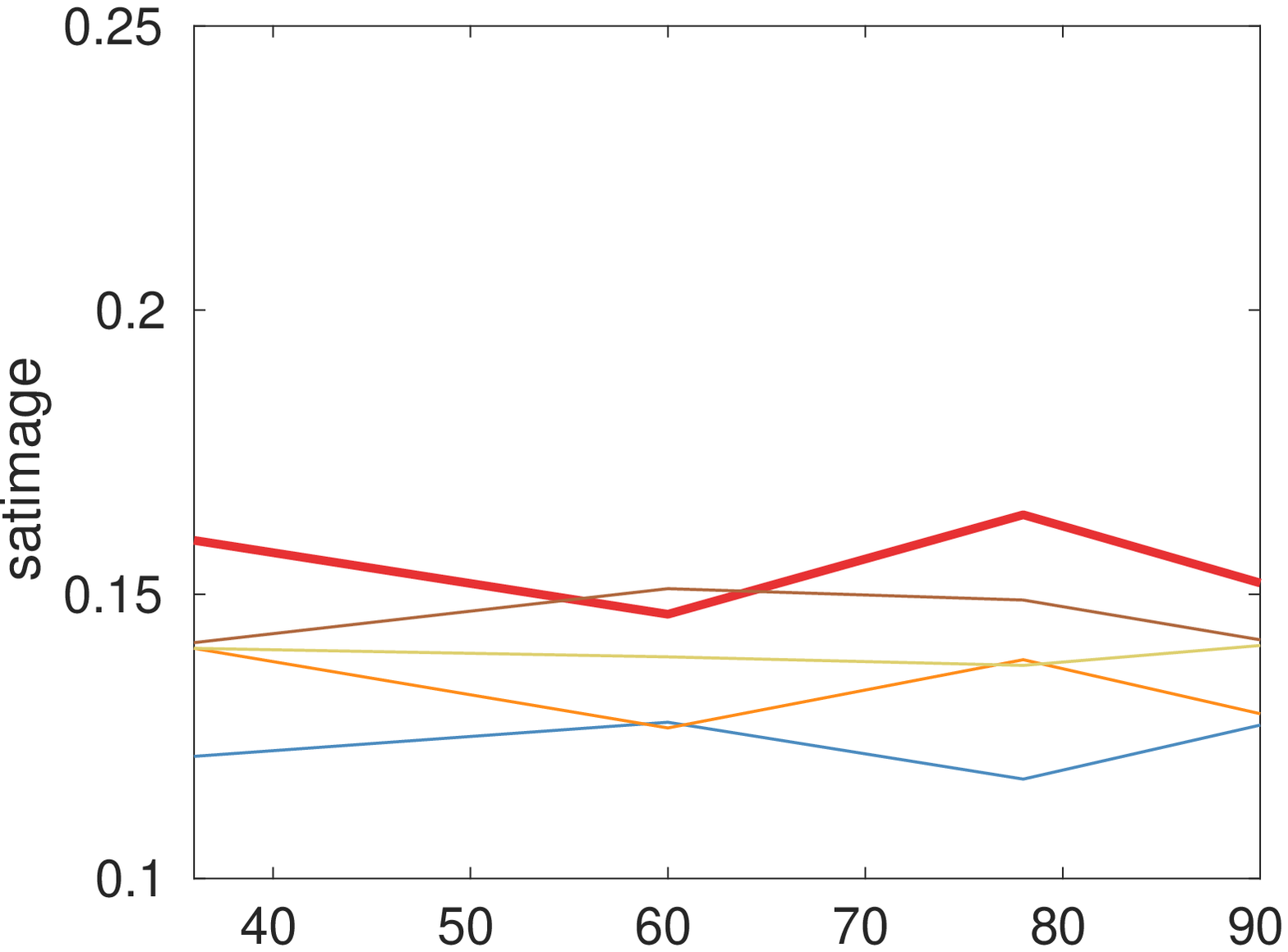}&
    \includegraphics*[width=0.5\linewidth]{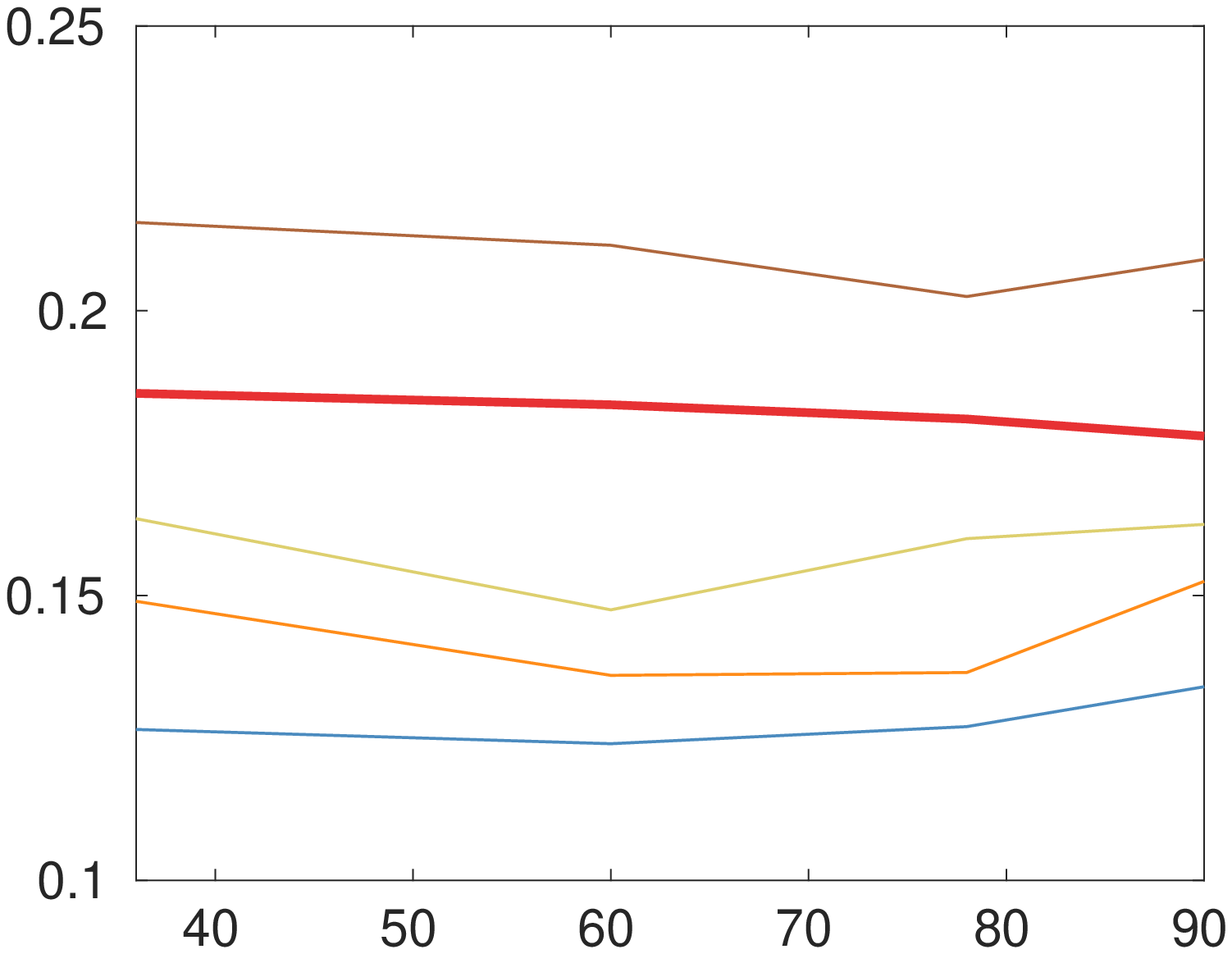}
    \\
  \end{tabular}
  \caption{Different attack techniques are presented for each dataset. Vertical axis presents error ratio over the testset and horizontal axis presents number of base models.}
\label{fig:attacks}
\end{figure}

In order to evaluate performance of the attack techniques, several machine learning models using the poisoned dataset are trained. In this paper, SRNN, RBF-SVM, OC1, CART, K-means, AdaBoost, Nearest Neighbor, Random Nearest Neighbor are used. The details of these models are explained in the supplementary material.

In order to obtain a fair comparison, for each dataset, the lowest error rate of each attack model among all the trained machine learning models is presented in figure \ref{fig:attacks}. In figure \ref{fig:attacks}, vertical axis shows error ratio over test set and horizontal axis represents number of centroids, leaf nodes, RBFs and trees for SRNN, tree, RBF-SVM and forest models, respectively. In fact, all the models are presented as a kind of ensemble of smaller models, thus, it is possible to compare them based on the number of terms they contain. Figure \ref{fig:attacks} shows that in all attack techniques, the SRNN-att consistently achieved the highest test error with a significant margin, or performed at least as good as the next best attack when compared with other attack techniques. Additionally, it is noteworthy that in a few instances one of the other techniques was able to increase the test error significantly but none of the other techniques was able to consistently achieve a high rise in the test error.

In figure \ref{fig:attacks-all}, results for best performing models are shown for the MNIST dataset. As can be observed, SRNN-att was able to make the models underperform more than other attacks. Complete set of experiments are presented in the supplementary materials.

\begin{figure}
\centering
    \begin{tabular}{c@{}c@{}c@{}}
    test error for $5\%$ attack & test error for $10\%$ attack
    \\
    \includegraphics*[width=0.5\linewidth]{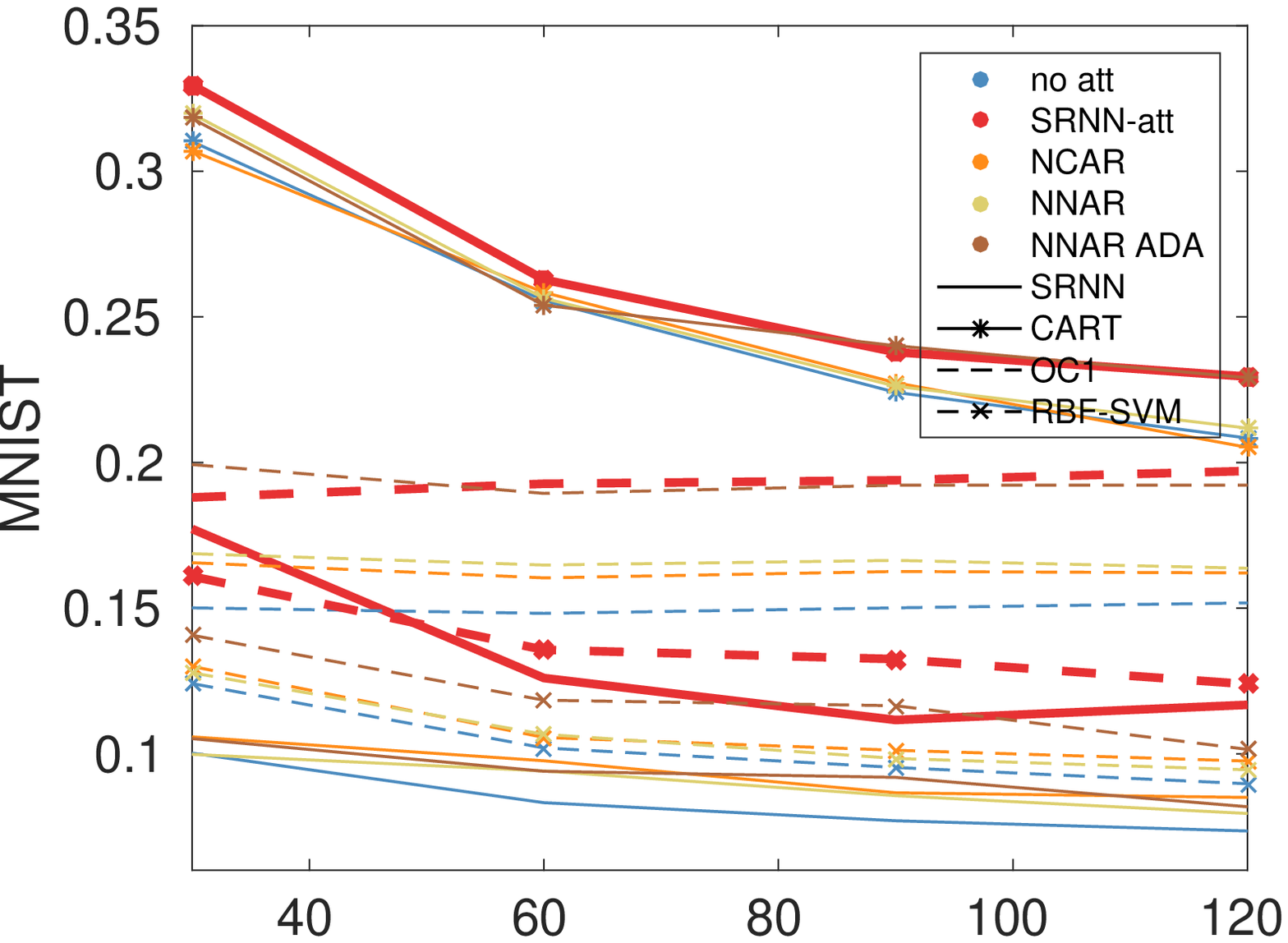}&
    \includegraphics*[width=0.5\linewidth]{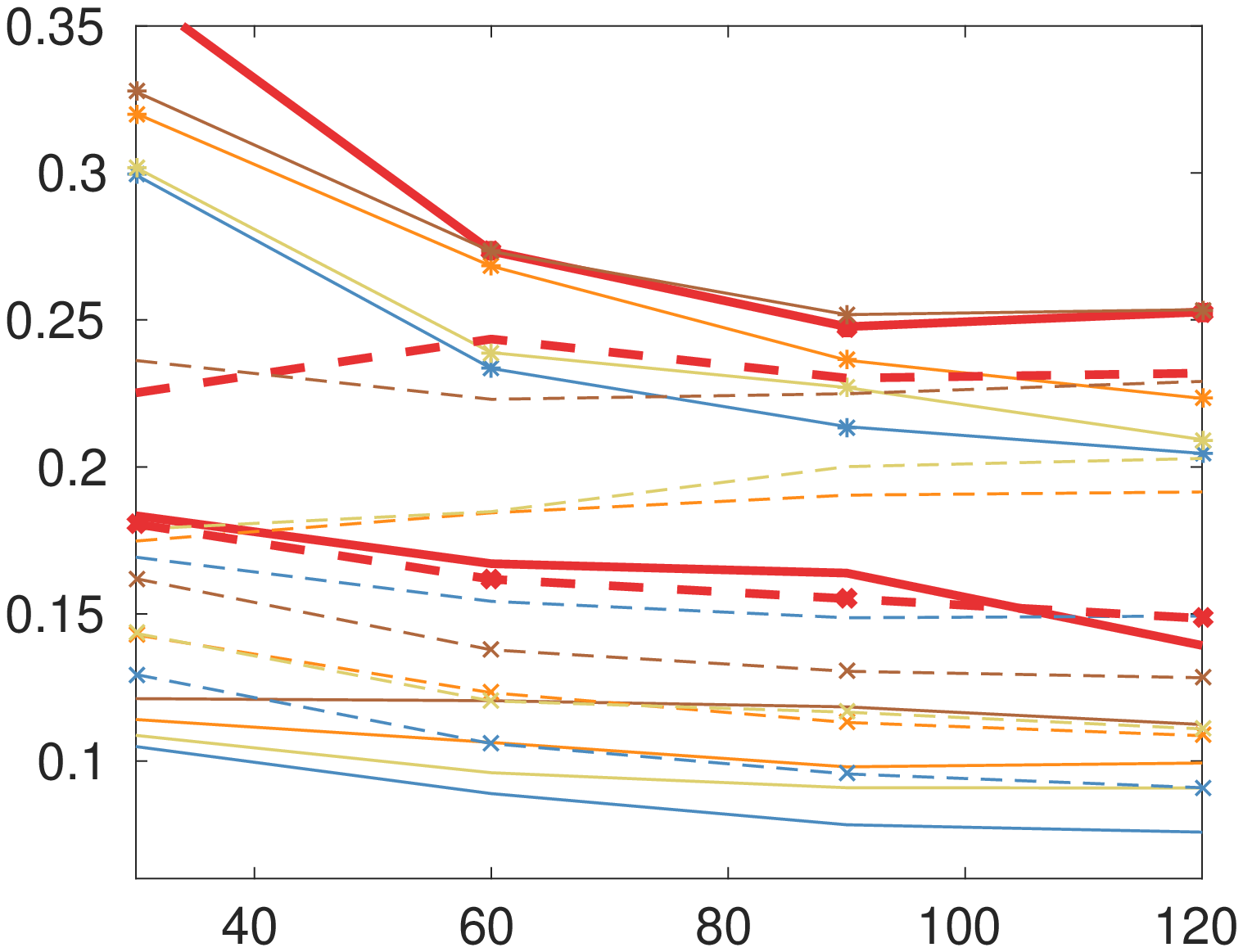}
    \\
  \end{tabular}
  \caption{Different attack techniques and models are presented for MNIST dataset. Colors and curve shapes present the attack techniques and models, respectively. model Vertical axis presents error ratio over the testset and horizontal axis presents number of base models.}
\label{fig:attacks-all}
\end{figure}


\subsection{Defense Experiments}
For the evaluation of the proposed defense against the proposed attack technique, the proposed RSRNN approach is compared with the other state of the art models that are known to be resilient against label flipping issues. Other models consist of trees, RBF-SVM, AdaBoost, K-means. For trees, tree with two split criteria of Gini index and Cross-entropy were used because they are known to be resilient against noisy or adversarial samples \cite{abellan2003building}. Another approach against adversarial/noisy label flipping attack techniques is using validationset for model selection \cite{papernot2018sok,frenay2014comprehensive}. This validation set is used for selecting the best model that has smallest error over the validation set\cite{hastie2009elements}. The validation set is used for selecting parameters of K-means and RBF-SVM in the experiments of this subsection, thus, making the models more resilient against the label noise. Finally, it is also known that ensemble models such as boosting algorithms are more robust against noise in labels \cite{frenay2014comprehensive}. Therefore, AdaBoost is added to the experiments of this subsection. The details of training each model is presented in the supplementary materials.

\begin{figure}
\centering
    \begin{tabular}{c@{}c@{}c@{}}
    $5\%$ attack & $10\%$ attack
    \\
    \includegraphics*[width=0.5\linewidth]{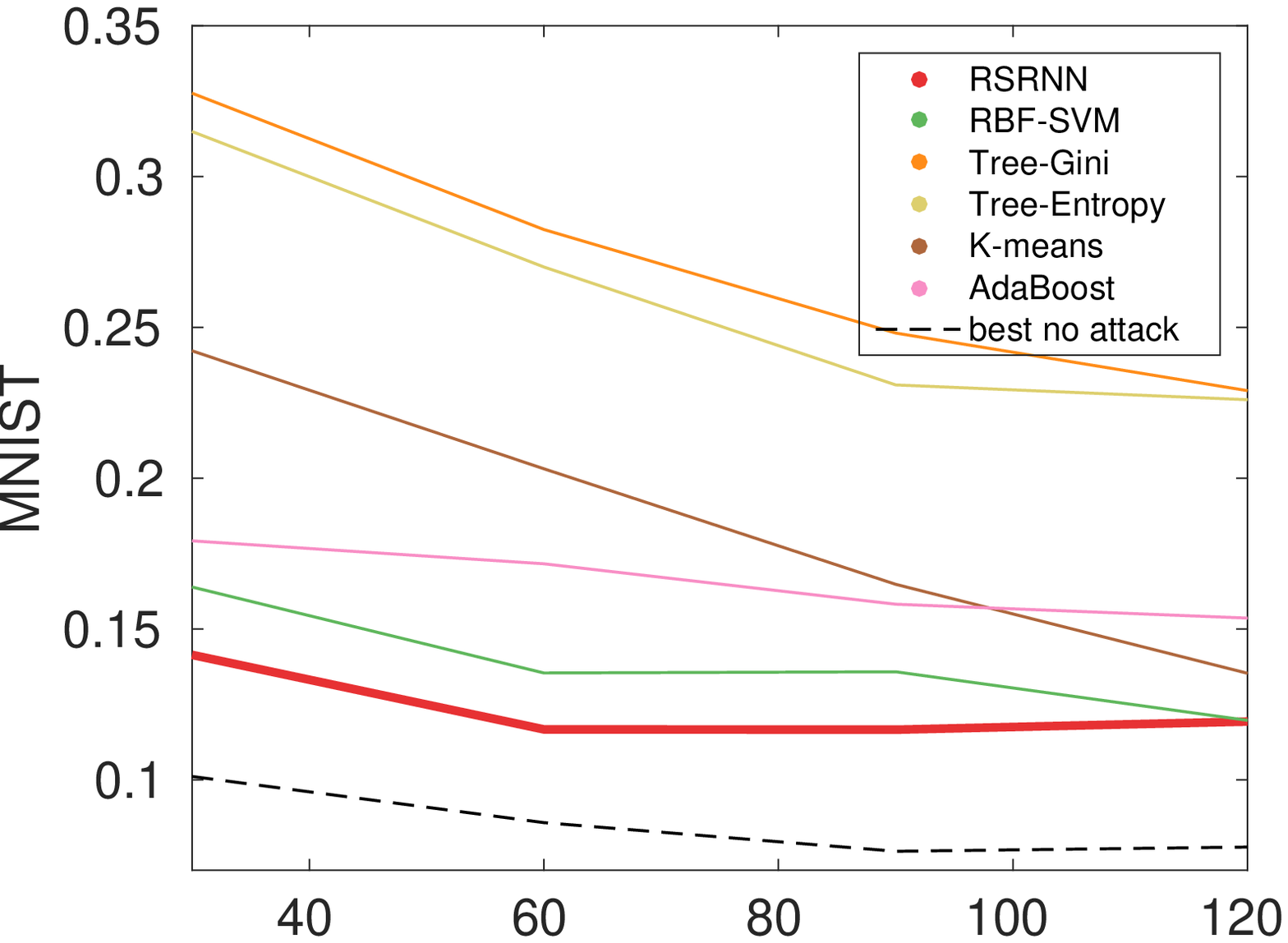}&
    \includegraphics*[width=0.5\linewidth]{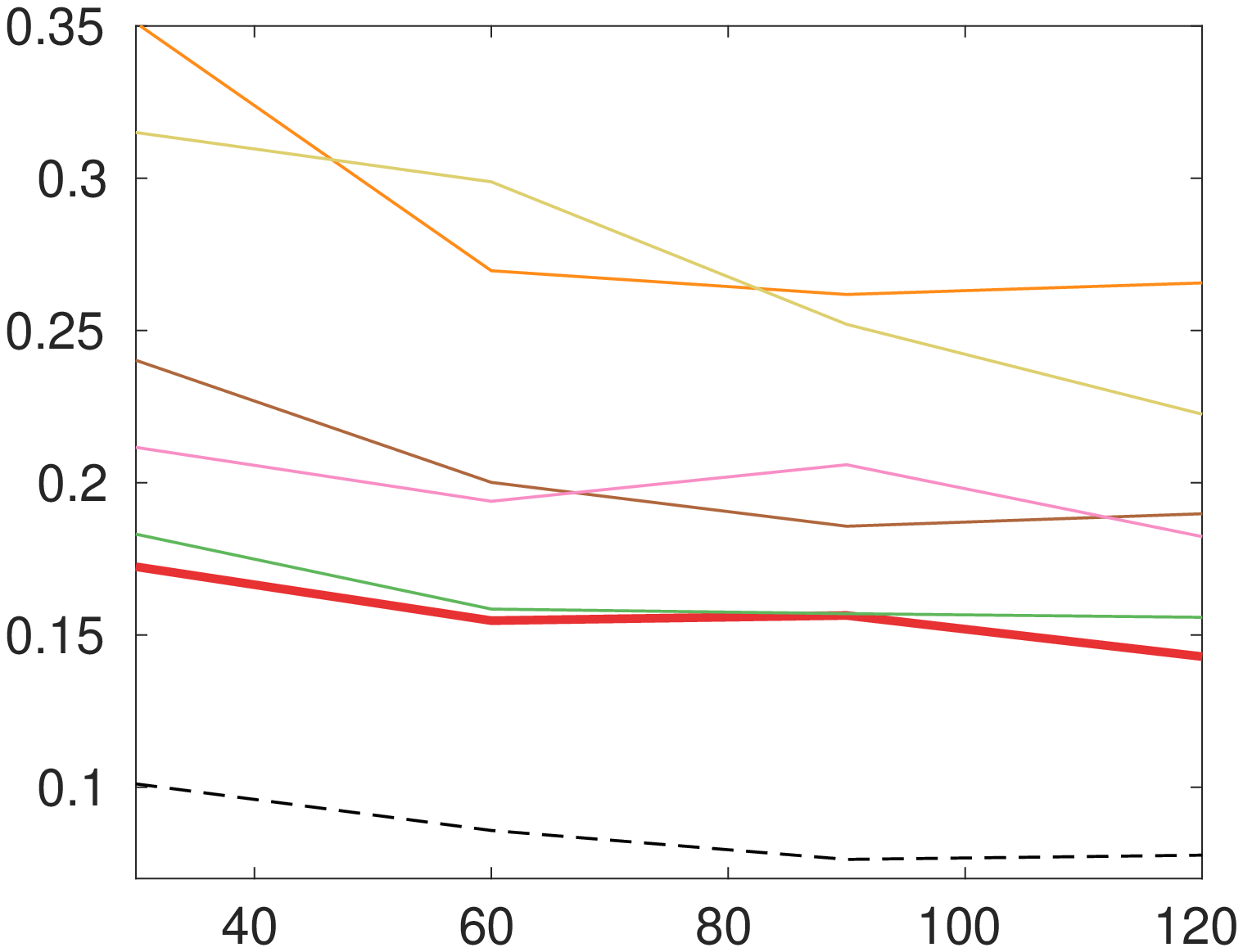}
    \\
    \includegraphics*[width=0.5\linewidth]{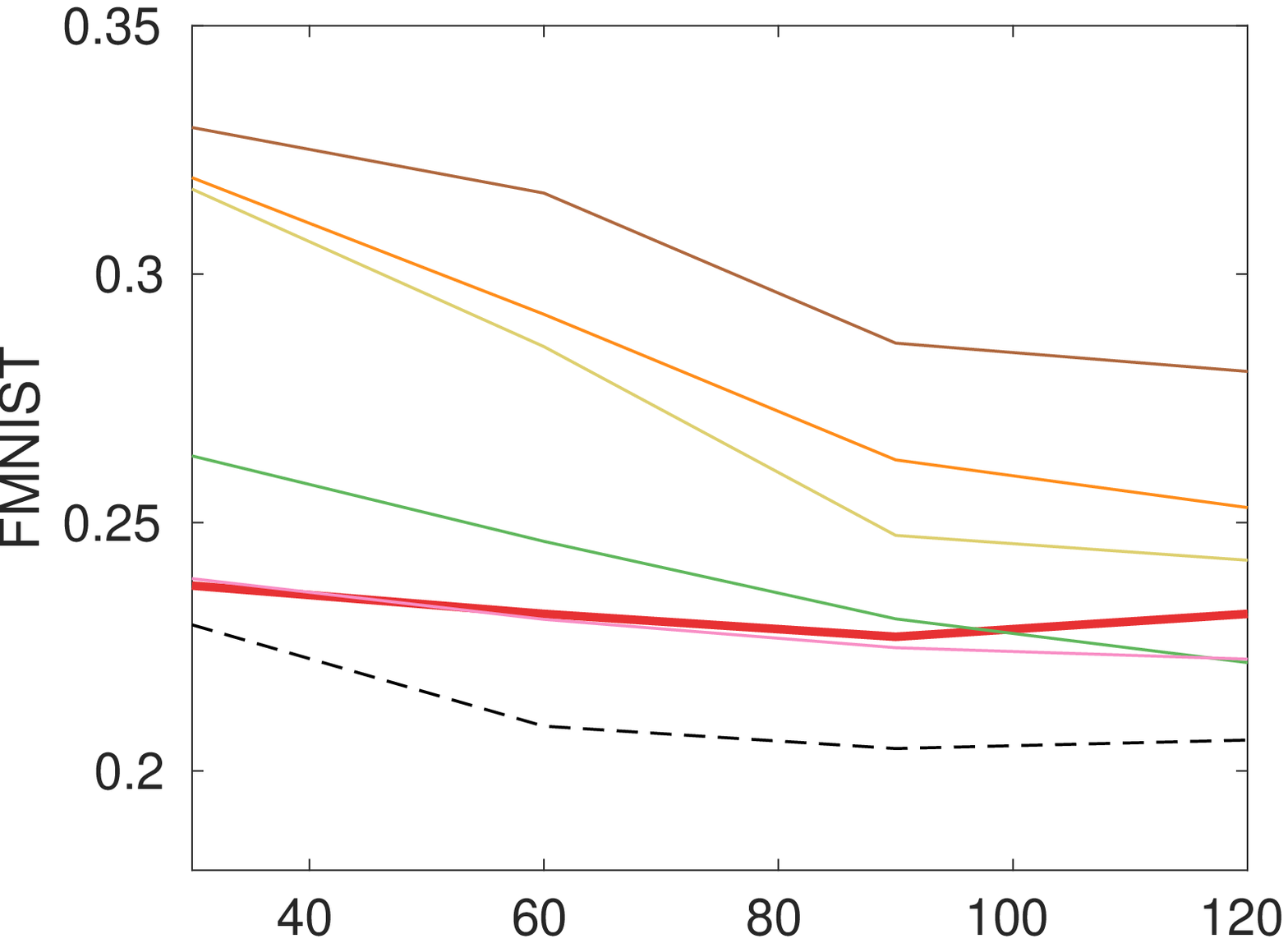}&
    \includegraphics*[width=0.5\linewidth]{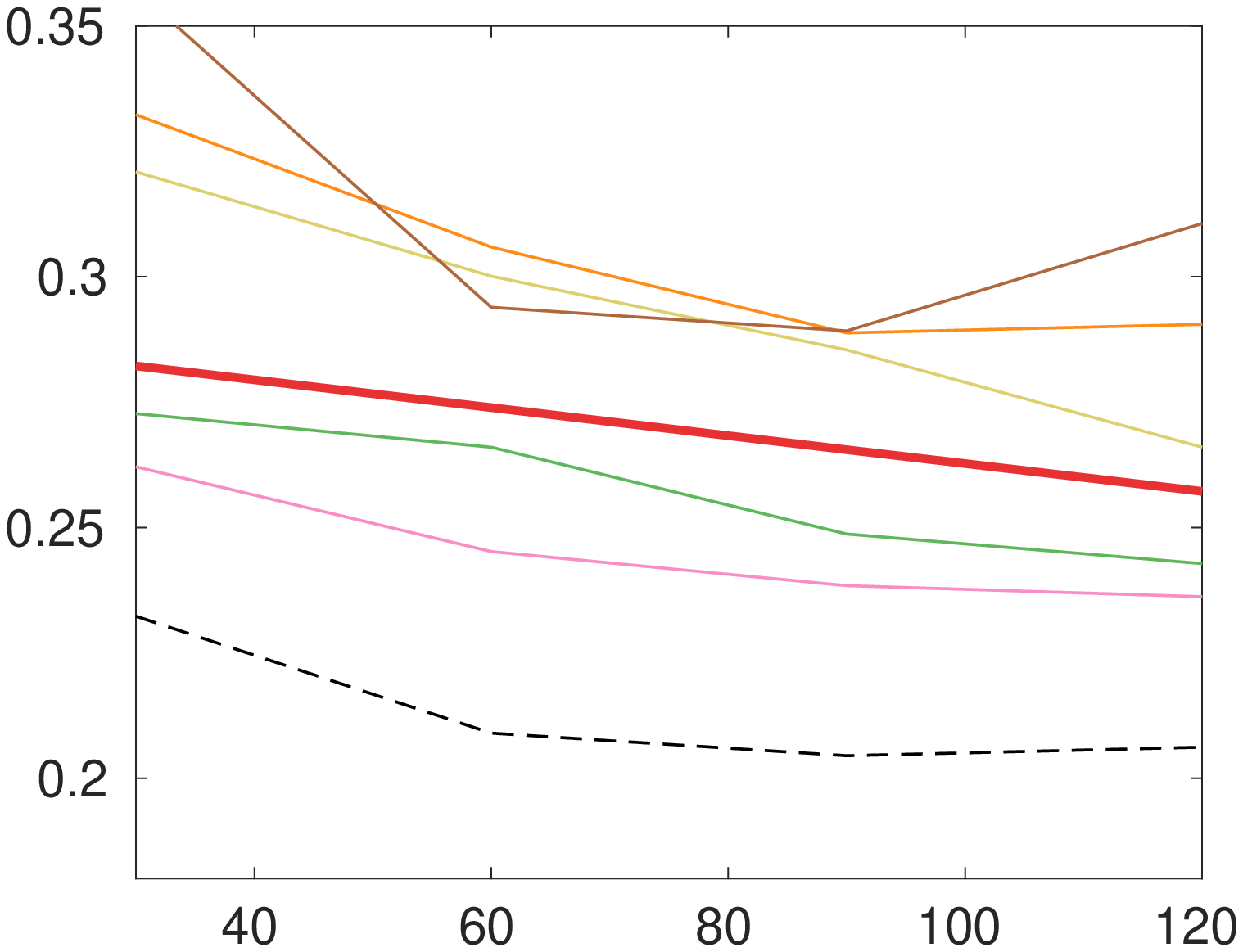}
    \\
    \includegraphics*[width=0.5\linewidth]{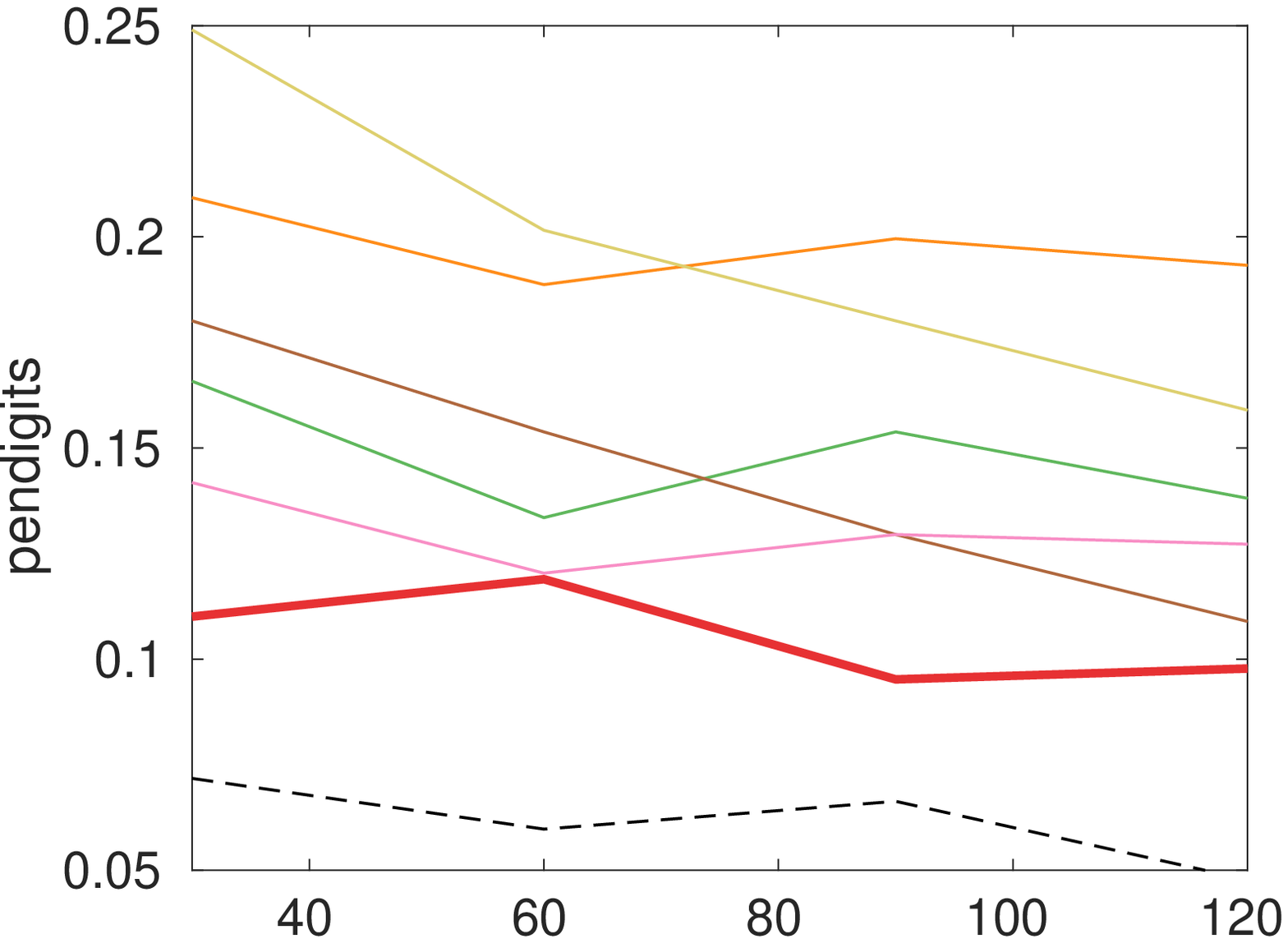}&
    \includegraphics*[width=0.5\linewidth]{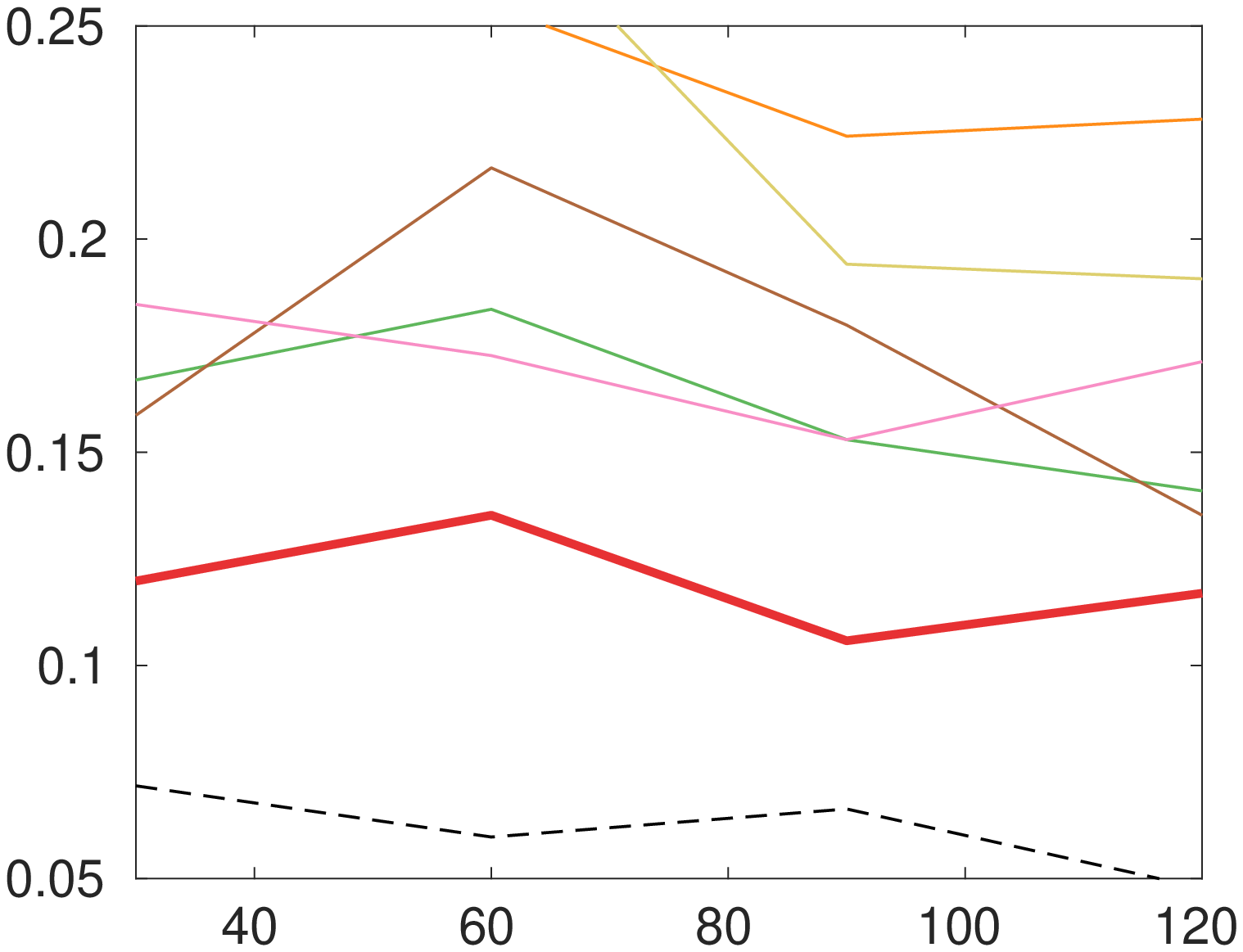}
    \\
    \includegraphics*[width=0.5\linewidth]{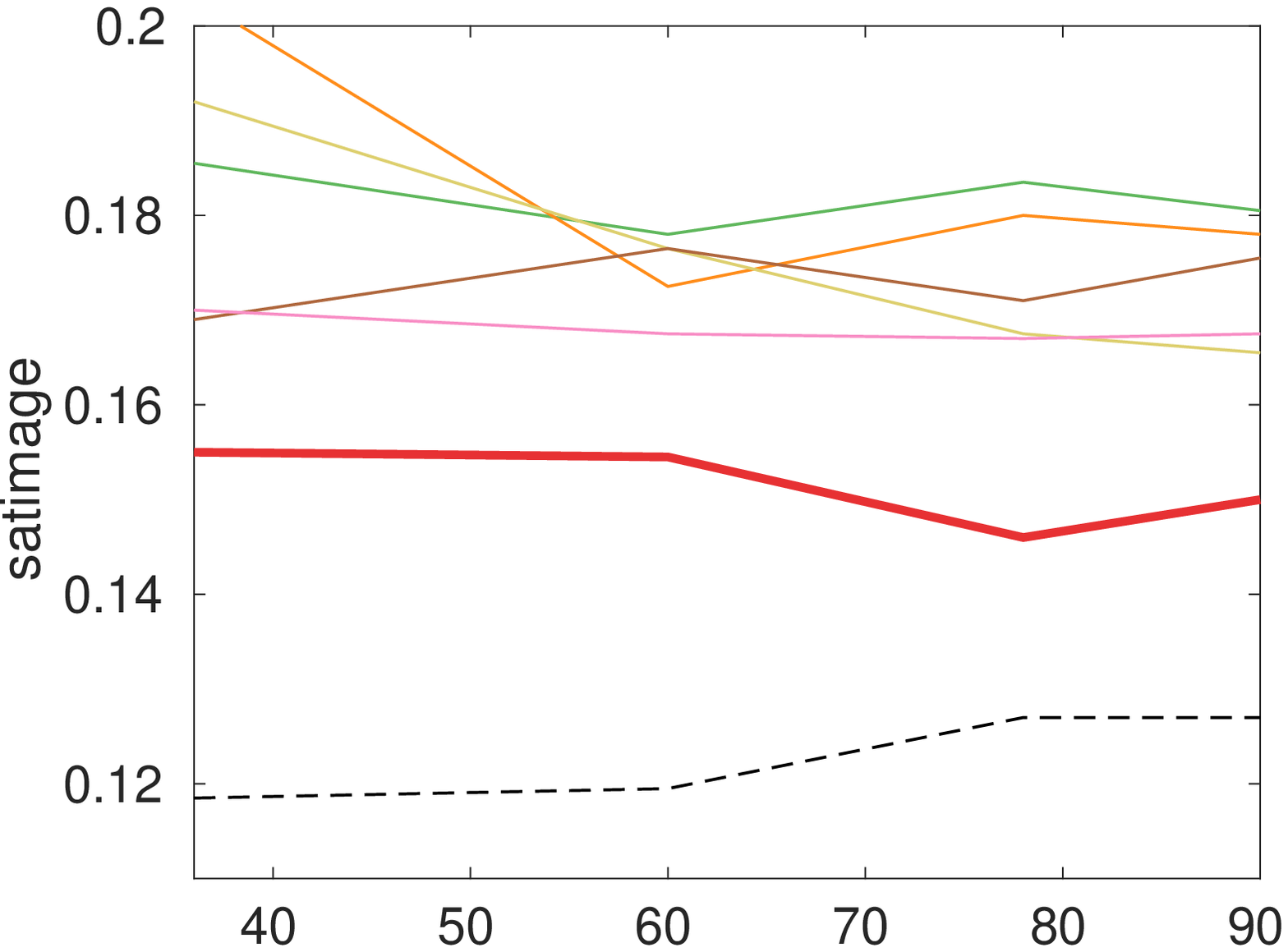}&
    \includegraphics*[width=0.5\linewidth]{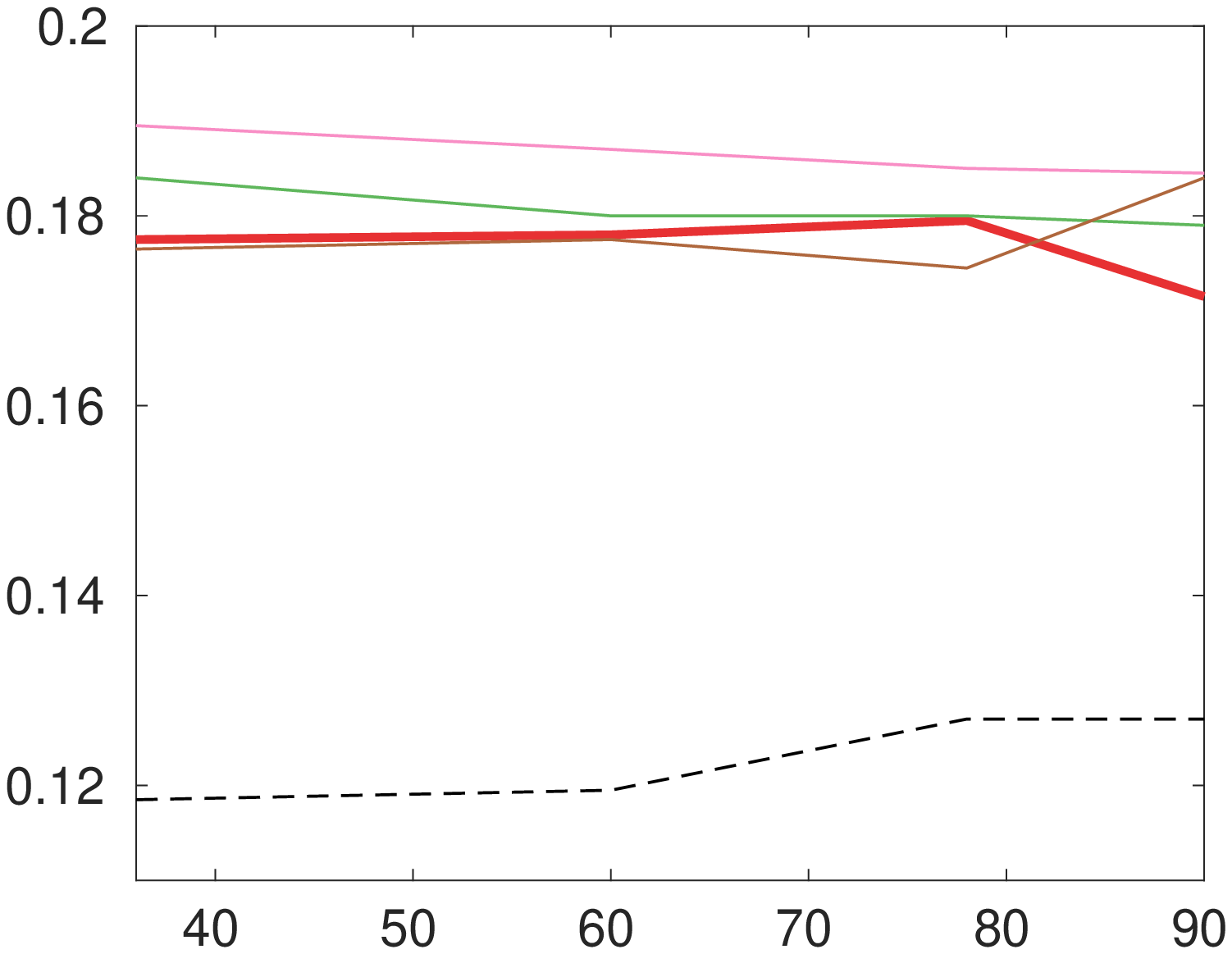}
    \\
  \end{tabular}
  \caption{Different defenses against SRNN-att. SRNN-att model was trained with 30 centroids. Vertical axis shows test error ratio and horizontal axis represents number of basis models.}
\label{fig:defs}
\end{figure}

Figure \ref{fig:defs} presents the results of experiments in this subsection. As can be observed from figure \ref{fig:defs}, RSRNN was able to constantly outperform other models with a large margin. At the same time, RSRNN was able to improve the results of SRNN by up to $2-3\%$. Additionally, RSRNN was able to detect a large portion of malicious samples up to $70\%$ with a true positive of $50-60\%$. Experiments regarding performance of RSRNN in detecting malicious samples is presented in the supplementary materials.  

RSRNN was able to achieve smallest test error while also having detected a high portion of malicious samples. RSRNN is able to find smaller clusters of data that are vulnerable to adversarial label flipping. Finally, the size of validation set used in the experiments of this section is only $8\%$ of the trainset. Further details of the experiment setup and more experimental results on the efficiency of the model under various configurations of the attack are presented in the supplementary materials.
\section{Conclusion}
In this paper, a novel data poisoning attack was proposed that is able to deteriorate the performance and undermine the integrity of state of the art machine learning models. This is the first data poisoning technique that is not limited to only binary classification, a specific model, or gradient-based approaches. At the same time, the attack technique is very fast since it only takes a linear time over the dataset to select samples for label flipping. The proposed attack shows the fact that for an adversary, it would be more efficient to target minority groups in a dataset with the goal of undermining the integrity of the learned model and achieving this goal. 
 The properties of this attack does not require the knowledge of the user model since its attack can affect any model.

In addition, a novel defense technique based on SRNN model was proposed that is resistant against the proposed attack technique. The defense technique intrinsically detects and excludes the adversarial samples on the fly during its training procedure. Additionally, our experimental results showed that the RSRNN model is capable of detecting a large portion the malicious samples, thus, making it more robust to data poisoning. Finally, in experiments, RSRNN showed ability to achieve significantly lower test error compared to other known resilient models against the label flipping data poisoning.




\section{Supplementary Materials}
\appendix
\section{Upper Bound for the Proposed Attack Technique}

\begin{thm}[Fixed SRNN upper bound]
  \label{th:upper_bound}
  Assuming that the centroids are fixed, and known to both user and adversary, the modality-based attack can increase the error of $NN^*$ by at most ${\cal O}(2 \times Cost)$ over the trainset with unperturbed labels.
\end{thm}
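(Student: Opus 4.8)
The plan is to exploit the fact that, with the centroids fixed and the assignments frozen, the training error decomposes additively over the $K$ clusters, so that it suffices to bound the error increase contributed by each attacked cluster separately and then sum. First I would fix notation for a single cluster $S_j$: let $m_j = |S_j|$, let $a_j$ denote the count of the majority (mode) label under the true labels, and let $b_j$ denote the true count of whichever label becomes the mode of the \emph{perturbed} labels after the attack. Since the centroid predicts the mode of the perturbed labels while the error is measured against the true labels, the error on $S_j$ before the attack is $m_j - a_j$ and after the attack is $m_j - b_j$; hence the per-cluster error increase is exactly $\Delta_j = a_j - b_j$, and we may assume $b_j \le a_j$ since otherwise the attack gains nothing on that cluster.

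The core of the argument is a lower bound on $Cost_j$, the number of label flips the adversary must spend to turn a label of true count $b_j$ into the new mode. I would argue that the cheapest strategy is to flip majority-labeled samples directly into the target label: each such flip simultaneously decreases the perturbed majority count by one and increases the perturbed target count by one, closing the margin between them by $2$ per unit of cost. Overcoming the initial margin $a_j - b_j$ therefore forces $Cost_j > (a_j - b_j)/2$, i.e. $\Delta_j = a_j - b_j \le 2\,Cost_j$. To make this rigorous I would introduce the bookkeeping $p = $ (in-flips to the target) and $q = $ (out-flips from the majority), record the mode-change constraint $p + q > a_j - b_j$, and observe that a flip of the form majority$\to$target is counted in both $p$ and $q$ while costing only a single relabeled sample; a short minimization then shows the optimal cost uses only such flips and meets the claimed bound.

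Having established the per-cluster inequality $\Delta_j \le 2\,Cost_j$, the conclusion follows by summing over the selected clusters: the total error increase is $\sum_j I_j \Delta_j \le 2 \sum_j I_j Cost_j \le 2\,Cost$, which is the asserted ${\cal O}(2 \times Cost)$ bound. I would close by checking the extremal case of a pure cluster, where the minority count is $0$ so that $\Delta_j = m_j$ while $Cost_j > m_j/2$; this both verifies the inequality and shows the constant $2$ is essentially tight.

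The step I expect to be the main obstacle is the cost lower bound in the second paragraph, specifically ruling out flipping strategies cheaper than the direct majority$\to$target flips — for instance, verifying that spending flips to suppress other labels or to route through an intermediate label can never beat the factor-of-two exchange rate, and handling ties in the mode (several labels sharing the maximal perturbed count) without invalidating the margin inequality.
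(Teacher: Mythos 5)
Your proof is correct, and it shares the paper's skeleton --- fix the centroids so the training error decomposes additively over the $K$ clusters, establish a per-cluster inequality $\Delta_j \le 2\,Cost_j$, and sum against the budget constraint $\sum_j Cost_j I_j < Cost$ --- but the way you justify the per-cluster inequality is genuinely different and strictly stronger. The paper only analyzes the one strategy the attack actually uses: flipping half (plus one) of cluster $j$ to a minority label costs $Cost_j = |S_j|/2$ and can raise that cluster's error by at most $|S_j|$, whence $\Delta L_j \le 2 \times Cost_j$ in a single line. You instead prove an optimality lower bound over \emph{all} flipping strategies: with $p$ in-flips to the target label and $q$ out-flips from the true mode, the mode-change constraint $p+q \ge a_j - b_j$ combined with $(\text{number of flips}) \ge \max(p,q) \ge (p+q)/2$ gives $Cost_j \ge (a_j-b_j)/2 = \Delta_j/2$. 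This buys you more than the theorem asks for: your version certifies that \emph{no} label-flipping adversary of budget $Cost$ --- not merely the modality-based one --- can raise the fixed-centroid training error by more than $2\,Cost$, and your pure-cluster example shows the constant $2$ is tight, neither of which the paper's argument establishes; conversely, the paper's computation is shorter because the theorem only concerns the proposed attack, for which plugging in $Cost_j = |S_j|/2$ suffices. The obstacle you flagged dissolves under your own accounting: the attack is a one-shot relabeling, so ``routing through an intermediate label'' is vacuous (only each sample's final perturbed label matters), flips spent suppressing labels other than the true mode contribute nothing to $p+q$ and only add cost, ties at worst turn strict inequalities into weak ones without affecting $\Delta_j \le 2\,Cost_j$, and $b_j \le a_j$ holds automatically since $a_j$ is the maximal true-label count.
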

\begin{proof}
The centroids are fixed. Therefore, the loss incurred by each centroid is independent of the other centroids, thus, it is possible to break down the total loss by each centroids loss.
\begin{equation}
\label{eq:total_loss}
\sum_{i=1}^N L(y_i,NN^*(x_i))=\sum_{j=1}^K L_j
\end{equation}
Where, $L_j=\sum_{x_i \in S_j^{train}} L(y_i,NN^*(x_i))$.
Changing $\hat{y}_j$ to label of minority class will increase the error in the $j^{th}$ cluster by at most $|S_j|$. The cost of such action is $Cost_j=\frac{|S_j|}{2}$. Therefore, the change in loss of $j^{th}$ centroid is 
$\Delta L_j\leq 2\times Cost_j$.
The second constraint of problem (4) of the main paper is satisfied by the adversary. Therefore, using \eqref{eq:total_loss} and the second constraint of (4) of the main paper, we have  
\begin{equation}
\Delta \sum_{j=1}^K L_j = \sum_{j=1}^K \Delta L_j I_j \leq \sum_{j=1}^K 2\times Cost_j I_j < 2\times Cost.
\end{equation}

\end{proof}

\section{Computational Complexity and Convergence}
\label{sec:suppintro}
The attack technique consist of first training SRNN model that takes ${\cal O}(NDK)$ \cite{tavallali2020interpretable} and then perturbing labels of samples. The second step consists of calculating cost of each cluster (takes ${\cal O}(|S_j|)$) and sorting the costs of clusters (takes ${\cal O}(KlogK)$). Therefore, in total, the attack technique takes ${\cal O}(NDK+N+KlogK)$.

The problem of learning RSRNN is 
\begin{equation}
	\label{eq:RSRNN3_obj}
	\begin{split}
	& \underset{\{(c_j,\hat{y_j},r_j)\}_1^K}{\text{min}}  \quad \sum_{i=1}^{N} L(y_i,NN(x_i)) + \lambda \sum_{j=1}^K r_j +\alpha \sum_{j=1}^K cost(S_j))\\
	& \text{s.t.} \quad NN(x_i)=\begin{cases}
	\hat{y_{j^*_i}} \quad r_{ij^*}<r_{j^*} \\ Malicious \quad \text{otherwise}
	\end{cases}
	\end{split}
\end{equation}

\begin{thm}[Convergence of RSRNN]
  \label{th2:upper_bound}
  Iterating over the first two steps of optimizing RSRNN converge to a local minimum of the first two terms of objective function in \eqref{eq:RSRNN3_obj}. This takes a finite number of iterations.
\end{thm}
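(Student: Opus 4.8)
The plan is to prove this by the standard block coordinate descent (equivalently EM / Lloyd's) argument: exhibit the first two terms of \eqref{eq:RSRNN3_obj} as an objective $F=\sum_{i=1}^N L(y_i,NN(x_i))+\lambda\sum_{j=1}^K r_j$ that is bounded below and non-increasing along every sub-step of the assignment and update steps, and then invoke finiteness of the underlying discrete configurations to conclude termination in finitely many iterations at a coordinate-wise (partial) optimum. The structure mirrors the convergence proof for $k$-means, with the extra bookkeeping needed for the labels $\hat y_j$, the radii $r_j$, and the malicious-sample indicator.

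First I would record that $F\ge 0$, since the $0$--$1$ loss is non-negative and $\lambda,r_j\ge 0$; this supplies the lower bound. Next I would verify monotonicity block by block with all other variables frozen. (i) The label sub-step sets $\hat y_j=\mathrm{mode}(\{y_i\})$, which is exactly the minimizer of \eqref{eq:RSRNN_assign}, so it cannot increase $F$. (ii) The radius sub-step solves \eqref{eq:RSRNN_r_j} globally by scanning $r_j$ over the finite candidate set $\{r_{ij}:x_i\in S_j\}$ at which the piecewise-constant objective jumps; this is an exact minimization over $r_j$, hence non-increasing, and it crucially pins each $r_j$ to a finite set of values. (iii) The reassignment of samples to nearest centroids I would reconcile with $F$ by noting that the per-centroid objective \eqref{eq:RSRNN_cent} already scores each sample by the outcome of whether it lands in $S_j$ or $S_j^c$ (through the $\sum_{x_i\in S_j^c} L(y_i,NN_{C'}(x_i))$ term and the Table~\ref{tab:assign} analysis), so the centroid step anticipates the nearest-centroid reassignment; the subsequent assignment step merely realizes the partition the centroid objective was optimized against, and the two together do not increase $F$.

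The delicate sub-step is the centroid update, because \eqref{eq:RSRNN_cent1} is NP-hard and is only approximated by the surrogate \eqref{eq:RSRNN_surrogate} along the path $\mu\!:\!0\to 1$ using stochastic gradient descent, so one cannot claim it attains the block-wise optimum. I would close this gap by selecting, among the surrogate's candidates $c_j^*(\mu)$ and the incumbent $c_j$, whichever yields the smallest value of the true loss \eqref{eq:RSRNN_cent1}; since the incumbent is always an admissible choice, the selected centroid never increases the per-centroid loss and therefore never increases $F$. Monotonicity is all the argument requires --- convergence is to a partial optimum of the first two terms, not a global one.

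Finally I would assemble the finiteness claim. Because $F$ is non-increasing and bounded below, its values converge; for termination in finitely many iterations I would argue that a full iteration is pinned down by a discrete configuration, namely the partition $\{S_j\}_{j=1}^K$ of the $N$ samples (at most $K^N$ of them) together with the integer labels $\{\hat y_j\}_{j=1}^K$ (at most $M^K$), with the radii already confined to the finite candidate sets from sub-step (ii). Since $F$ strictly decreases whenever a genuine change in this discrete configuration occurs, no configuration can recur, which bounds the iteration count by the finite number of configurations; when an iteration induces no change and no decrease, the point is a coordinate-wise local minimum and the algorithm halts. The main obstacle, as noted, is making the centroid sub-step compatible with exact monotone descent despite its surrogate/SGD nature, which the incumbent-comparison device resolves; keeping the nearest-centroid assignment inside the coordinate-descent framework via the reassignment-aware objective \eqref{eq:RSRNN_cent} is the secondary technical point.
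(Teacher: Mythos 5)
Your proposal is correct and follows essentially the same route as the paper's own proof, which is the one-paragraph Lloyd's/$k$-means argument: the first two terms of \eqref{eq:RSRNN3_obj} are bounded below by zero, each sub-step (labels, radii, centroids, assignments) is non-increasing, and the finiteness of possible sample-to-centroid assignments yields termination at a coordinate-wise local minimum. You actually go beyond the paper's sketch by explicitly patching its one delicate point --- monotonicity of the surrogate/SGD centroid step --- via the incumbent-comparison device, a role the paper's selection of the $c_j^*(\mu)$ minimizing \eqref{eq:RSRNN_cent1} along the $\mu$-path only implicitly plays.
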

\begin{proof}
The objective function has a lower bound of zero. Both the assignment and the centroid steps decrease or do not change the first two terms of \eqref{eq:RSRNN3_obj}. Different combinations of assignments of the samples to the centroids are finite. As a result, the loss function does not decrease after a finite number of iterations over both steps. This proof is similar to proof of convergence for K-means \cite{lloyd1982least}.
\end{proof}

\section{Pruning}
After optimizing \eqref{eq:RSRNN_obj} using the assignment step, pruning is performed. The pruning step consists of removing malicious centroids and samples. To do so, other parameters are kept constant, a validation set is needed and optimization over $\alpha$ must be performed since $\alpha$ is a hyperparameter. A direct selection of $\alpha$ is not practical since $\alpha$ can accept any values from $0$ to $\infty$. Thus, we will show that Values of $\alpha$ that cause change to the structure of RSRNN are quantized and correspond to a specific cut-off threshold over $cost(S_j)$. $cost(S_j)$ is a real number in $[0,1]$. Therefore, a range of numbers between $[0,1]$ are used to remove centroids based on the trainset and evaluate their performance over the validationset. Any cut-off that had smallest validation error was selected.

For each centroid, the cut-off threshold is the threshold that by removing the centroid, the objective function of \eqref{eq:RSRNN_obj} decreases. Such threshold can be calculated as follows:
\begin{equation}
	\label{eq:RSRNN2_obj}
	\begin{split}
	cost(S_j)= \frac{L(y_i,NN_{C-c_j}(x_i))+\lambda r_j}{\alpha}
	\end{split}
\end{equation}
Where, $NN_{C-c_j}(.)$ is the nearest neighbor function over all centroids except $j^{th}$ centroid.

In practice, we used cut-off thresholds between $[0.2-0.9]$ with steps of $0.05$. For each cut-off, the centroid and its sampleset ($S_j$) was removed. Further, for each cut-off a retraining consisting of only initialization with $K$ centroids over the cleaned dataset was applied and its validation error was evaluated. Please note that the purpose of this pruning approach is detecting malicious samples along with removing malicious centroids. Finally, the model with smallest validation error was selected for further optimization over the cleaned dataset.

\section{Experimental results}

\textbf{Dataset setups:}We performed the experiments over two setups for each dataset. In one setup, $20\%$ of the dataset for selected as trainset and a validation set with size of $8\%$ of the trainset was selected. The validationset contains the original(unperturbed) labels for the samples. The trainset was poisoned using SRNN-att. The SRNN-att was trained over $80\%$ of the dataset for all setups. The reason for such setup is that in the proposed method the attacker was supposed to have the knowledge of the optimal centroids. Therefore, to mimic such assumption, the attacker was trained over a larger set of samples. The second setup, $80\%$ of the dataset was used as trainset and a validationset with the size of $8\%$ of the trainset was selected. The trainset was poisoned using same SRNN-att as the first setup. The second setup is used for all datasets. In the main paper the first setup was used for datasets of MNIST, FMNIST and pendigits. For satimage the second setup was used since using the first setup was causing overfitting in all trained models. The experiments over MNIST, FMNIST and pendigits using the second setup are presented in this supplementary materials. Finally, we noticed that SRNN-att can increase the error significantly even if it is trained over the same portion of samples as the other models.

\textbf{Models setups:} In the main paper various models were used. Each model can be represented as a kind of ensemble model that consist of several base models. Here, the setups for each model is presented. For decision tree models of OC1 \cite{murthy1994system} and CART \cite{breiman1984classification} the number of leaf nodes represents the number of base models. For RBF-SVM, first several $K$ centroids using K-means over the trainset was found and used as centers for RBFs. Each centroid represents one base model. The hyperparameters of RBFs were found using the trainset for the attack experiments. In the defense experiments, the RBF hyperparameters were selected using the clean validationset. For the K-means model, a simple unsupervised K-means was trained over the features of the trainset and then the labels were selected using the assignment step. The labels for attack and defense experiments were selected using the trainset and validationset, respectively. The number of centroids represents the number of base models. For AdaBoost, each base model was a tree of depth 4 and the model was trained such that it contained similar number of parameters as RBF-SVM. SRNN was trained using same number of centroids as the other models. In random Nearest Neighbors, $K$ random samples were selected as a the model. In the experiments, we have select models that are similar in the sense that each model would partition the input space or uses centroids as basis of the model. Neural Networks do not fall into any of the categories mentioned here.

\subsection{Attack Experiments}
In this subsection, the performance of all models over each dataset is presented in figure \ref{fig:attacks-all2} for first setup.
\begin{figure}[!ht]
\centering
    \begin{tabular}{c@{}c@{}c@{}}
    $5\%$ attack & $10\%$ attack
    \\
    \includegraphics*[width=0.5\linewidth]{grf/MNIST/5_20_30_att_test_acc.eps}&
    \includegraphics*[width=0.5\linewidth]{grf/MNIST/10_20_30_att_test_acc.eps}
    \\
    \includegraphics*[width=0.5\linewidth]{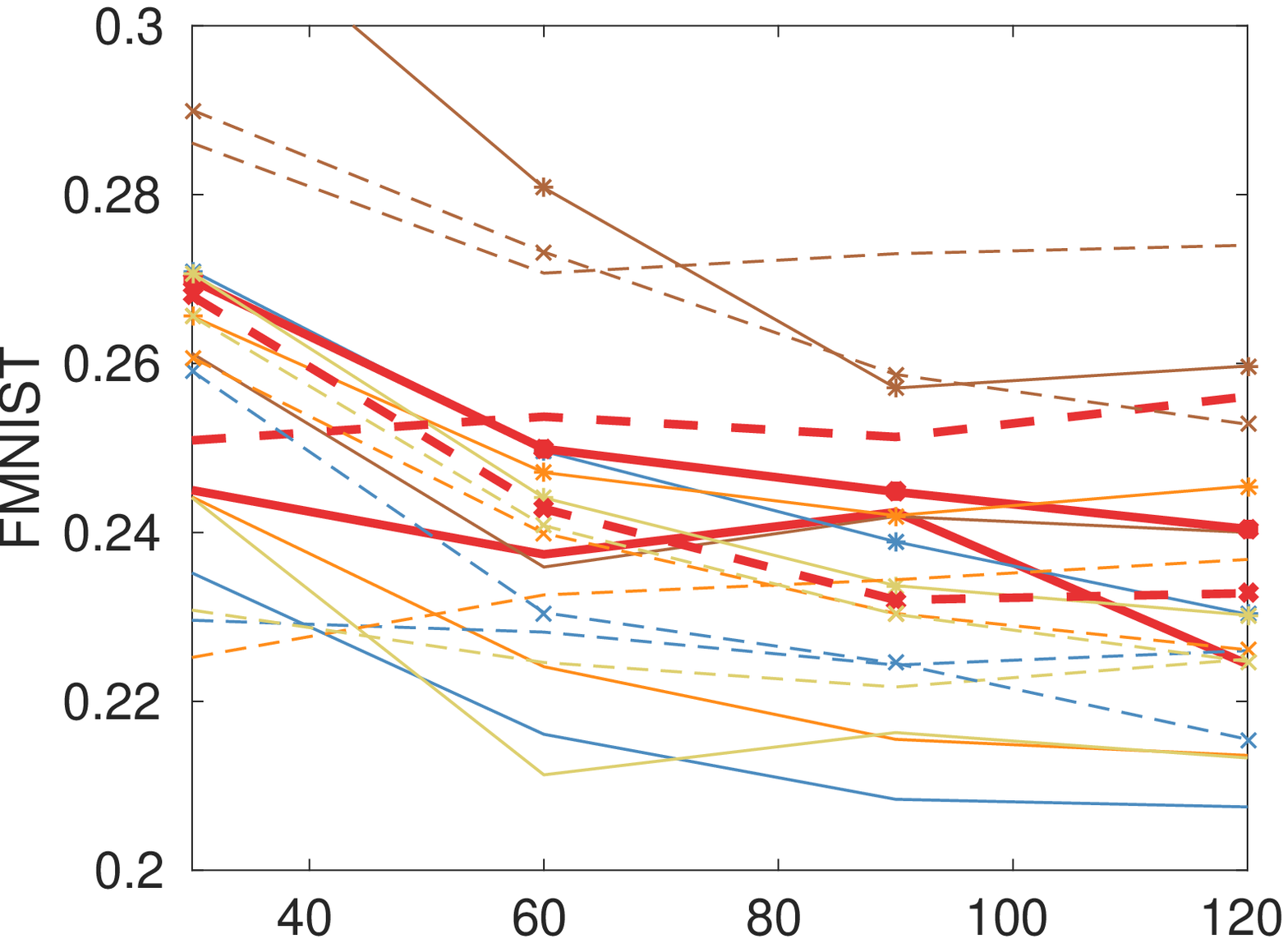}&
    \includegraphics*[width=0.5\linewidth]{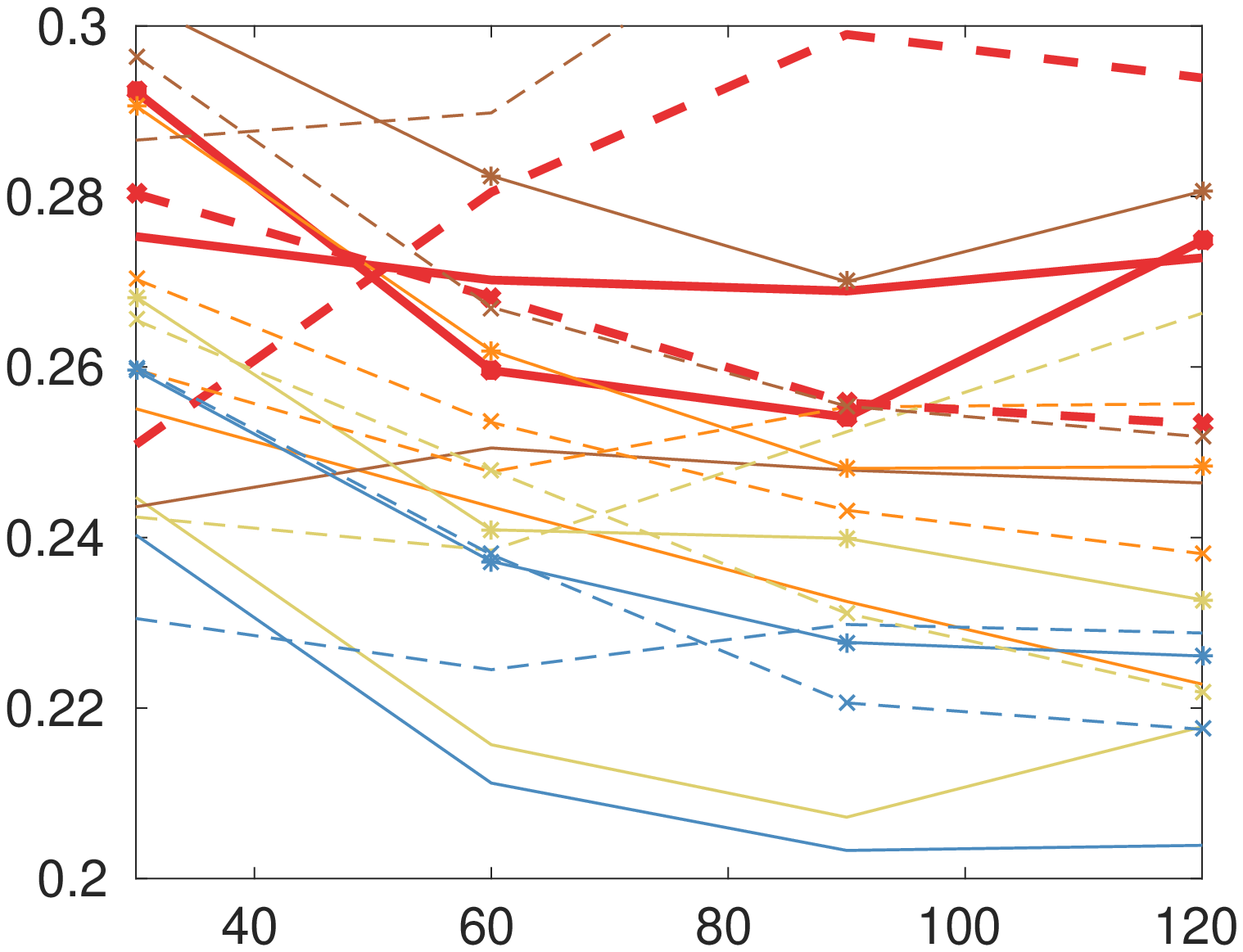}
    \\
    \includegraphics*[width=0.5\linewidth]{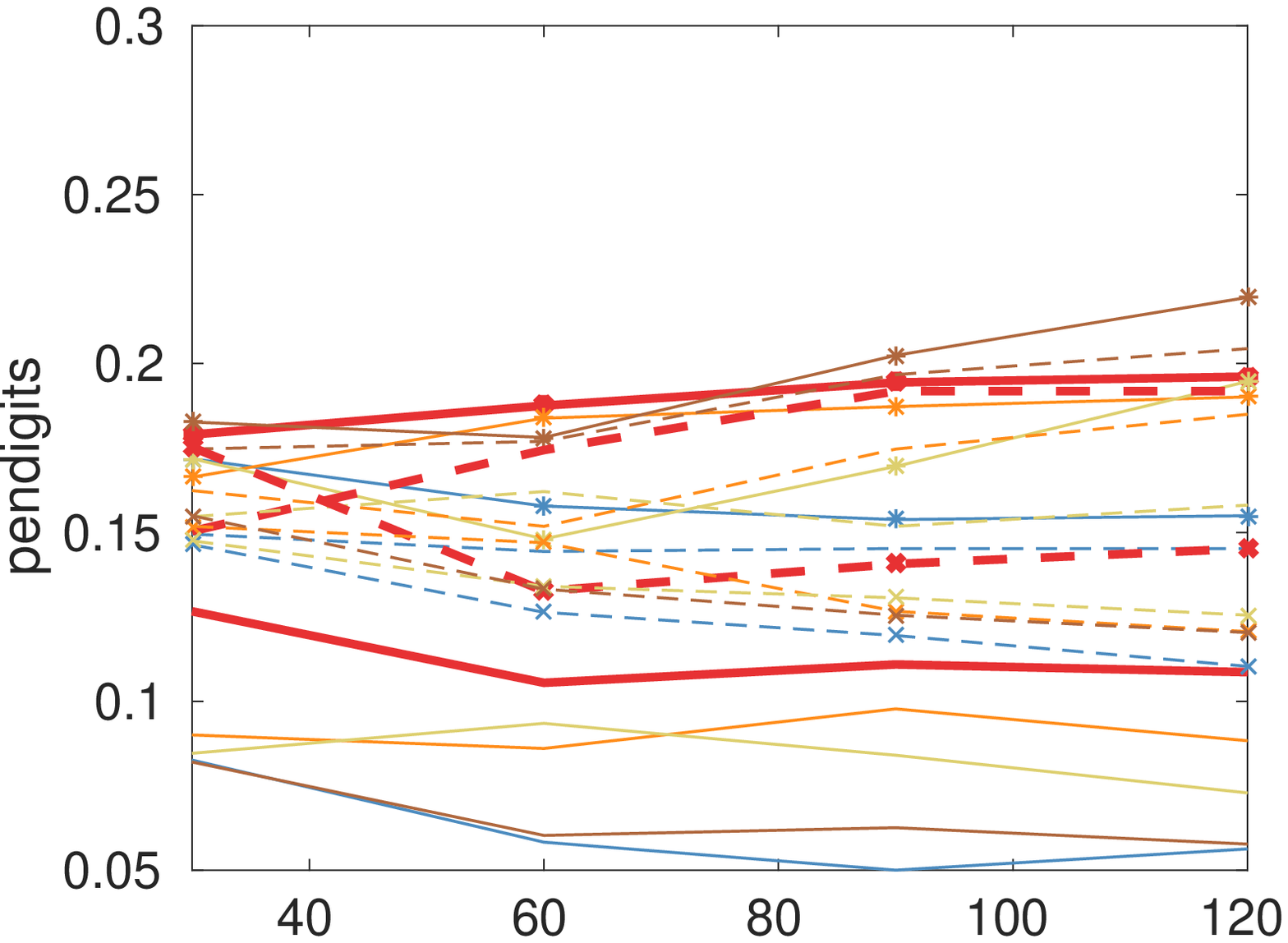}&
    \includegraphics*[width=0.5\linewidth]{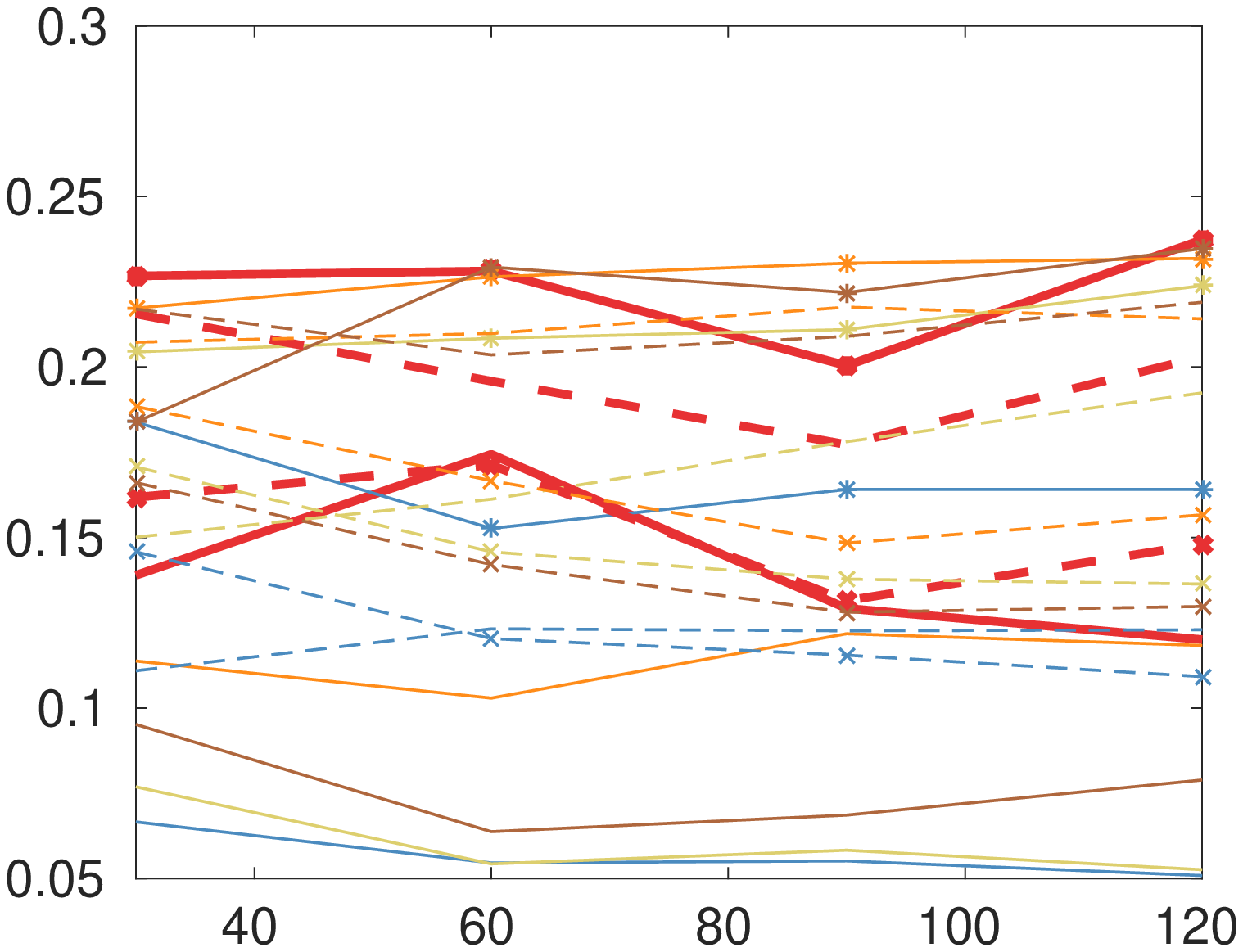}
    \\
  \end{tabular}
  \caption{Different attack techniques and models are presented for each dataset. Colors and curve shapes present the attack techniques and models, respectively. model Vertical axis presents error ratio over the testset and horizontal axis presents number of base models. For this figure, first setup was used.}
\label{fig:attacks-all2}
\end{figure}
As can be observed from figure \ref{fig:attacks-all2}, SRNN-att was able to increase the error of all models higher than other attack techniques.

Figure \ref{fig:attacks-80} presents the best of all models under each attack for second setup.
\begin{figure}[!ht]
\centering
    \begin{tabular}{c@{}c@{}c@{}}
    test error for $5\%$ attack & test error for $10\%$ attack
    \\
    \includegraphics*[width=0.5\linewidth]{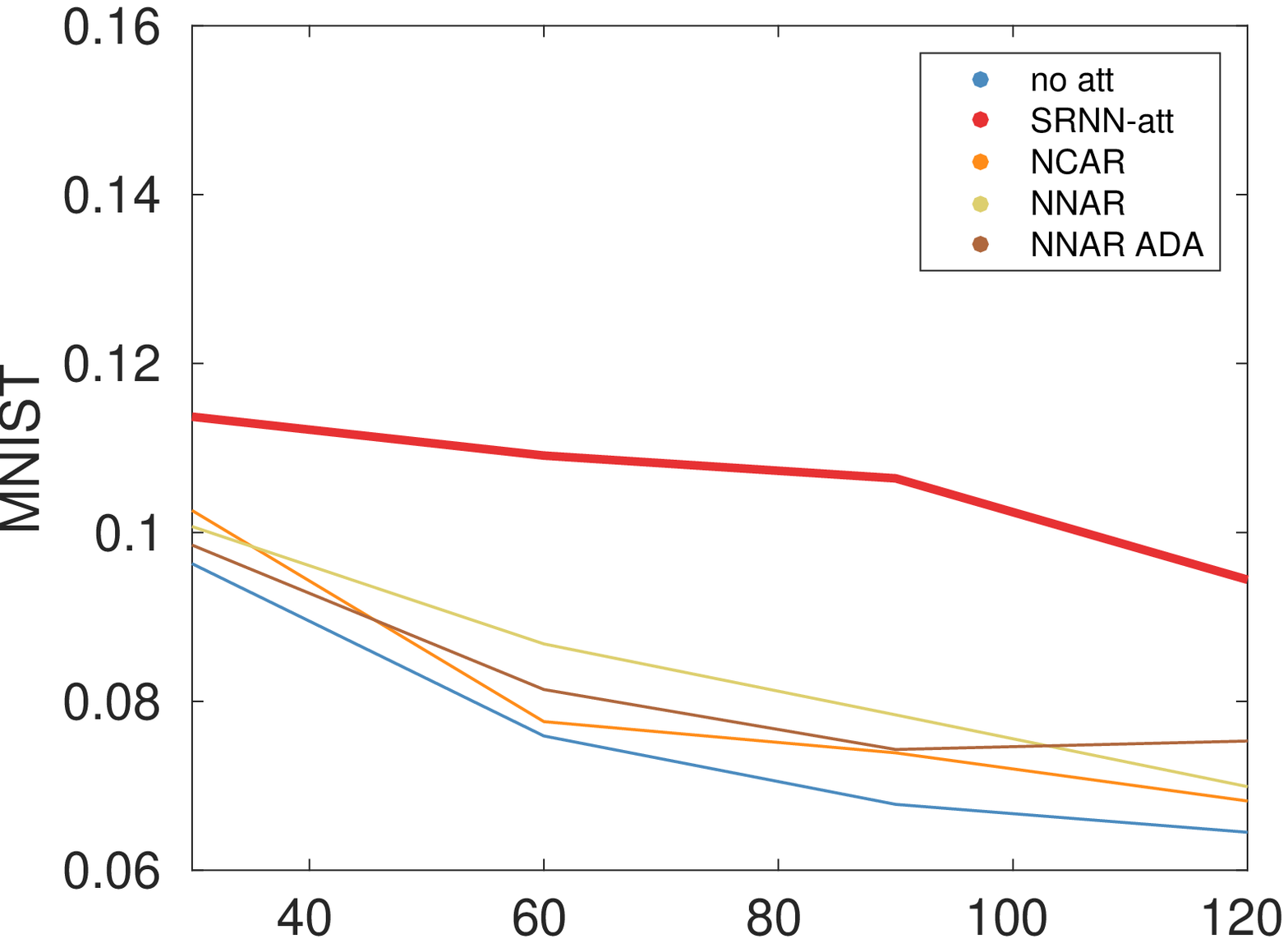}&
    \includegraphics*[width=0.5\linewidth]{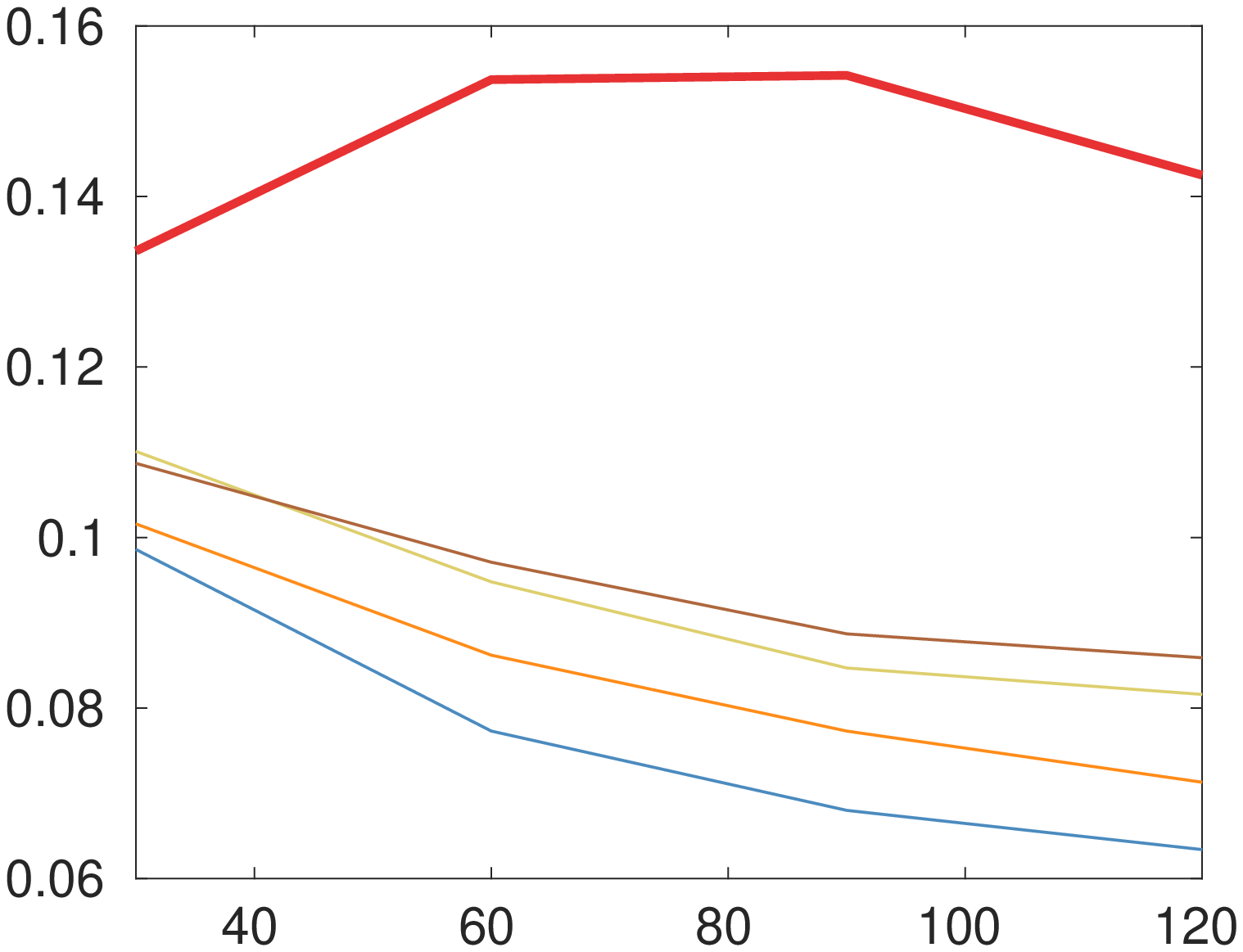}
    \\
    \includegraphics*[width=0.5\linewidth]{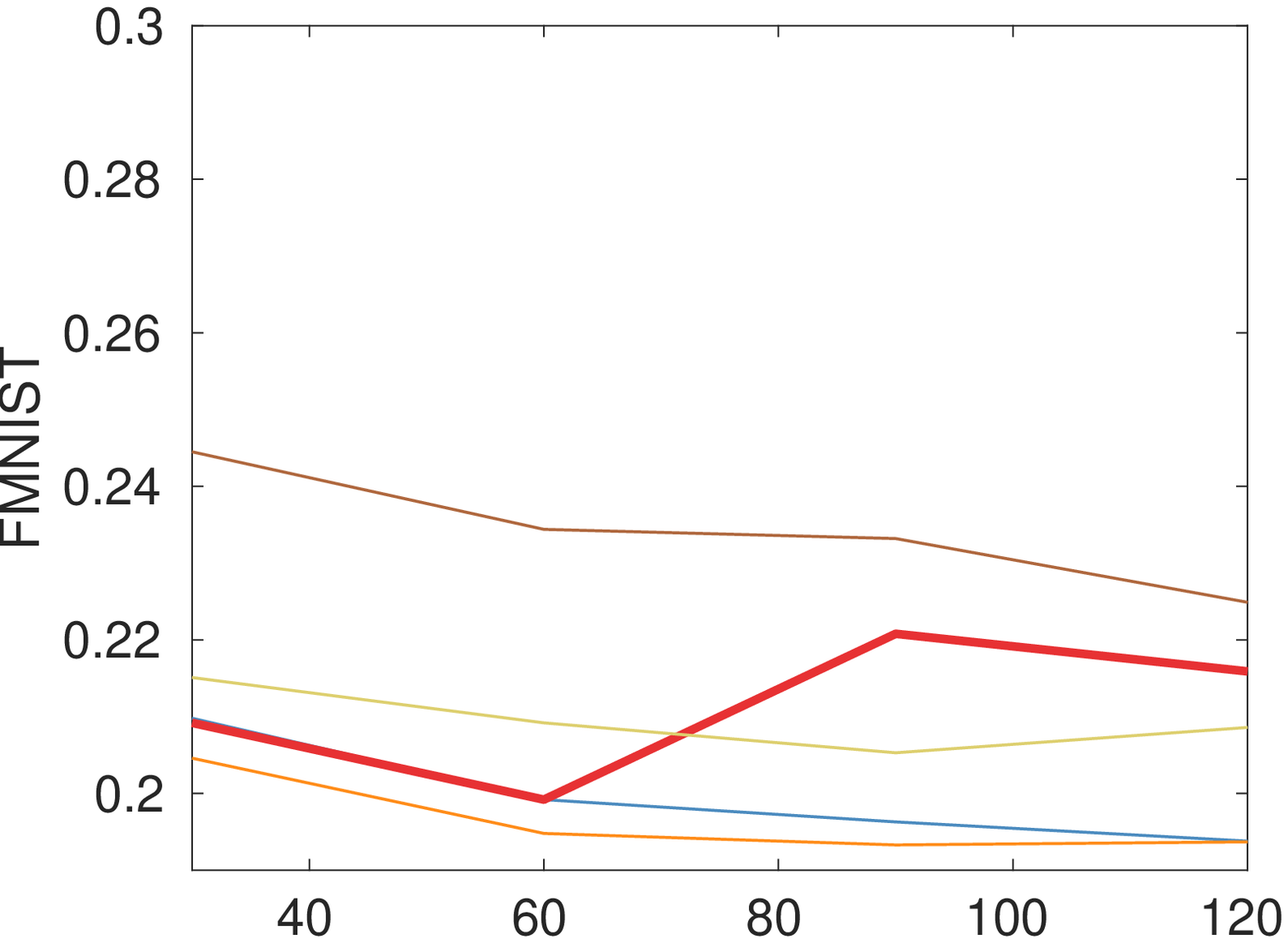}&
    \includegraphics*[width=0.5\linewidth]{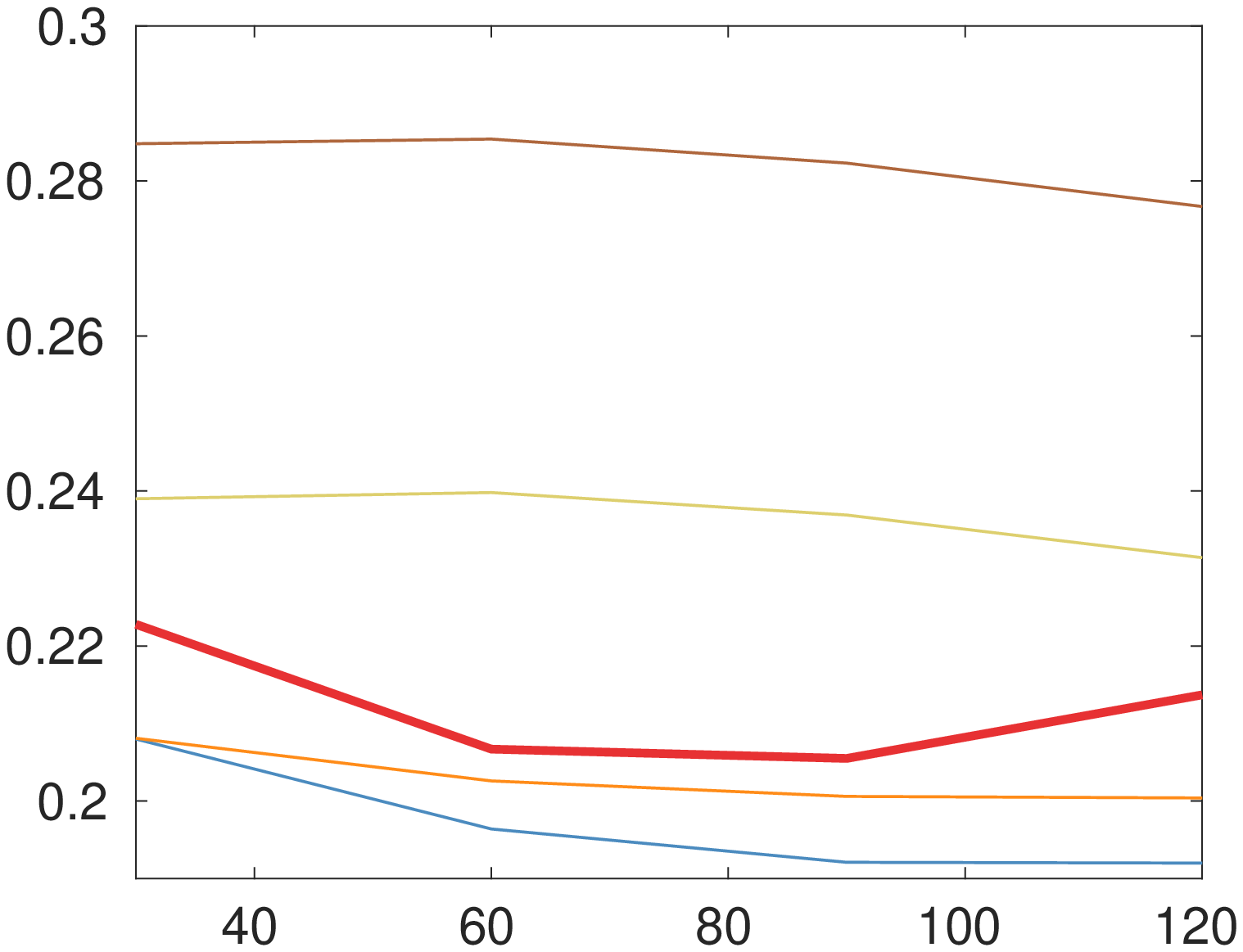}
    \\
    \includegraphics*[width=0.5\linewidth]{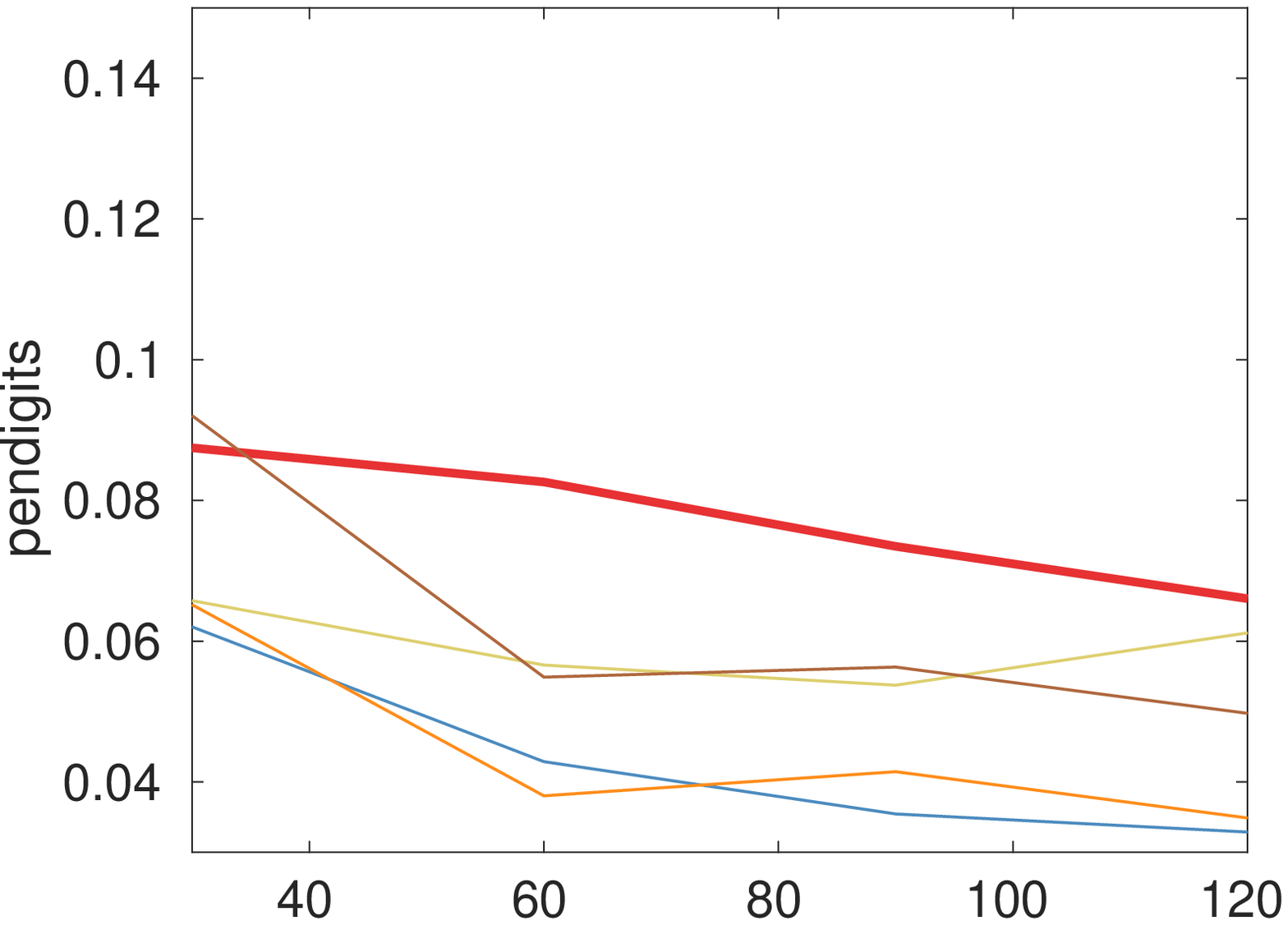}&
    \includegraphics*[width=0.5\linewidth]{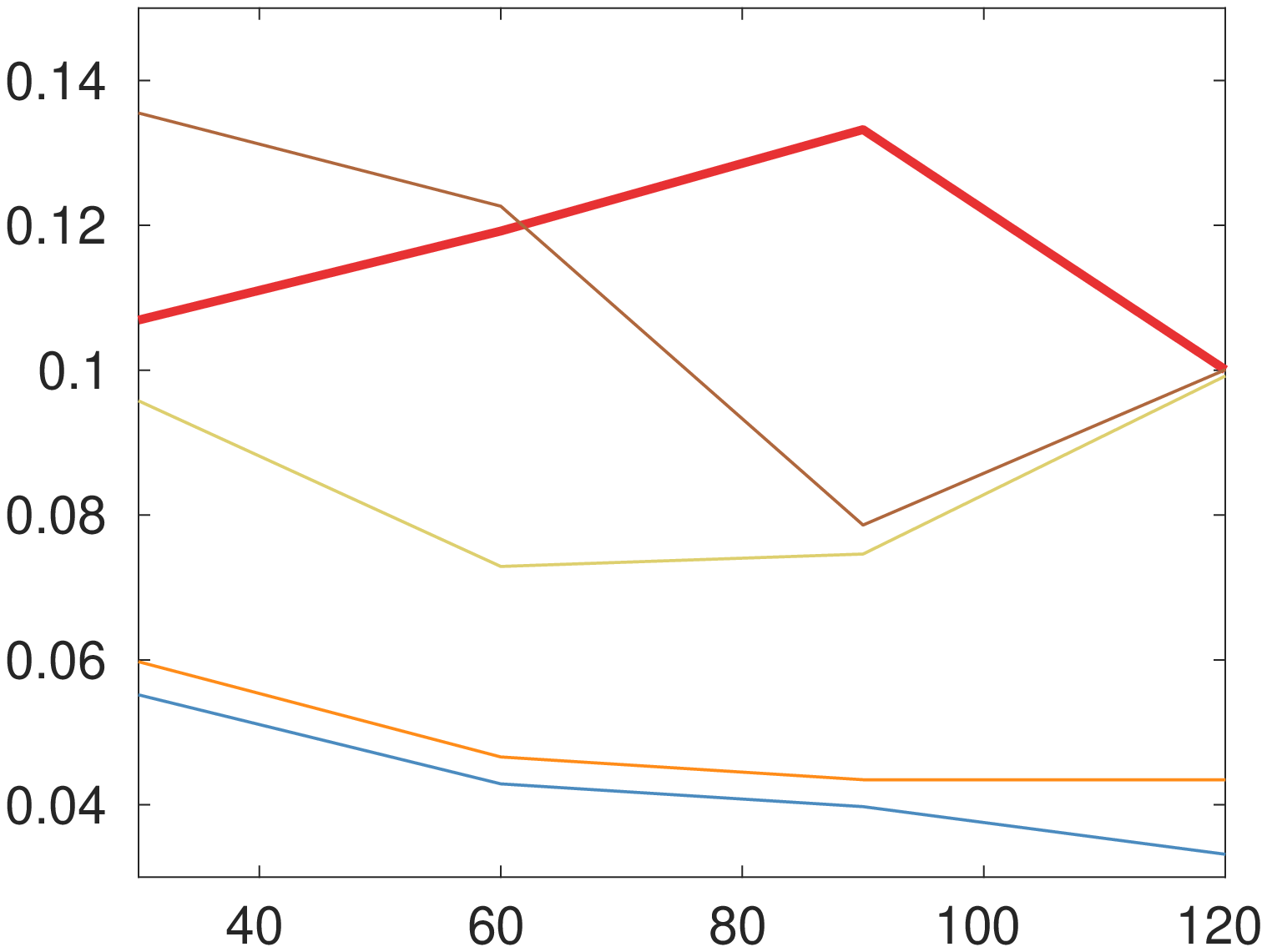}
    \\
  \end{tabular}
  \caption{Different attack techniques are presented for each dataset. Vertical axis presents error ratio over the testset and horizontal axis presents number of base models. The second setup was used for experiments.}
\label{fig:attacks-80}
\end{figure}

Figure \ref{fig:attacks-all2-80} presents all models for second setup.
\begin{figure}[!ht]
\centering
    \begin{tabular}{c@{}c@{}c@{}}
    $5\%$ attack & $10\%$ attack
    \\
    \includegraphics*[width=0.5\linewidth]{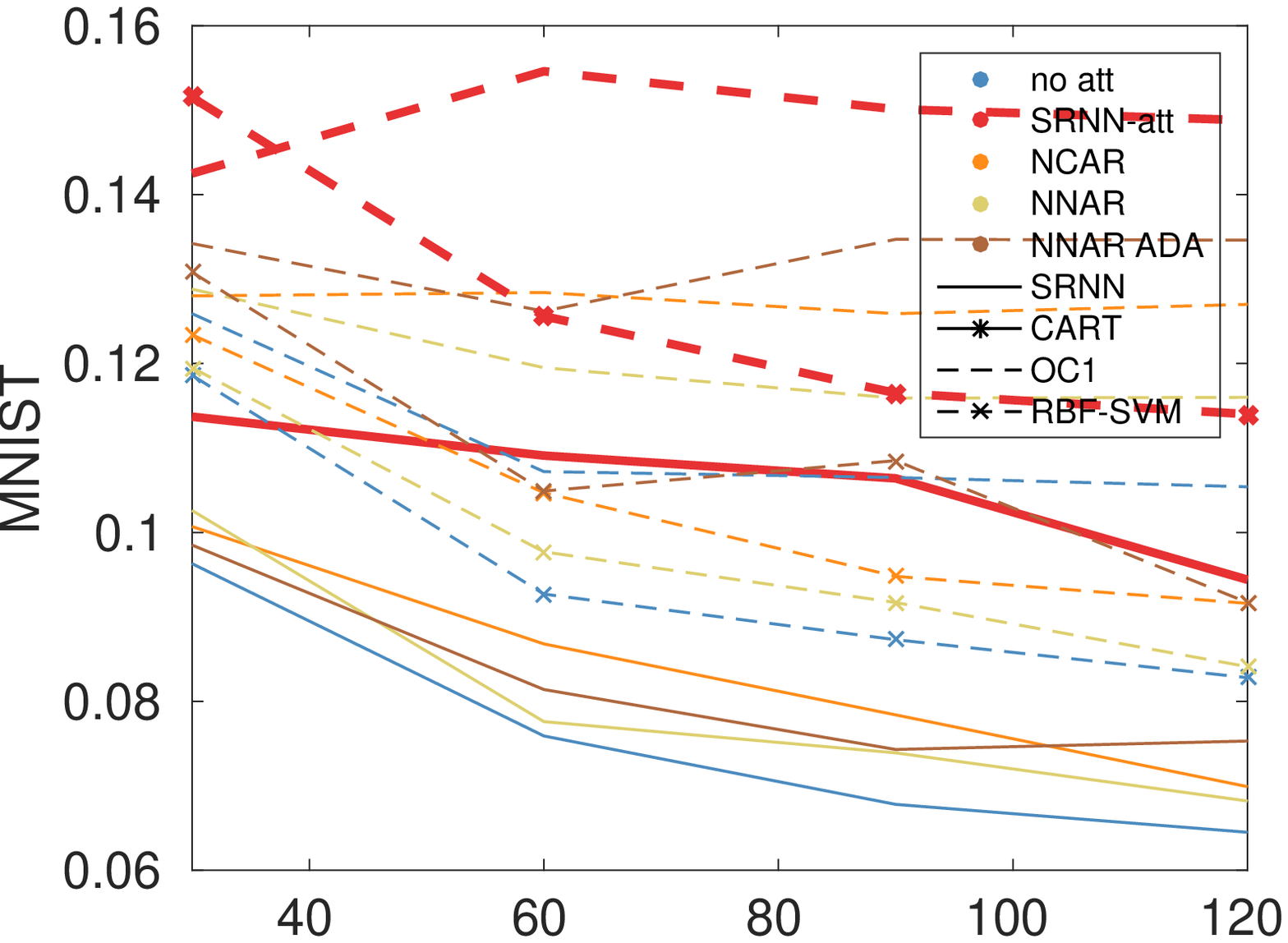}&
    \includegraphics*[width=0.5\linewidth]{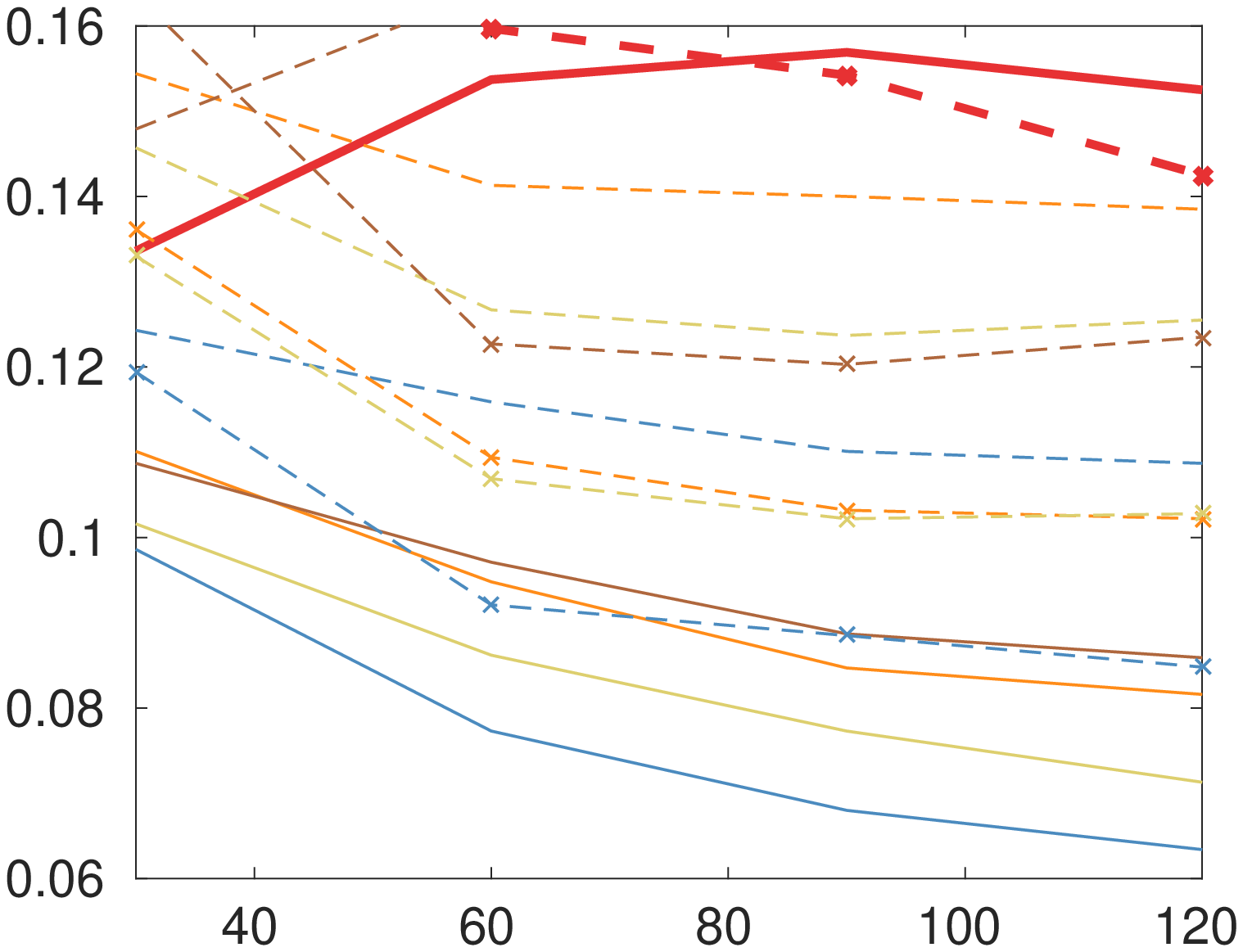}
    \\
    \includegraphics*[width=0.5\linewidth]{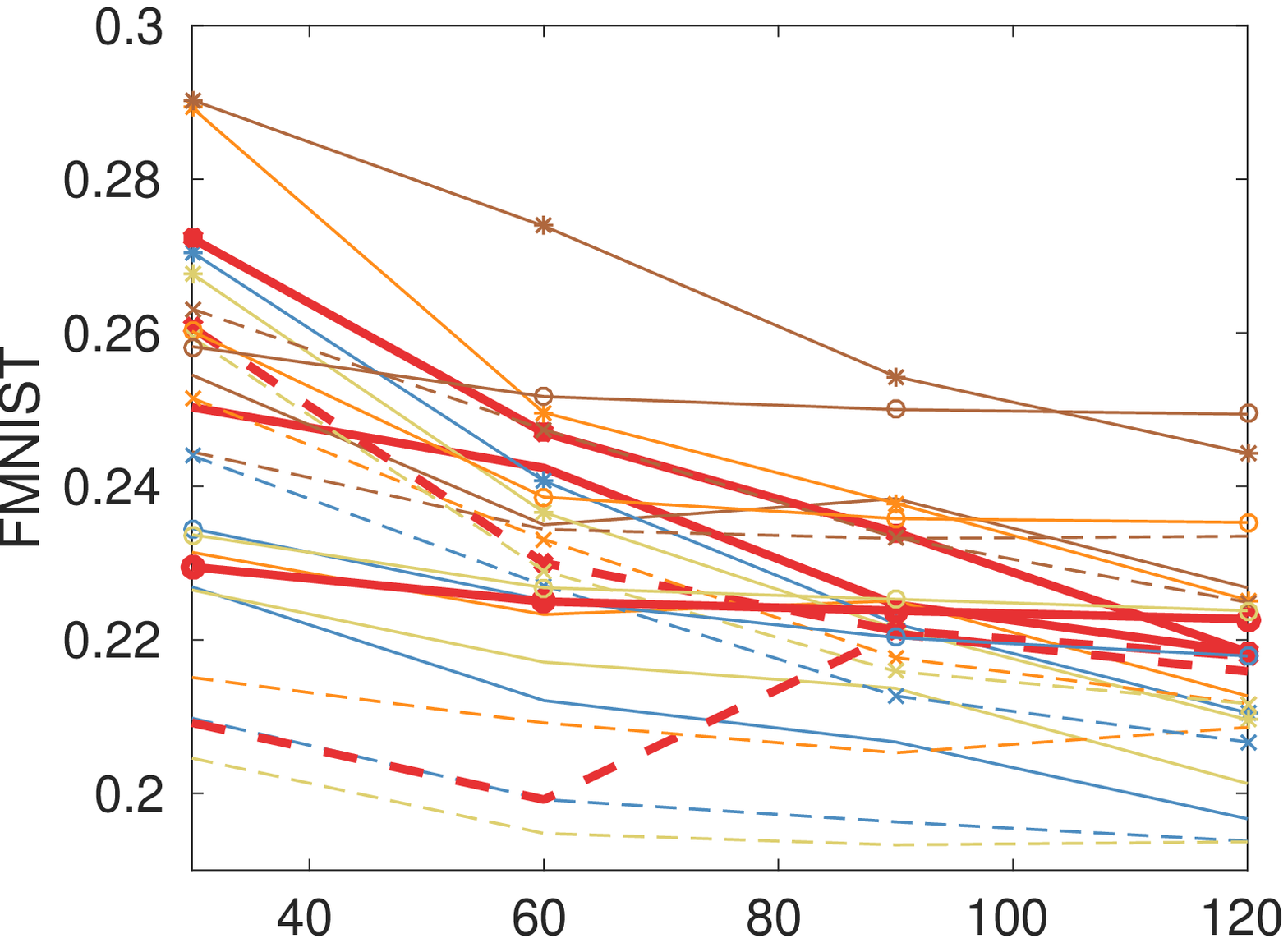}&
    \includegraphics*[width=0.5\linewidth]{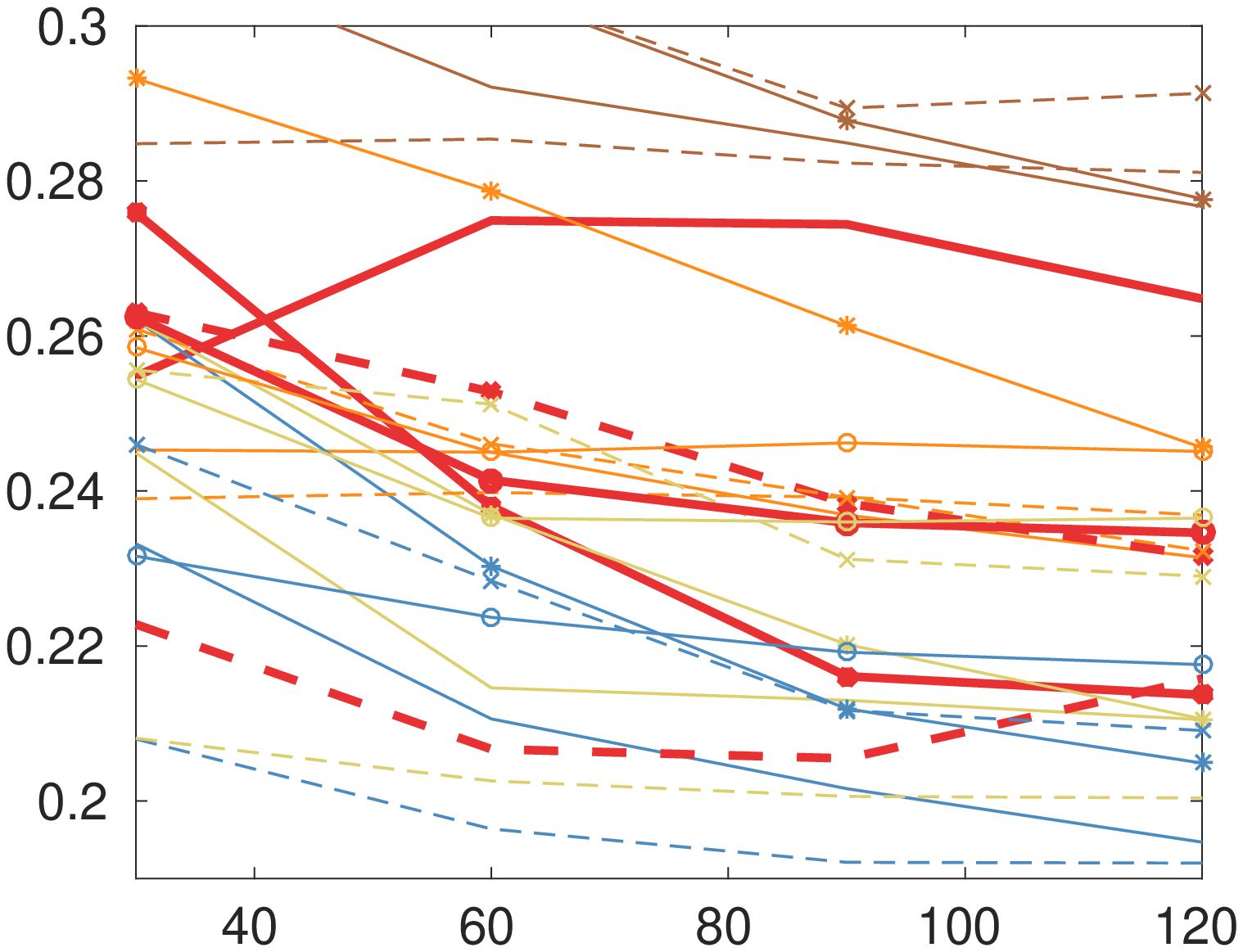}
    \\
    \includegraphics*[width=0.5\linewidth]{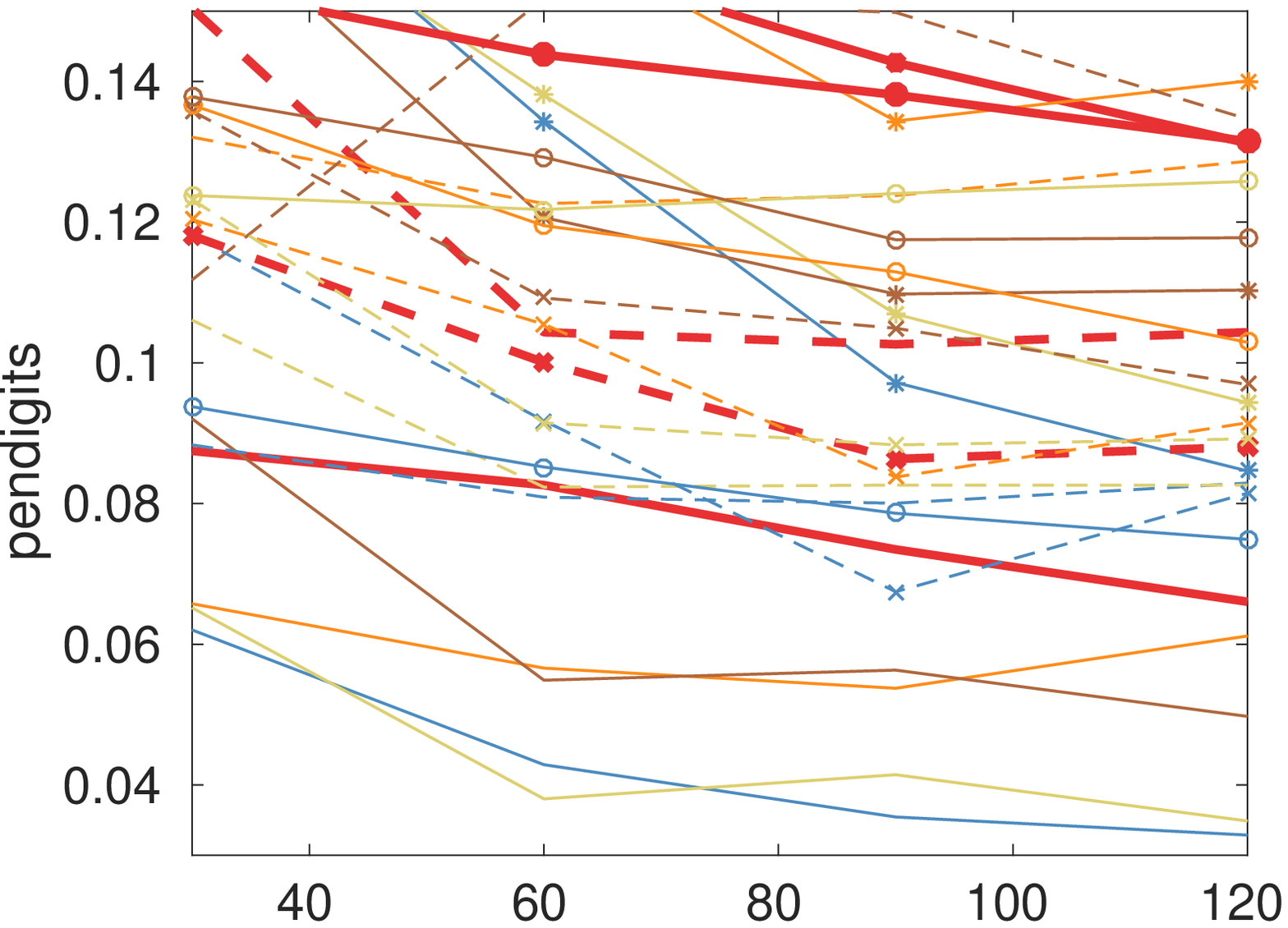}&
    \includegraphics*[width=0.5\linewidth]{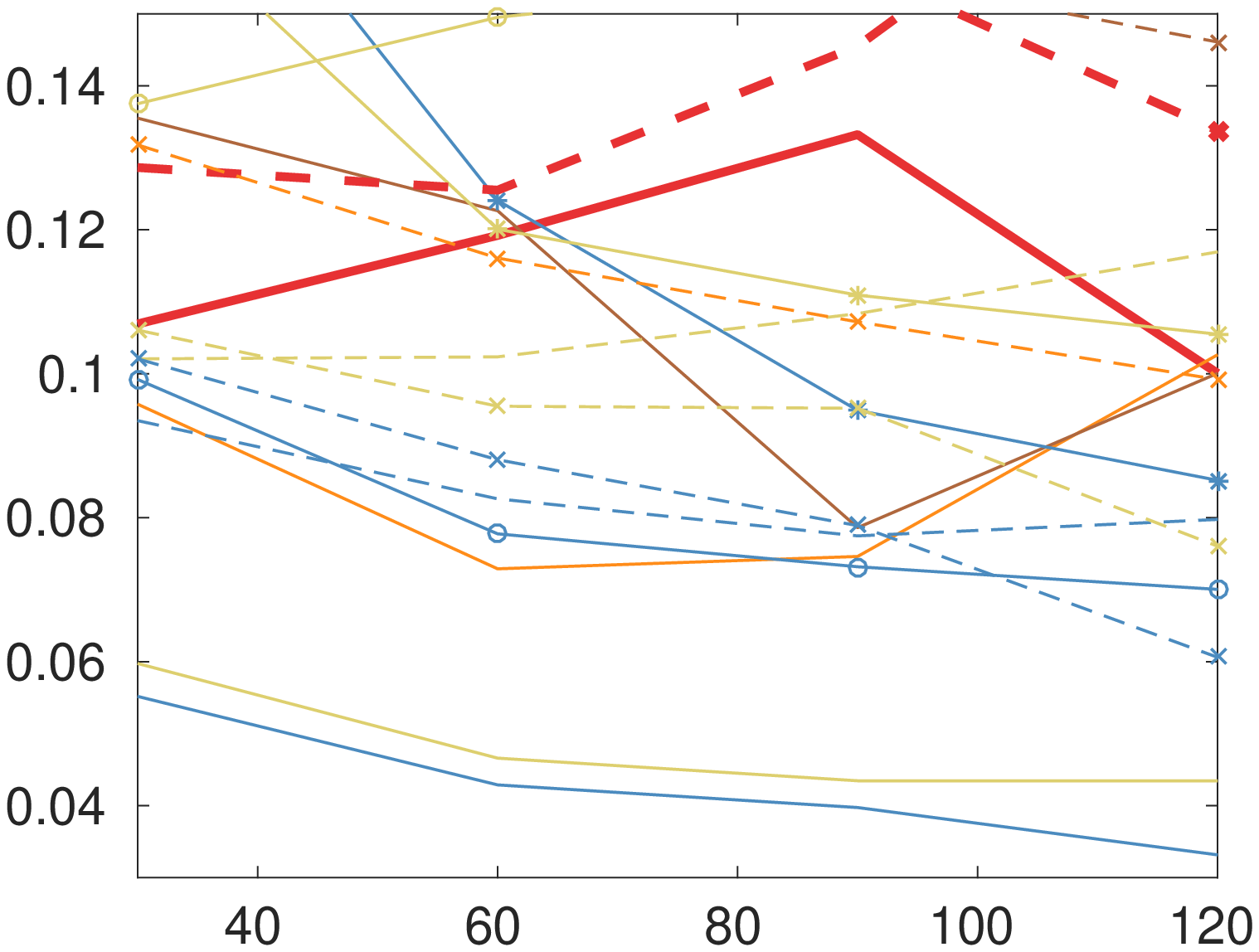}
    \\
  \end{tabular}
  \caption{Different attack techniques and models are presented for each dataset. Colors and curve shapes present the attack techniques and models, respectively. model Vertical axis presents error ratio over the testset and horizontal axis presents number of base models. For this figure, second setup was used.}
\label{fig:attacks-all2-80}
\end{figure}
\subsection{Defense Experiments}
In the experiments of this section, we further investigated the resilience of the proposed defense technique against SRNN-att with $60$ centroids. Figure \ref{fig:defs-20-60} presents results of this experiments. $20\%$ of each dataset was used as the trainset.
\begin{figure}[!ht]
\centering
    \begin{tabular}{c@{}c@{}c@{}}
    $5\%$ attack & $10\%$ attack
    \\
    \includegraphics*[width=0.5\linewidth]{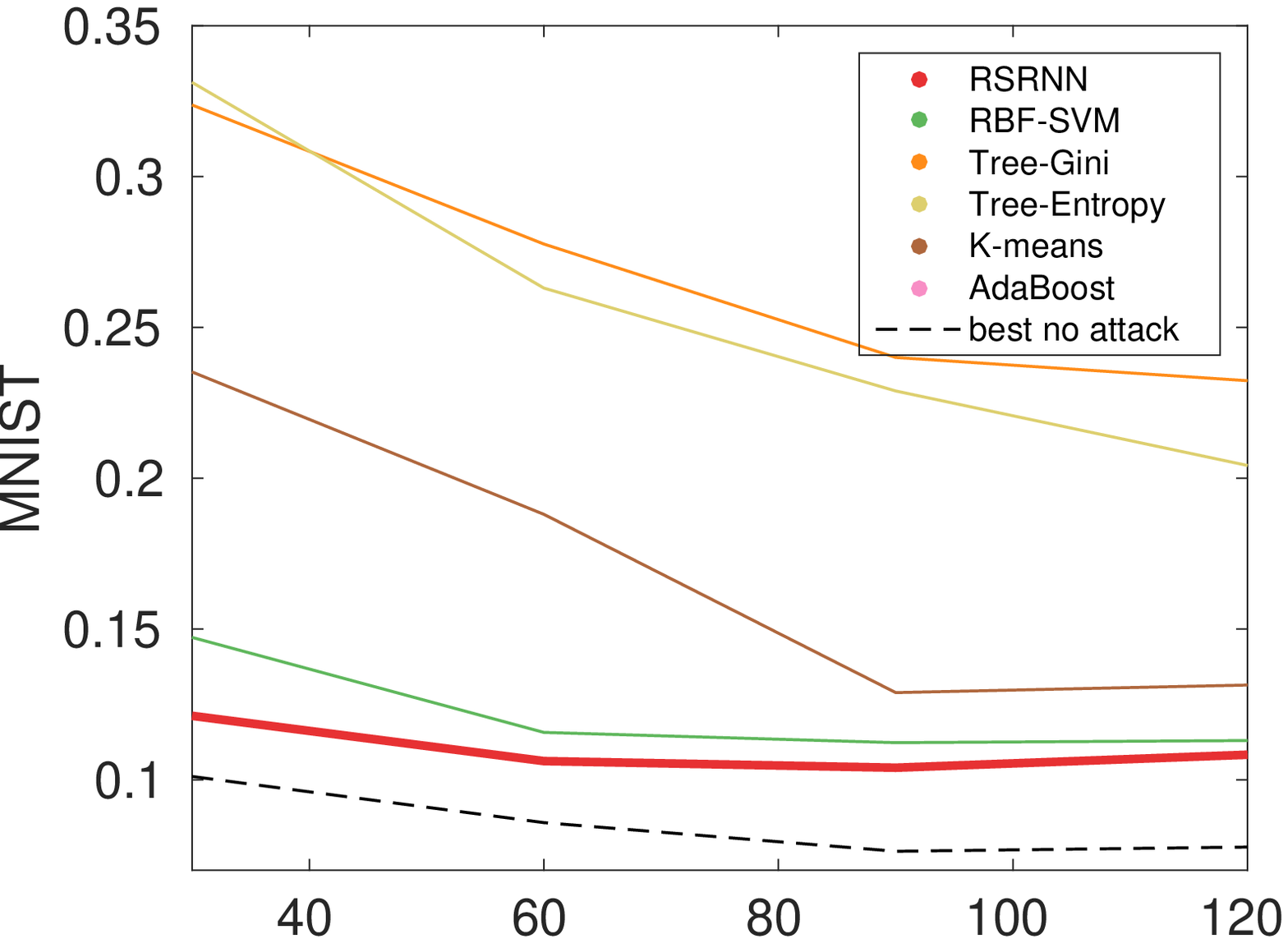}&
    \includegraphics*[width=0.5\linewidth]{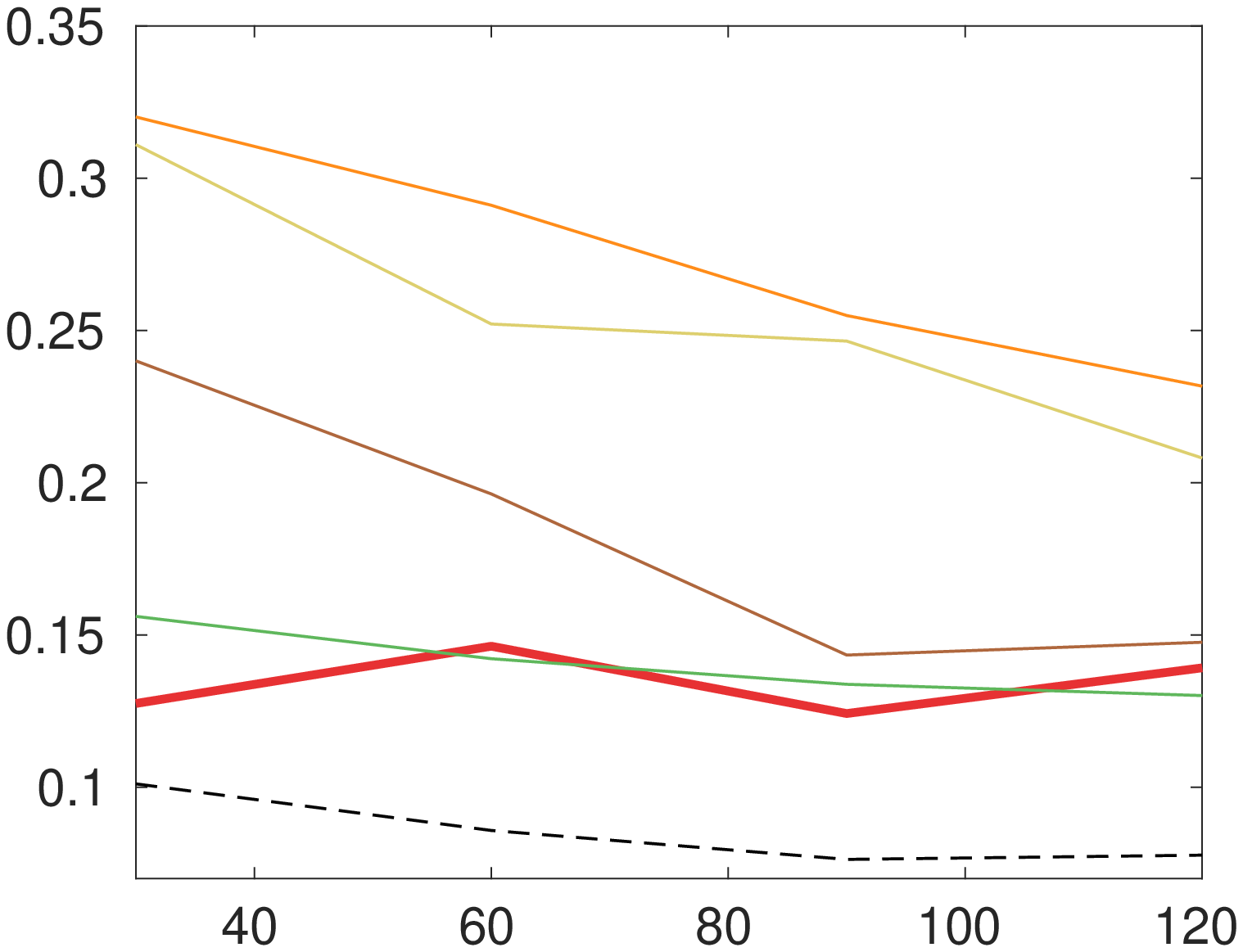}
    \\
    \includegraphics*[width=0.5\linewidth]{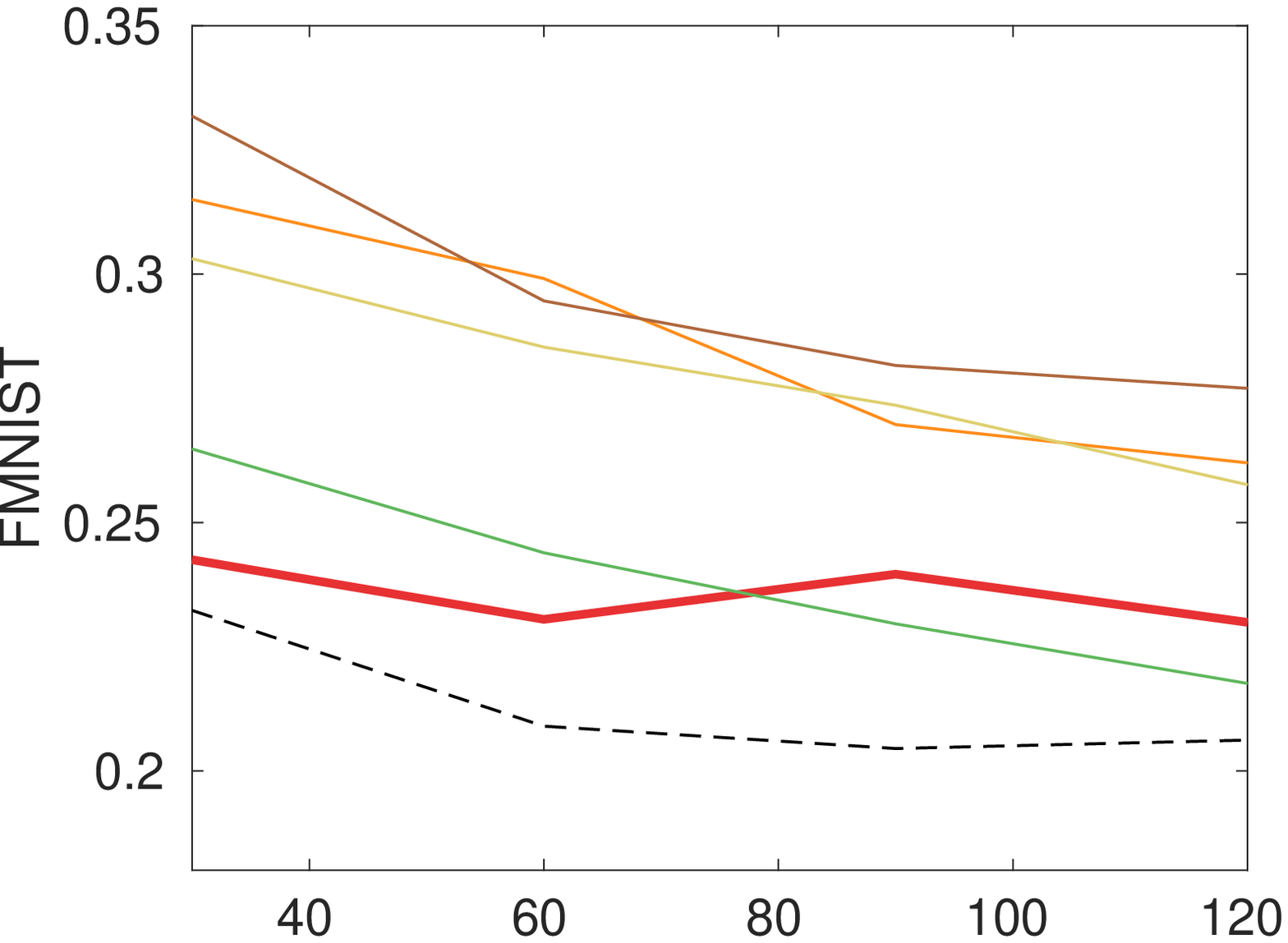}&
    \includegraphics*[width=0.5\linewidth]{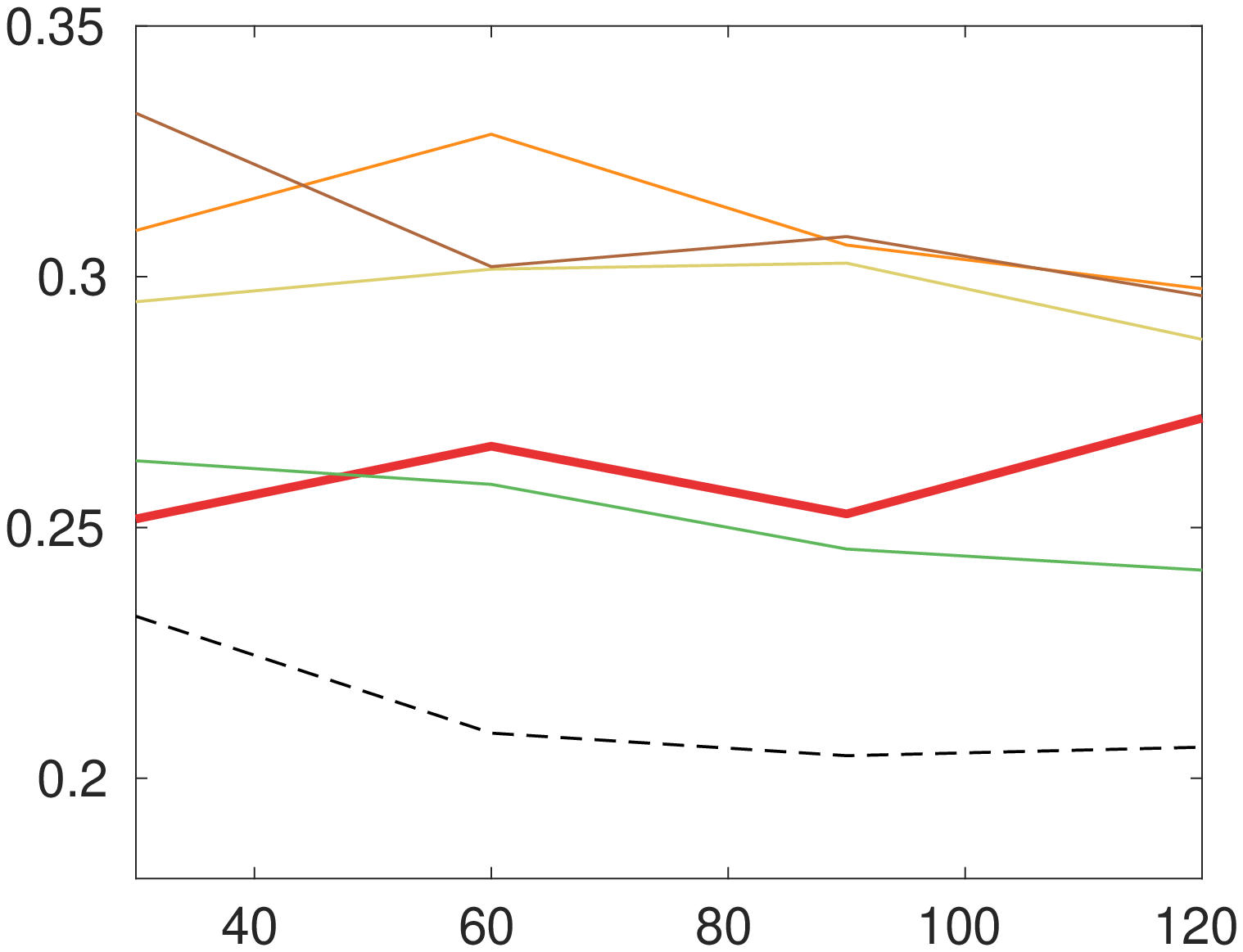}
    \\
    \includegraphics*[width=0.5\linewidth]{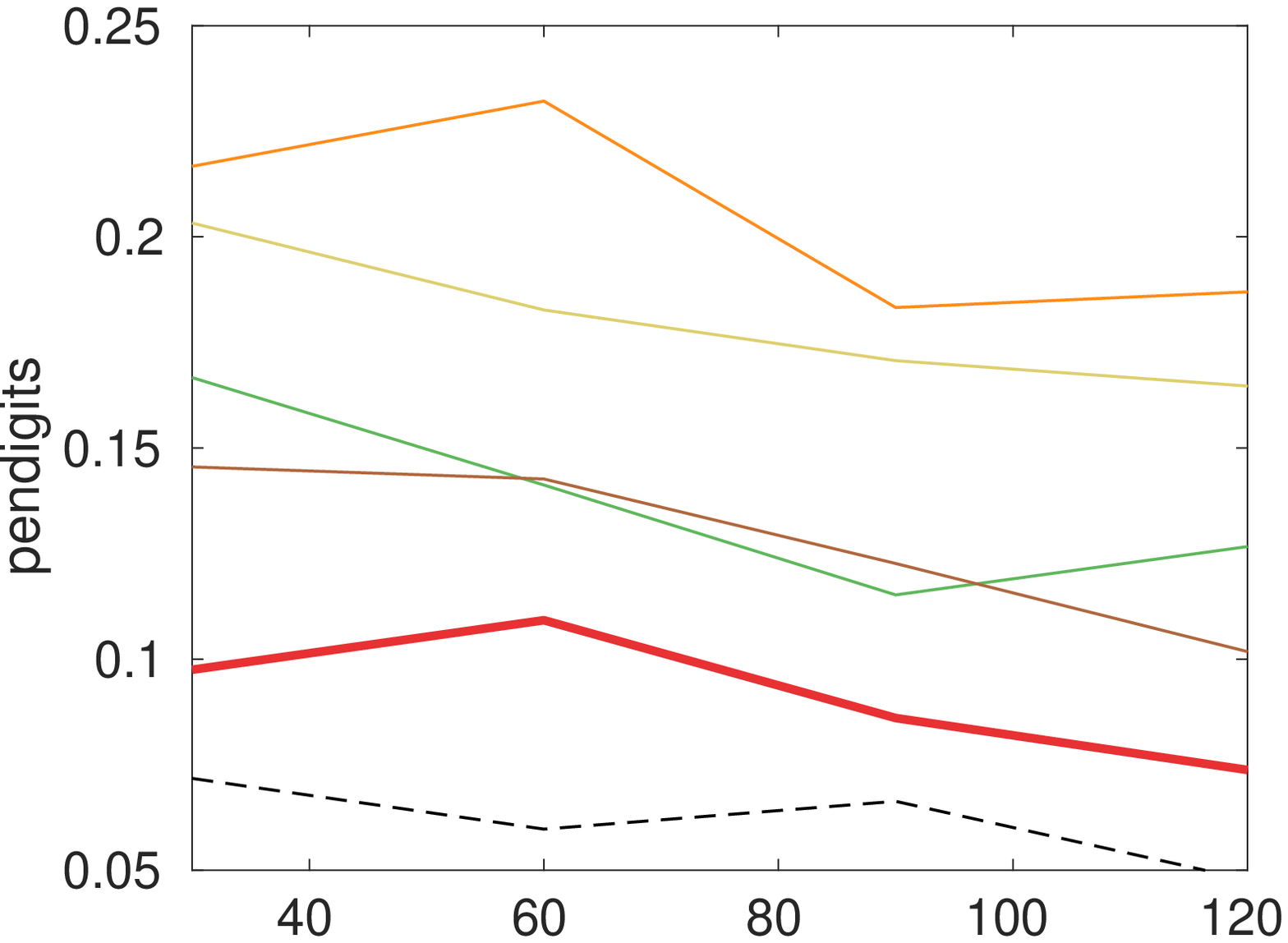}&
    \includegraphics*[width=0.5\linewidth]{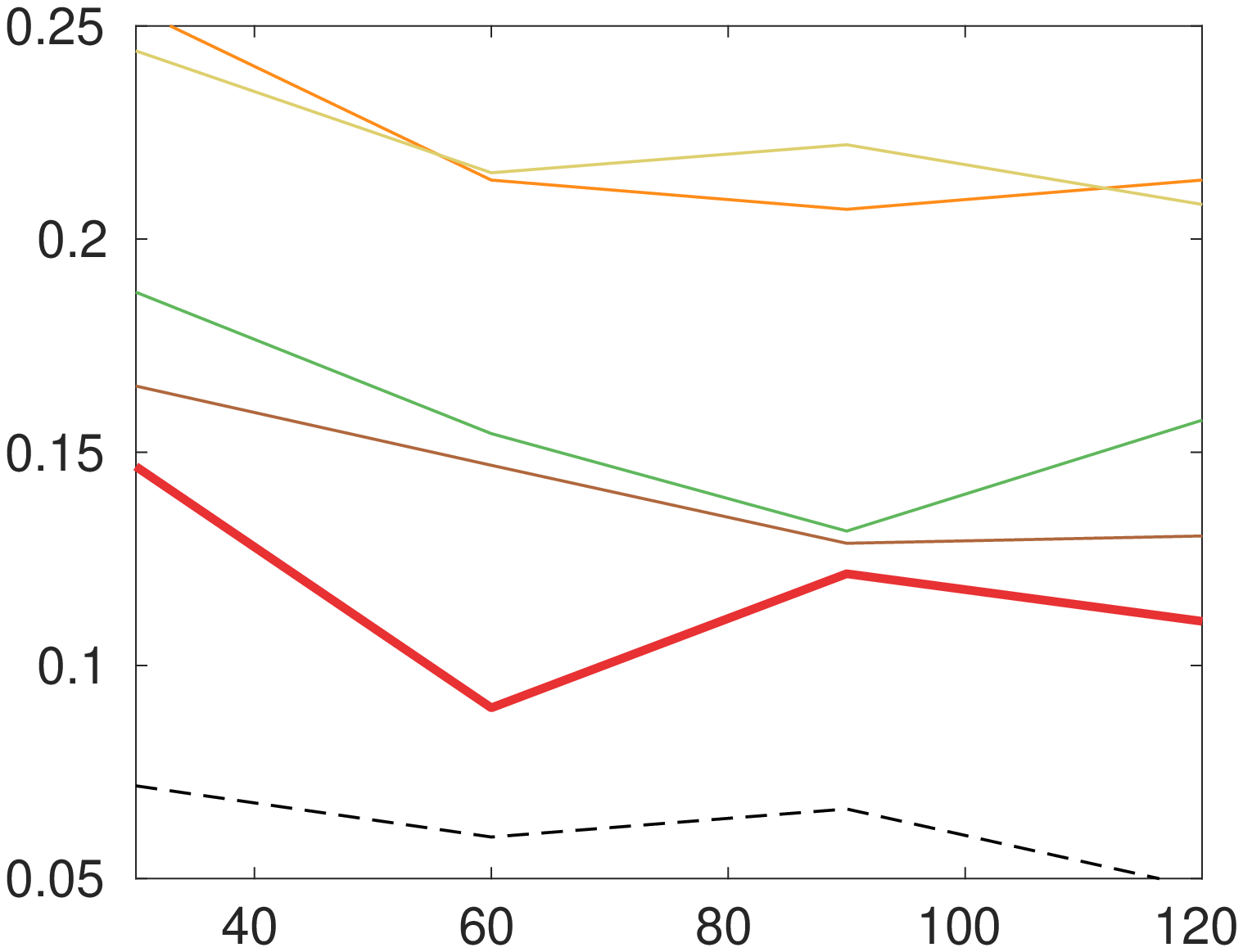}
    \\
    \includegraphics*[width=0.5\linewidth]{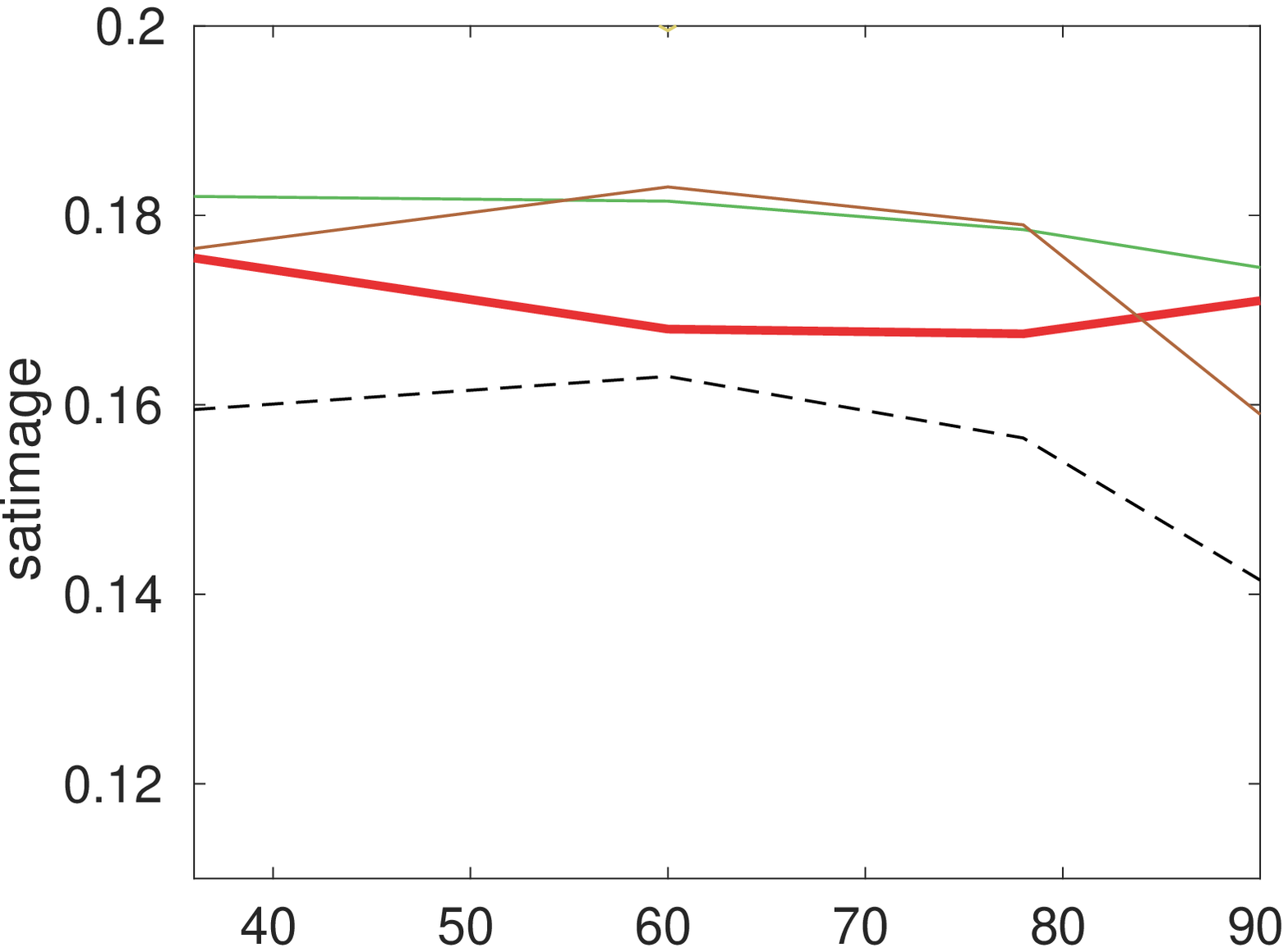}&
    \includegraphics*[width=0.5\linewidth]{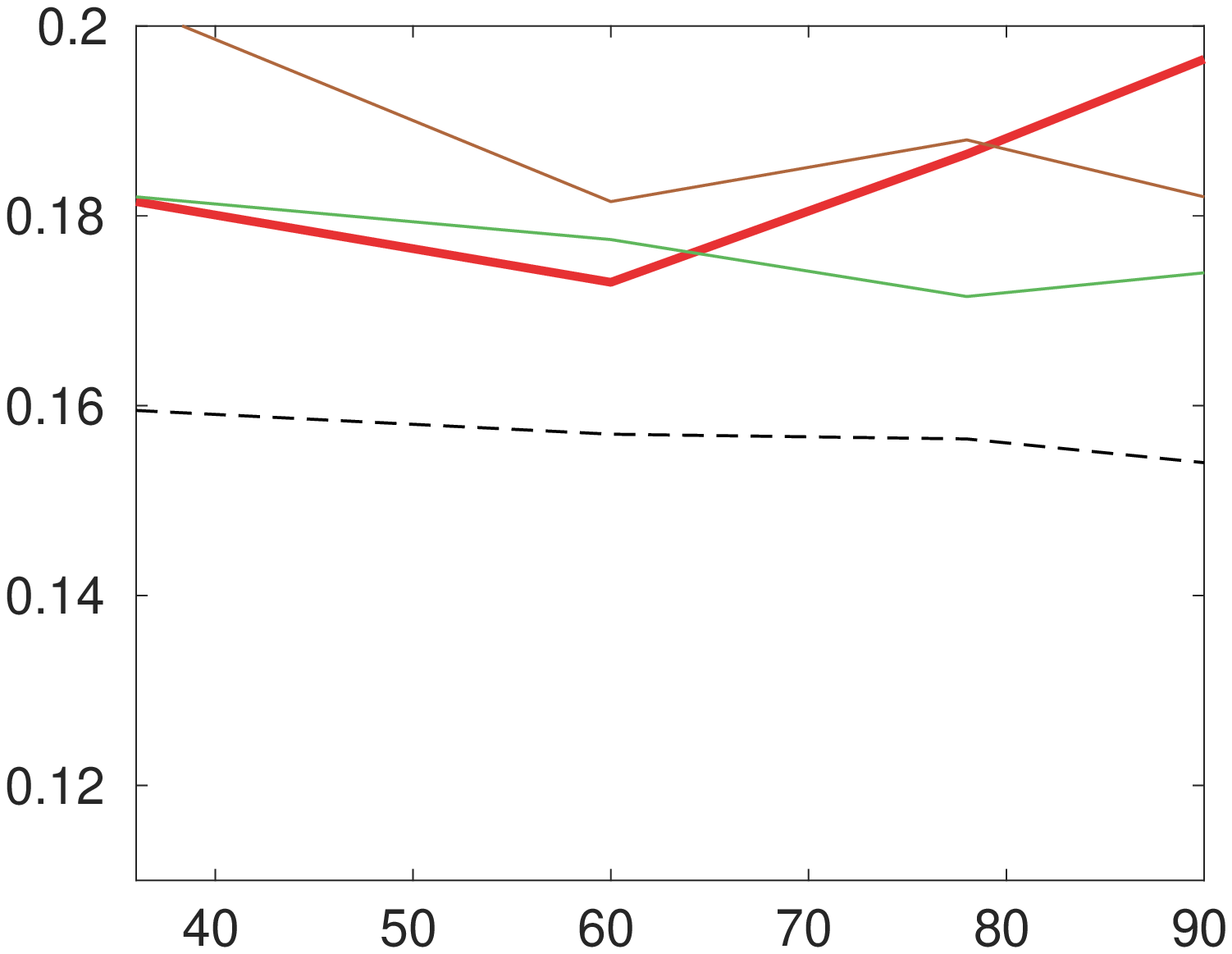}
    \\
  \end{tabular}
  \caption{Different defenses against SRNN-att. SRNN-att model was trained with 60 centroids. Vertical axis shows test error ratio and horizontal axis represents number of basis models. First setup for datasets was used.}
\label{fig:defs-20-60}
\end{figure}

Figure \ref{fig:defs-20-60} presents that the defense technique was able to achieve lower or as good as best of other models.

Figure \ref{fig:defs-80-30} presents experiments with $80\%$ of datasets used as trainset and $30$ centroids for SRNN-att.

\begin{figure}[!ht]
\centering
    \begin{tabular}{c@{}c@{}c@{}}
    $5\%$ attack & $10\%$ attack
    \\
    \includegraphics*[width=0.5\linewidth]{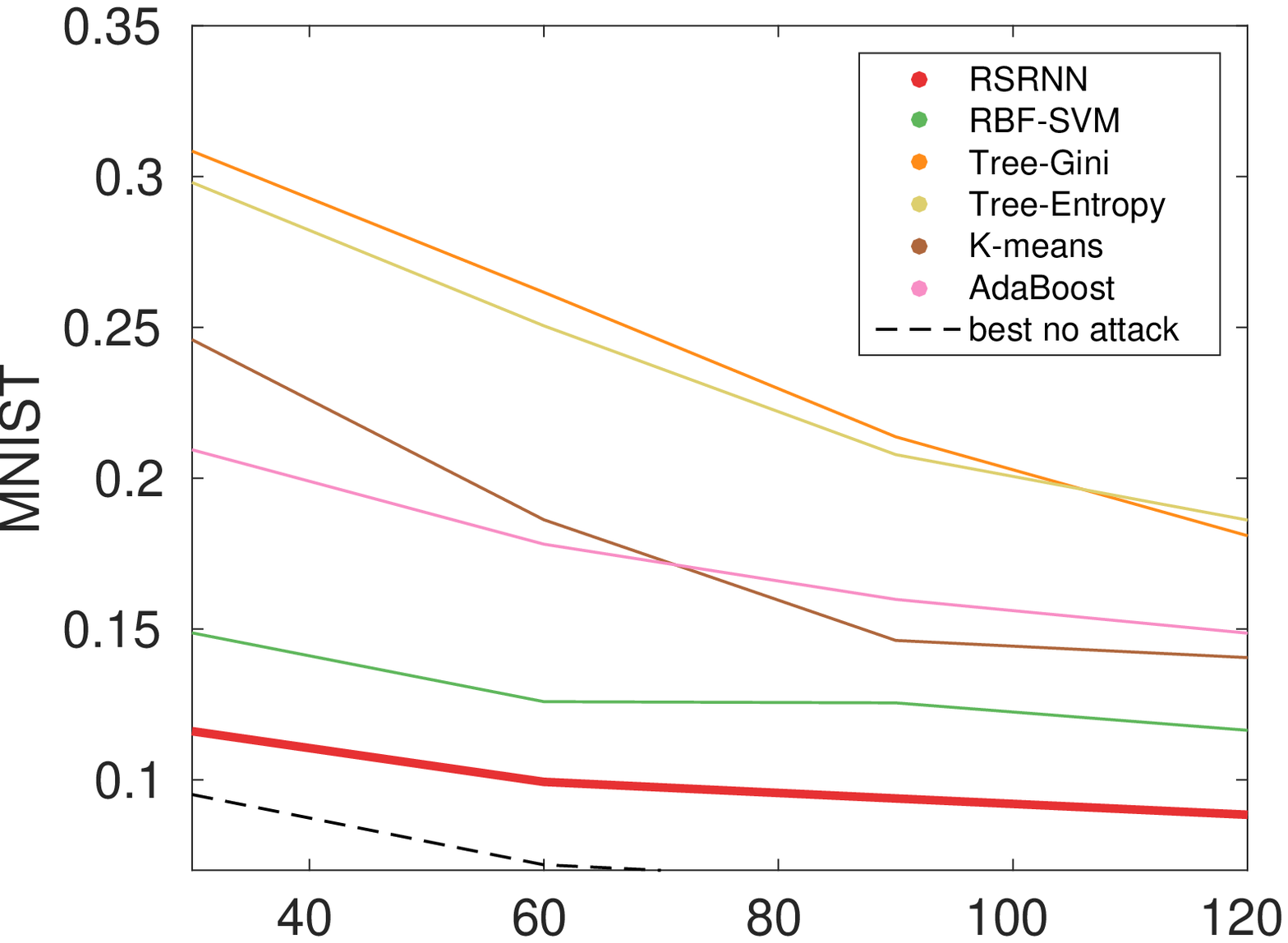}&
    \includegraphics*[width=0.5\linewidth]{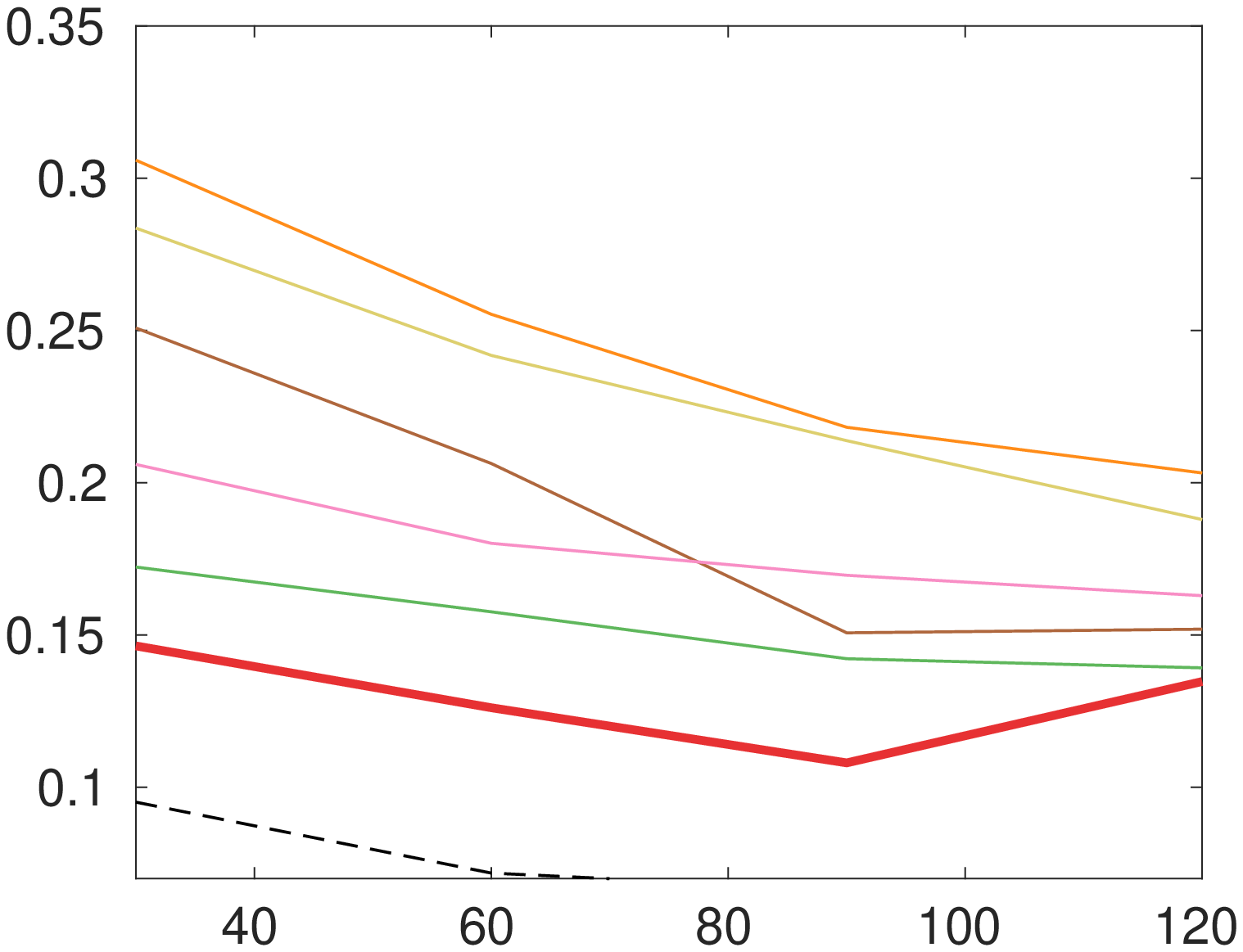}
    \\
    \includegraphics*[width=0.5\linewidth]{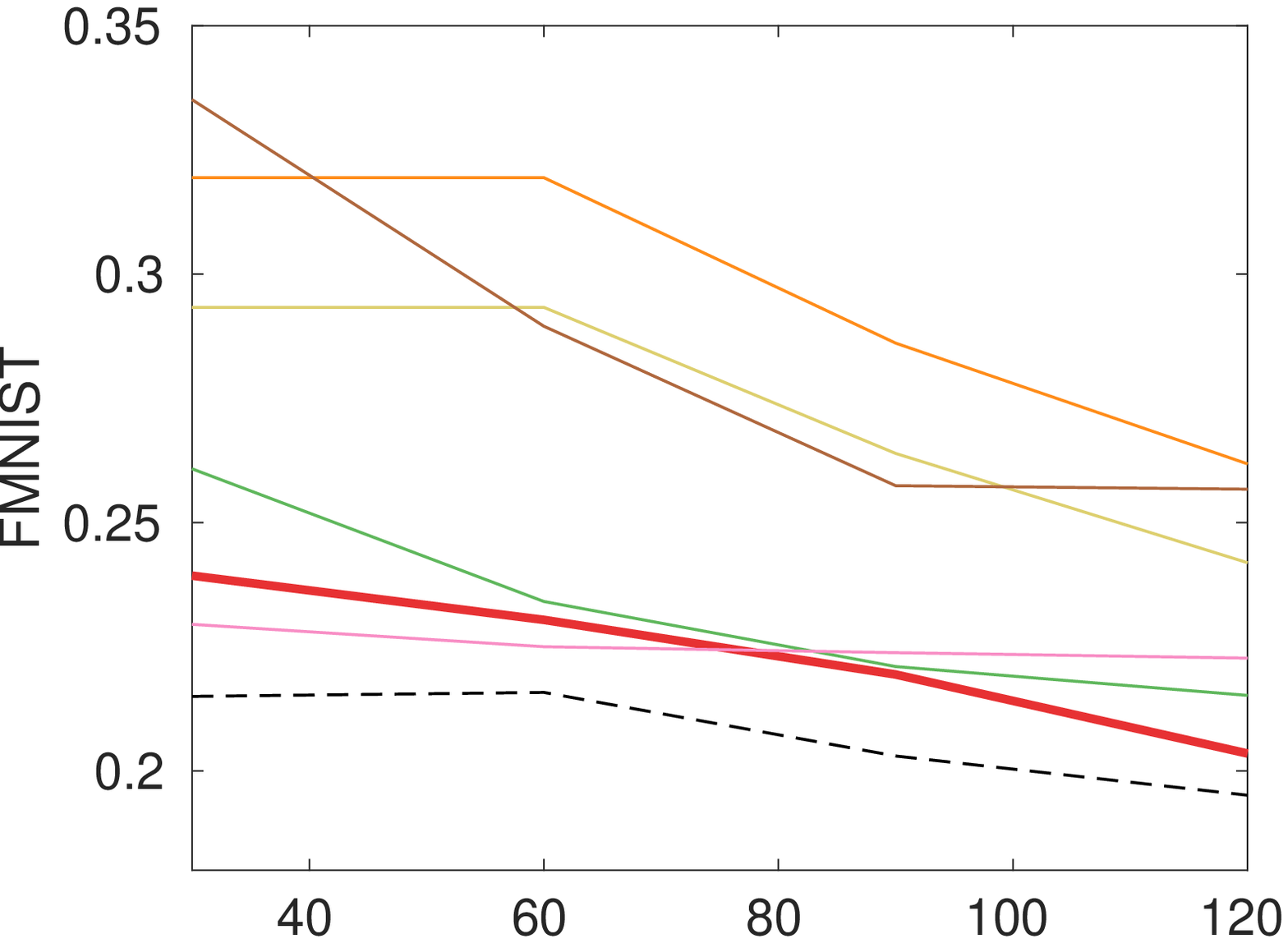}&
    \includegraphics*[width=0.5\linewidth]{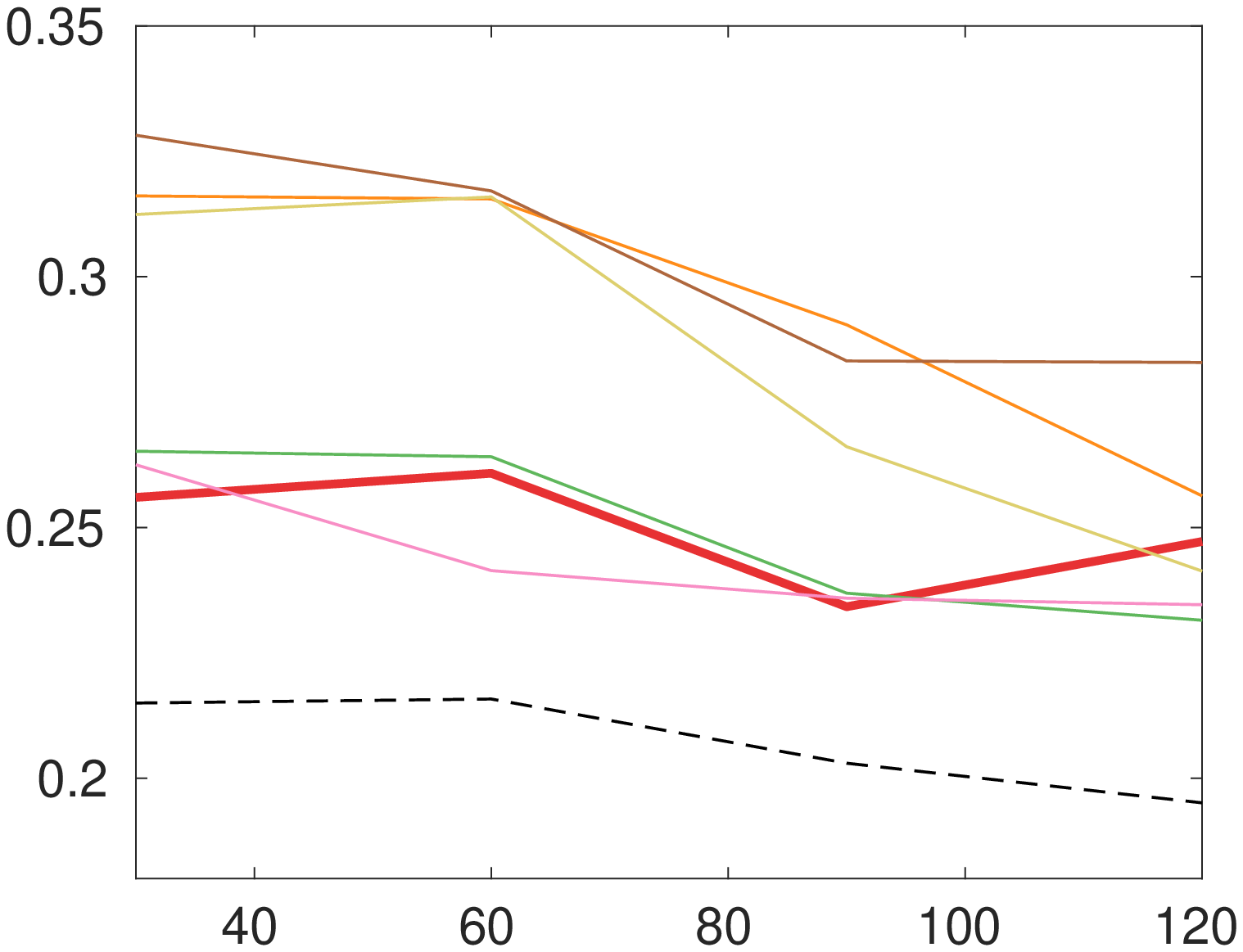}
    \\
    \includegraphics*[width=0.5\linewidth]{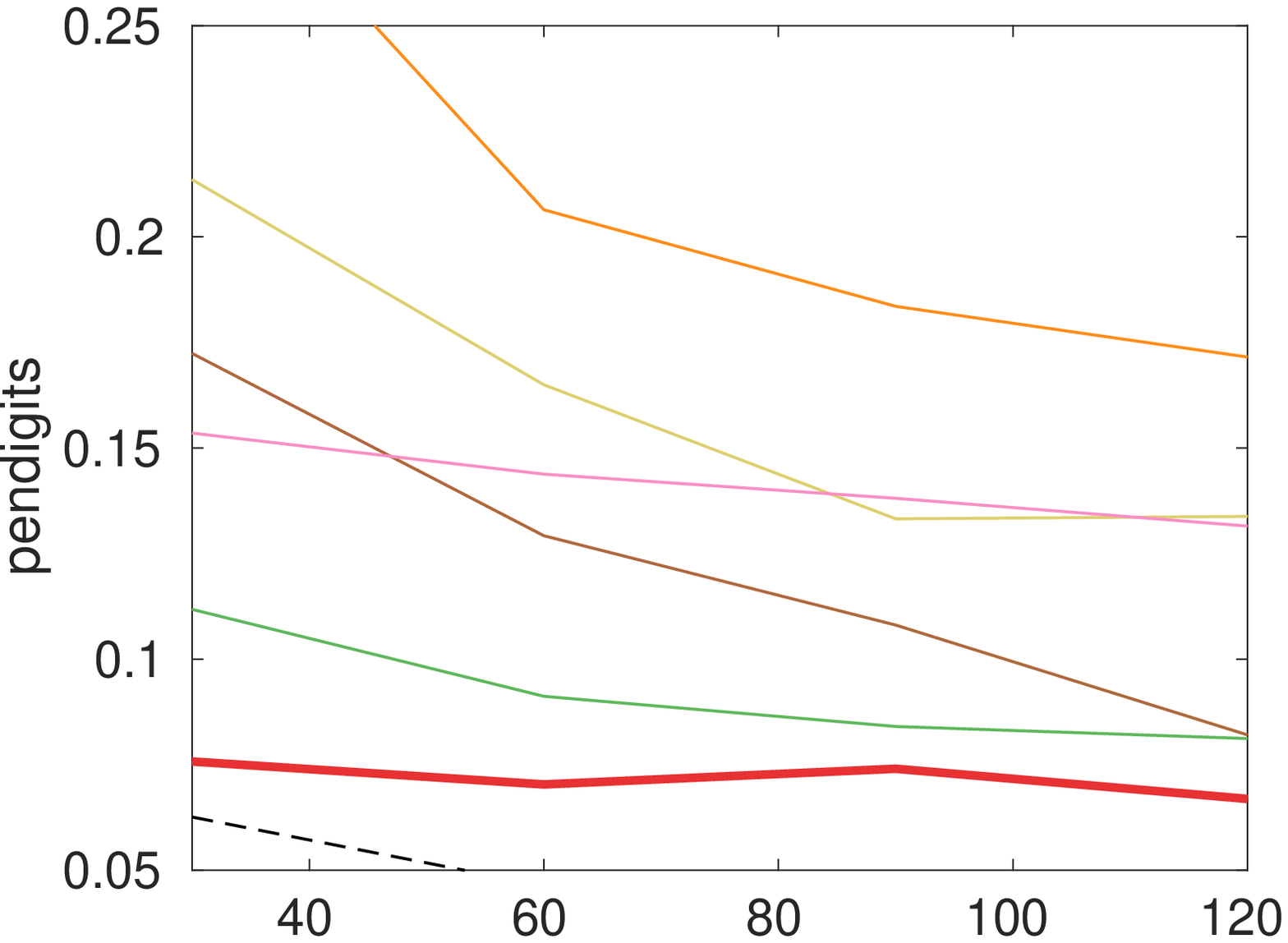}&
    \includegraphics*[width=0.5\linewidth]{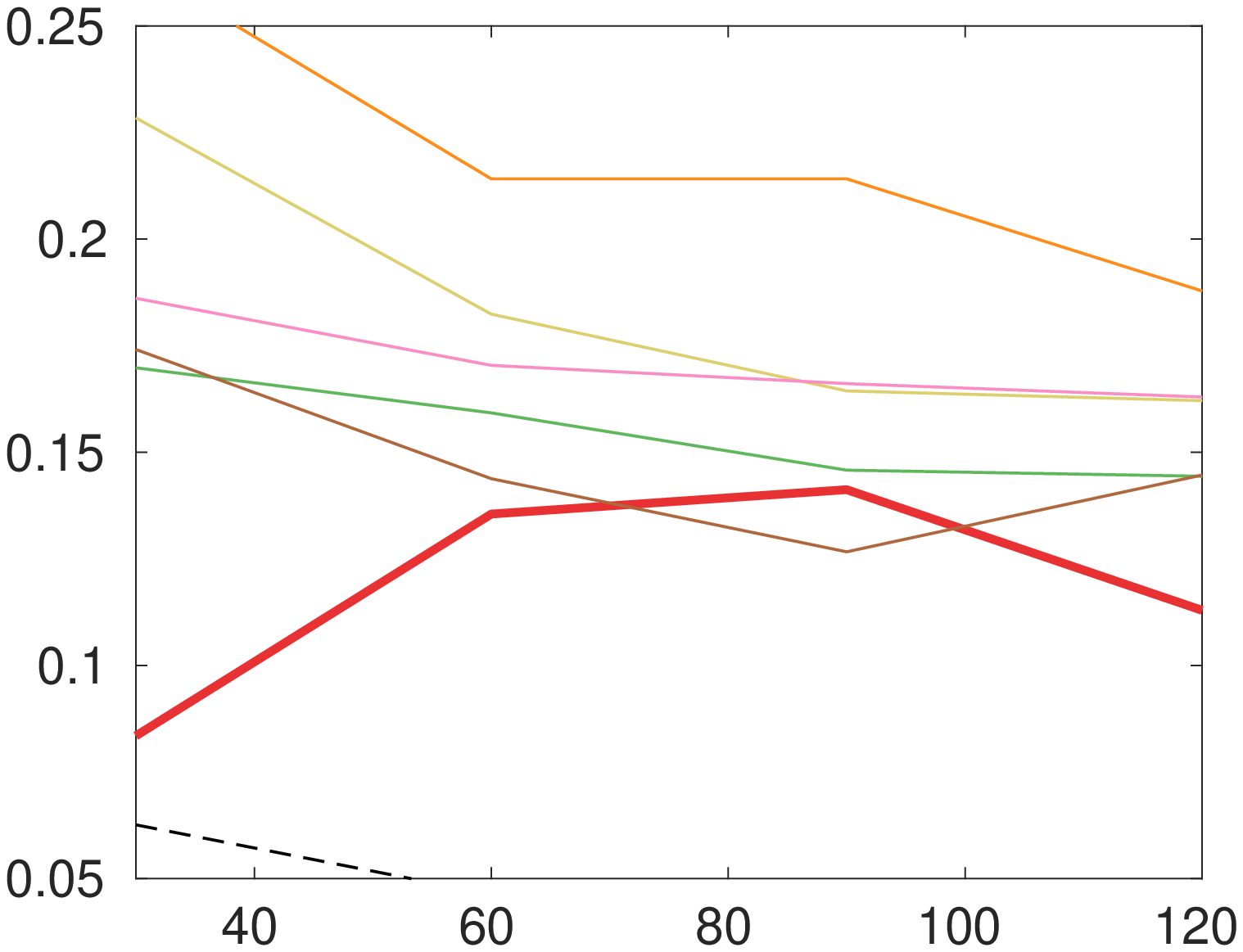}
    \\
  \end{tabular}
  \caption{Different defenses against SRNN-att. SRNN-att model was trained with 30 centroids. Vertical axis shows test error ratio and horizontal axis represents number of basis models. Second setup for datasets was used.}
\label{fig:defs-80-30}
\end{figure}
In experiments of figure \ref{fig:defs-80-30}, RSRNN was able to outperform other defense techniques by having smaller test error.
\subsection{Malicious Sample Detection}
As explained in the pruning sections of the main paper and supplementary materials, the algorithm is capable of detecting the malicious samples. In this part, detection ratio and true positive (TP) ratio of RSRNN are shown. Figure \ref{fig:TP-20-30} presents the ratio of malicious found samples and TP for figure 3 of the main paper. Figures \ref{fig:TP-80-30} and \ref{fig:TP-20-60} present the ratio of found malicious samples and TP for figures \ref{fig:defs-80-30} and \ref{fig:defs-20-60}. It is observable that the RSRNN was able to find up to $0.7$ of malicious samples with a true positive ratio of $0.5$.
\begin{figure}[!ht]
\centering
    \begin{tabular}{c@{}c@{}c@{}}
    ratios with $5\%$ attack & ratios with $10\%$ attack
    \\
    \includegraphics*[width=0.5\linewidth]{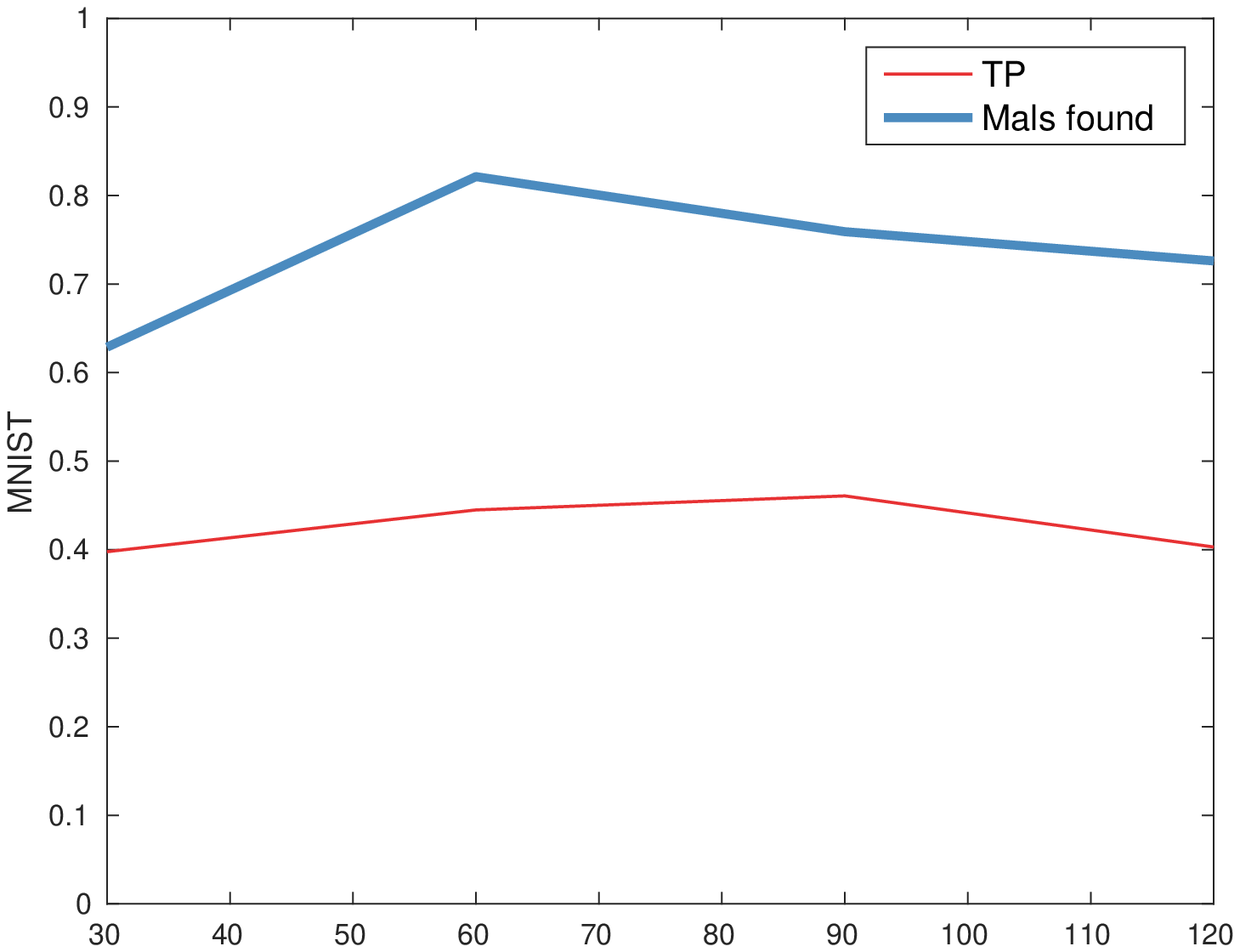}&
    \includegraphics*[width=0.5\linewidth]{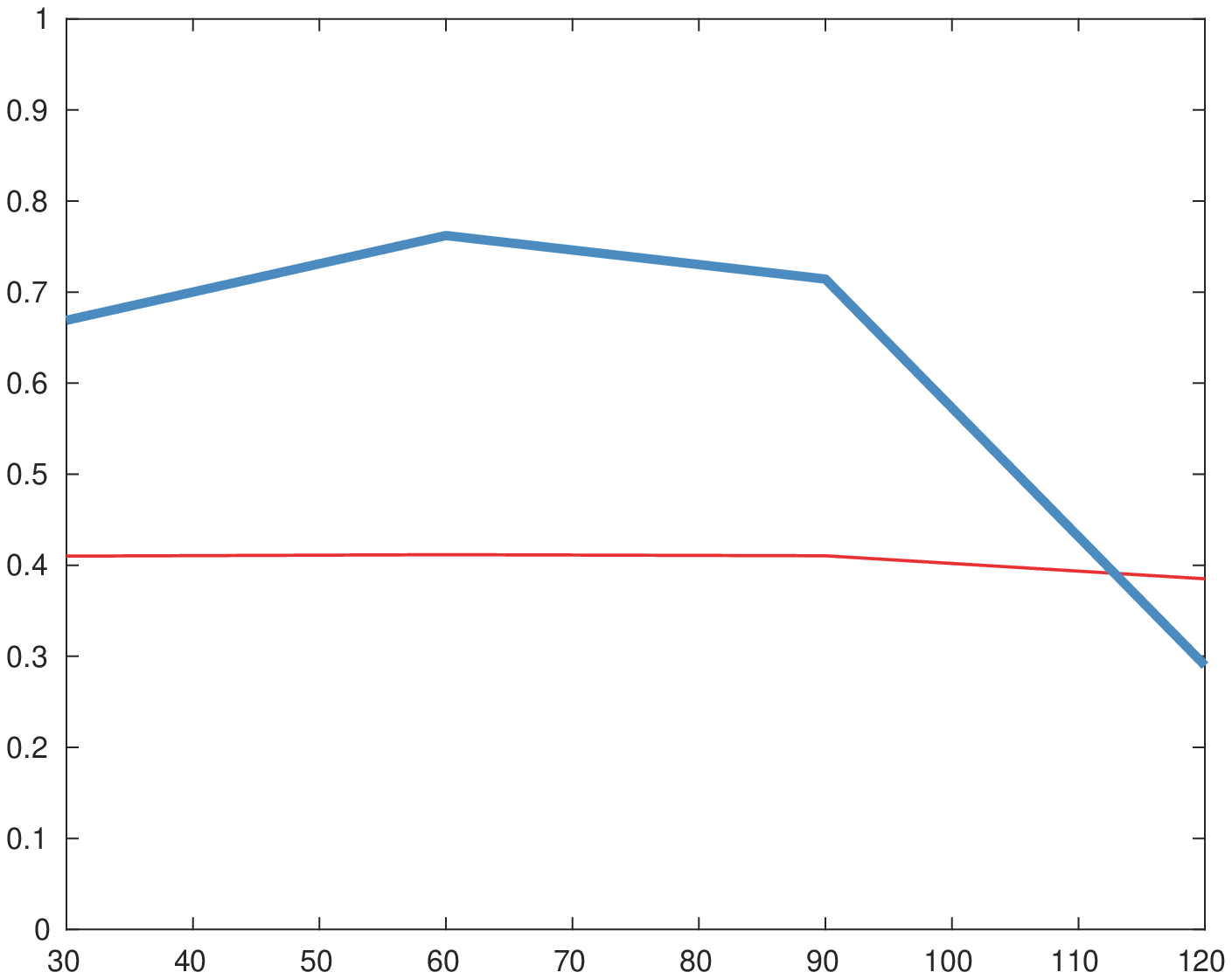}
    \\
    \includegraphics*[width=0.5\linewidth]{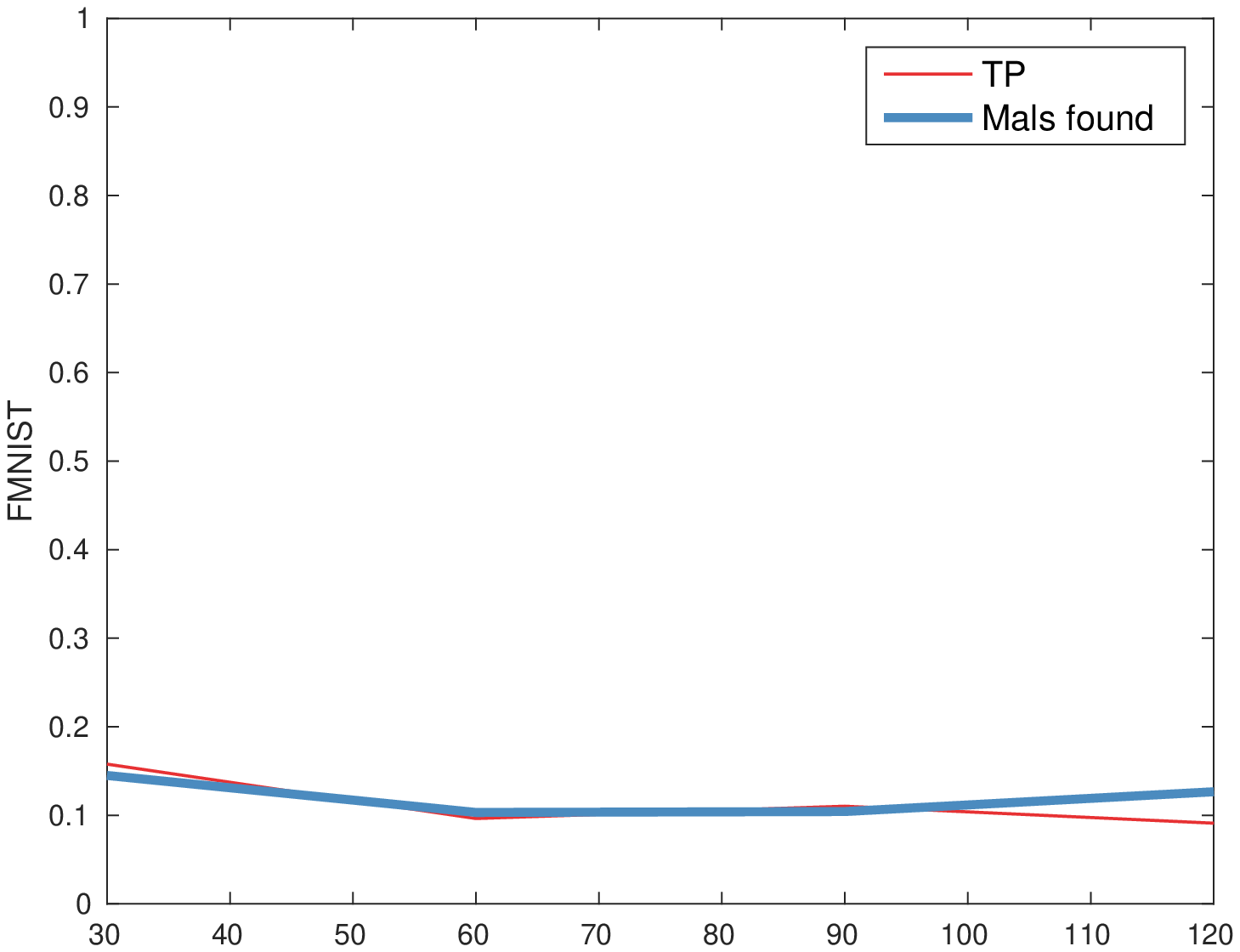}&
    \includegraphics*[width=0.5\linewidth]{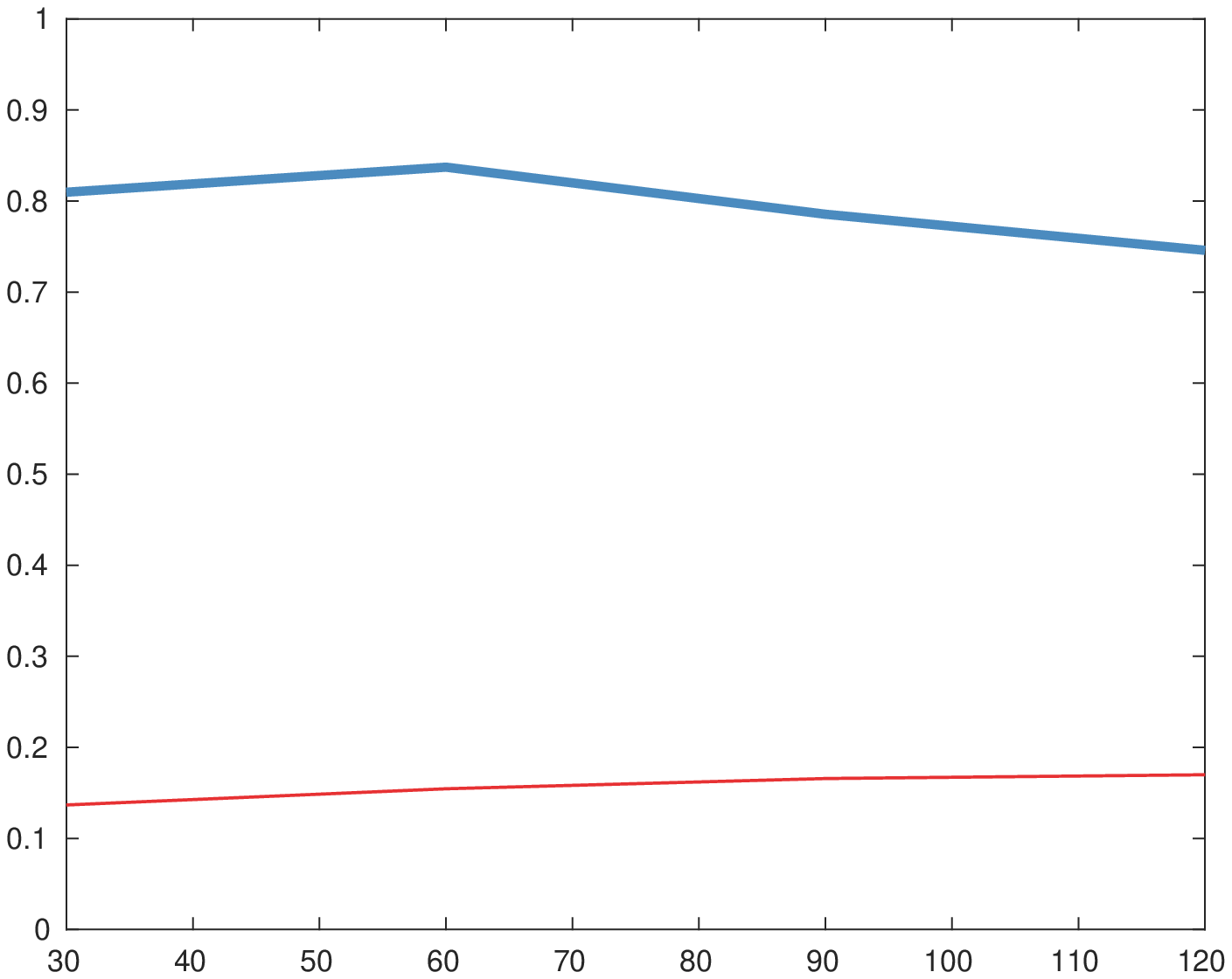}
    \\
    \includegraphics*[width=0.5\linewidth]{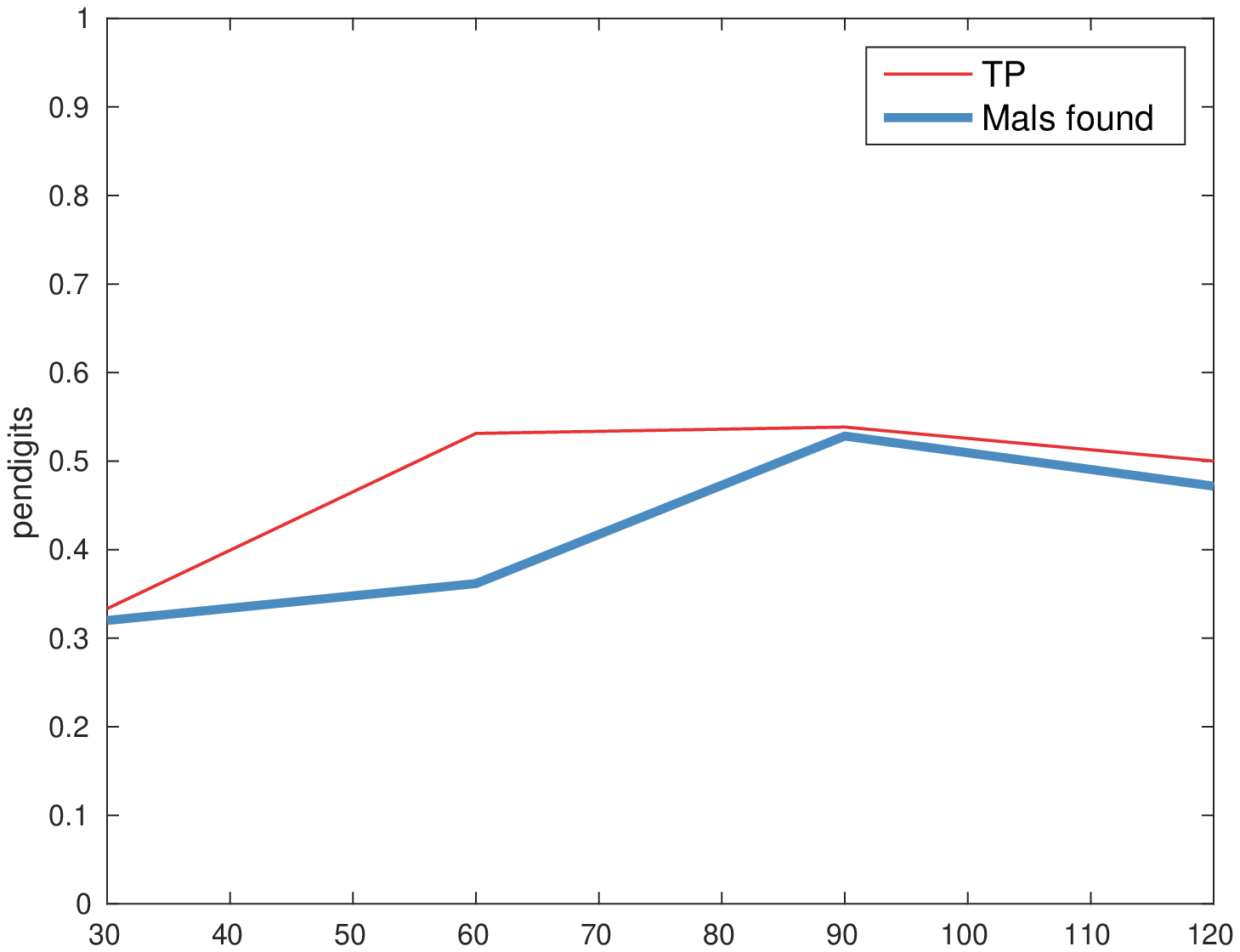}&
    \includegraphics*[width=0.5\linewidth]{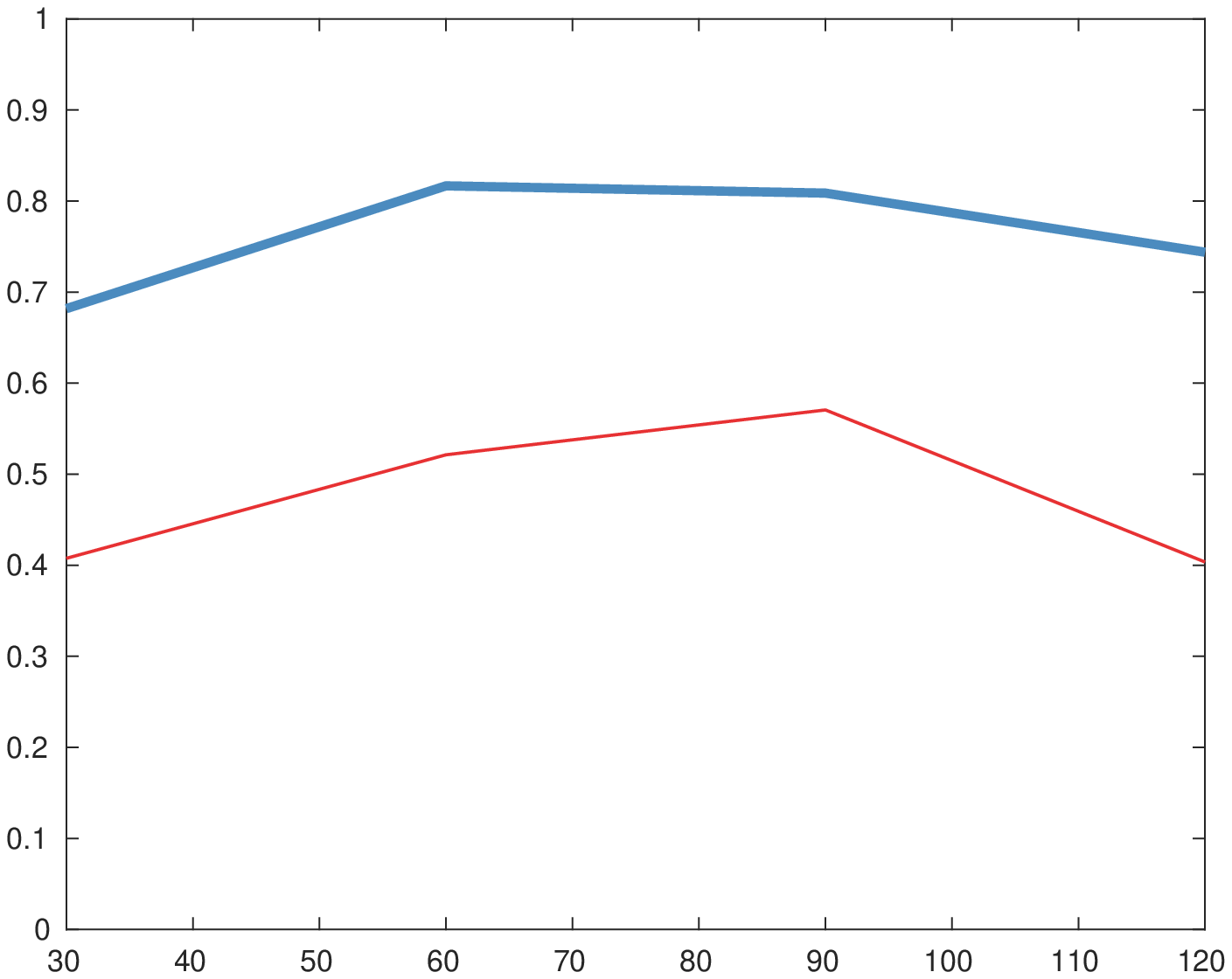}
    \\
    \includegraphics*[width=0.5\linewidth]{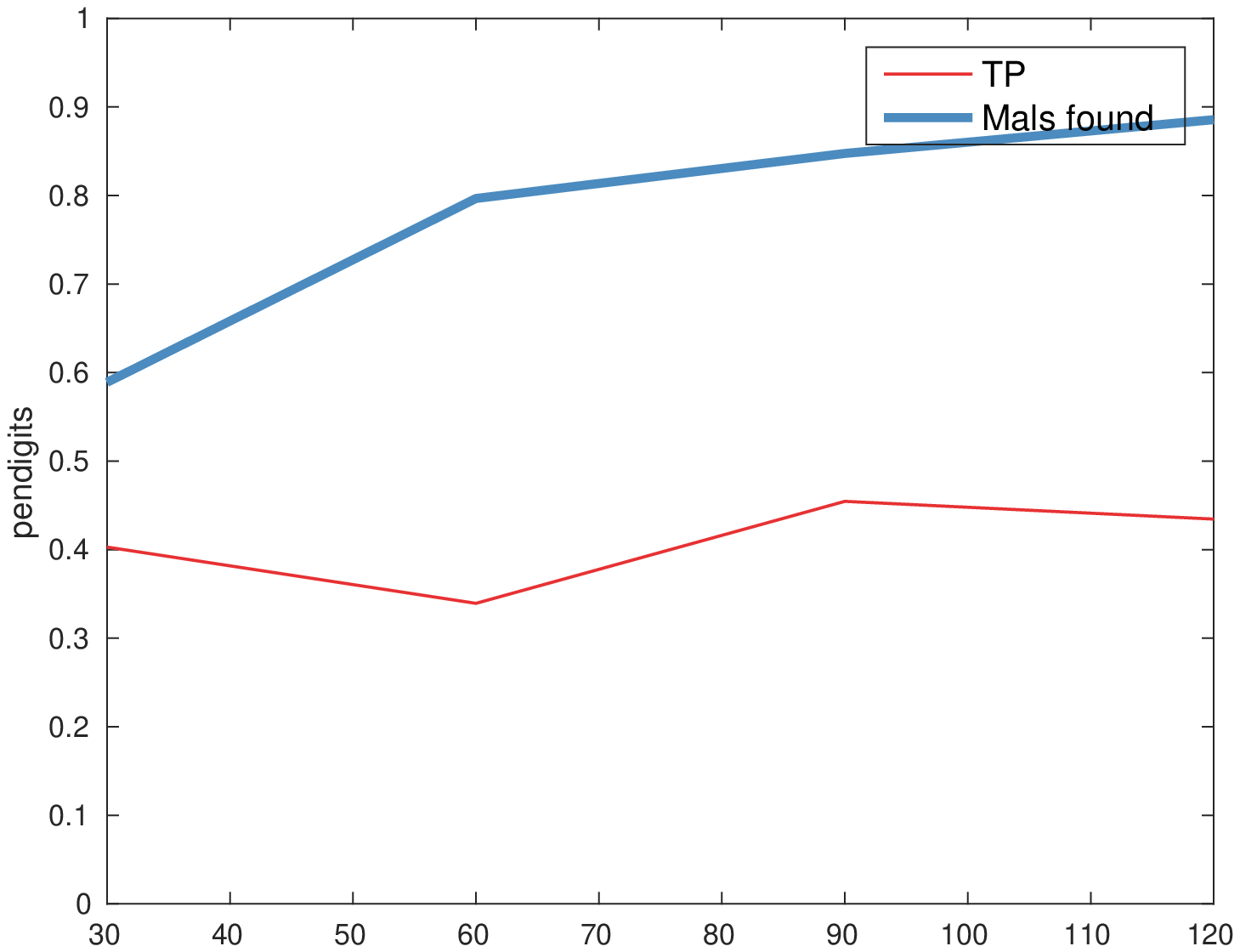}&
    \includegraphics*[width=0.5\linewidth]{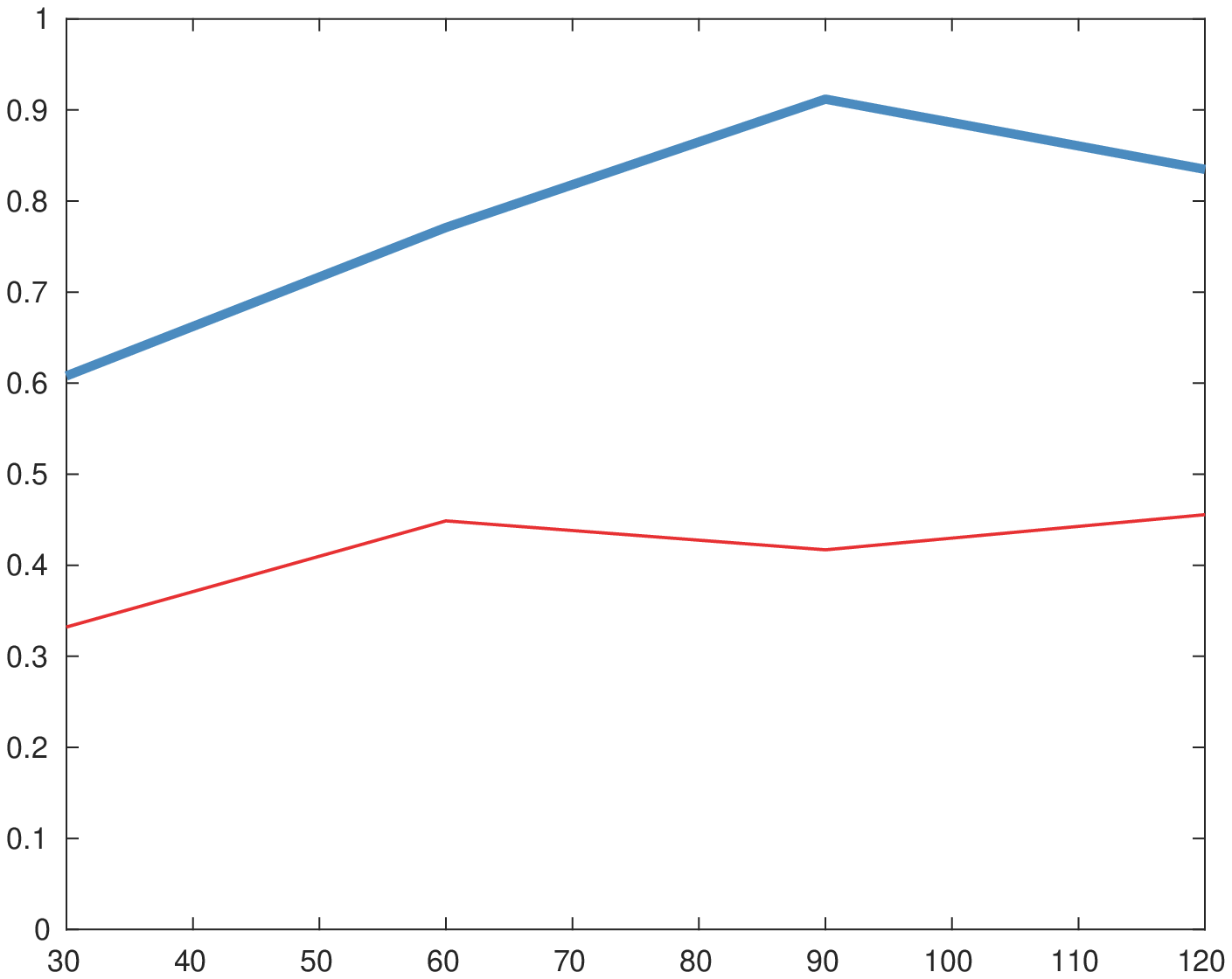}
    \\
  \end{tabular}
  \caption{The vertical axis represents the ratio. The horizontal axis presents the number of base models. The red line shows the TP and blue line shows the ratio of found malicious samples. The curves correspond to the same experiments as in figure 3 of the main paper.}
\label{fig:TP-20-30}
\end{figure} 

\begin{figure}[!ht]
\centering
    \begin{tabular}{c@{}c@{}c@{}}
    ratios with $5\%$ attack & ratios with $10\%$ attack
    \\
    \includegraphics*[width=0.5\linewidth]{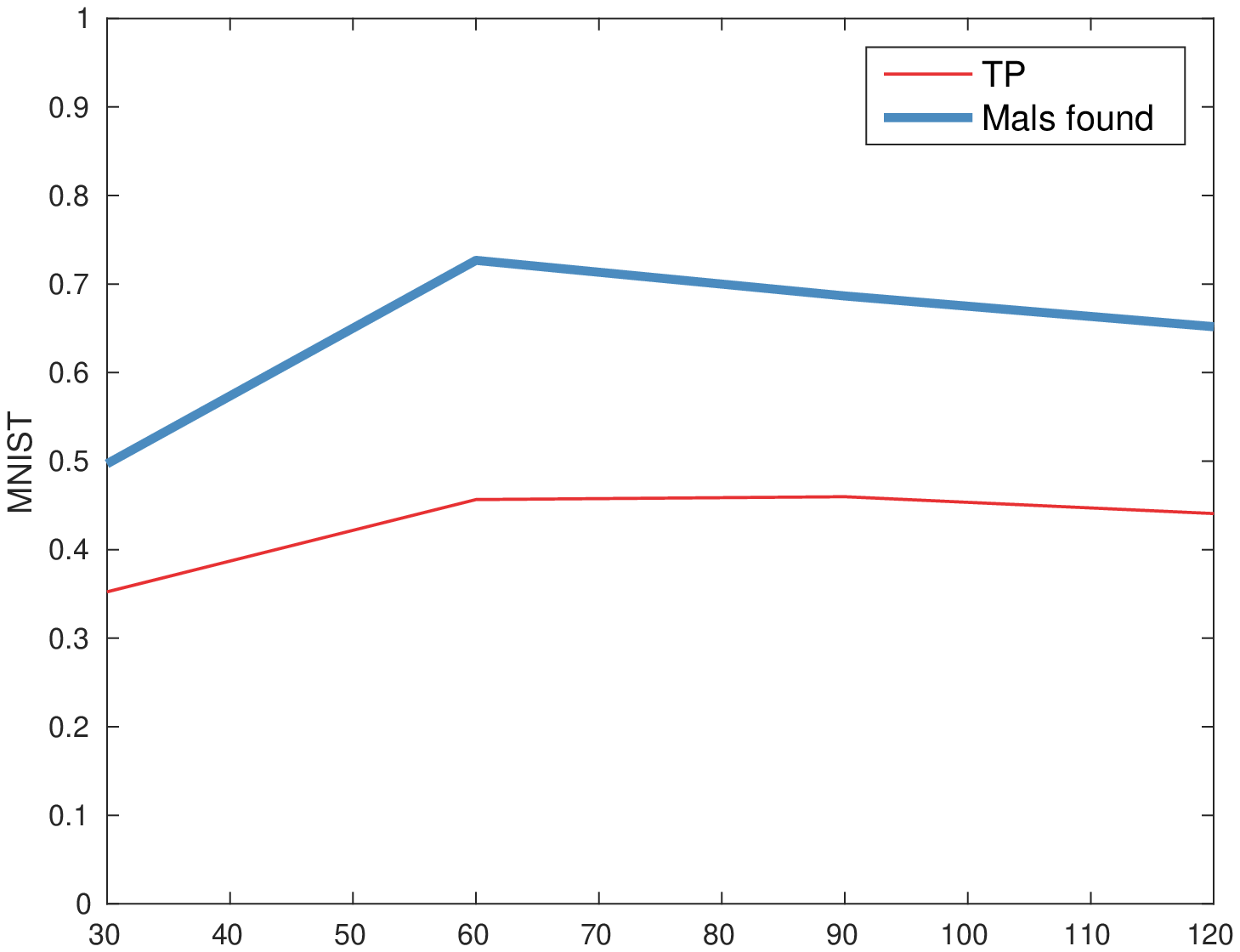}&
    \includegraphics*[width=0.5\linewidth]{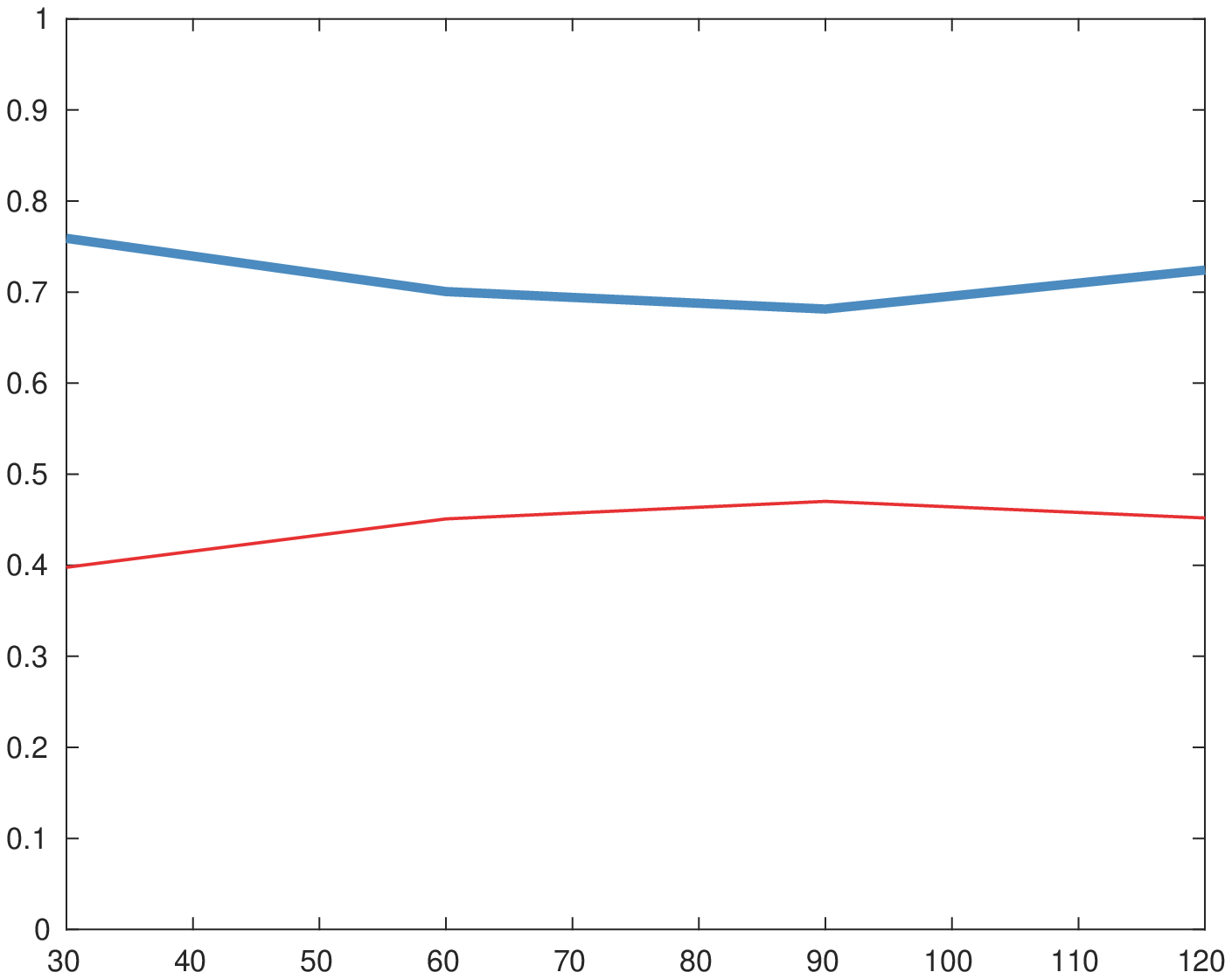}
    \\
    \includegraphics*[width=0.5\linewidth]{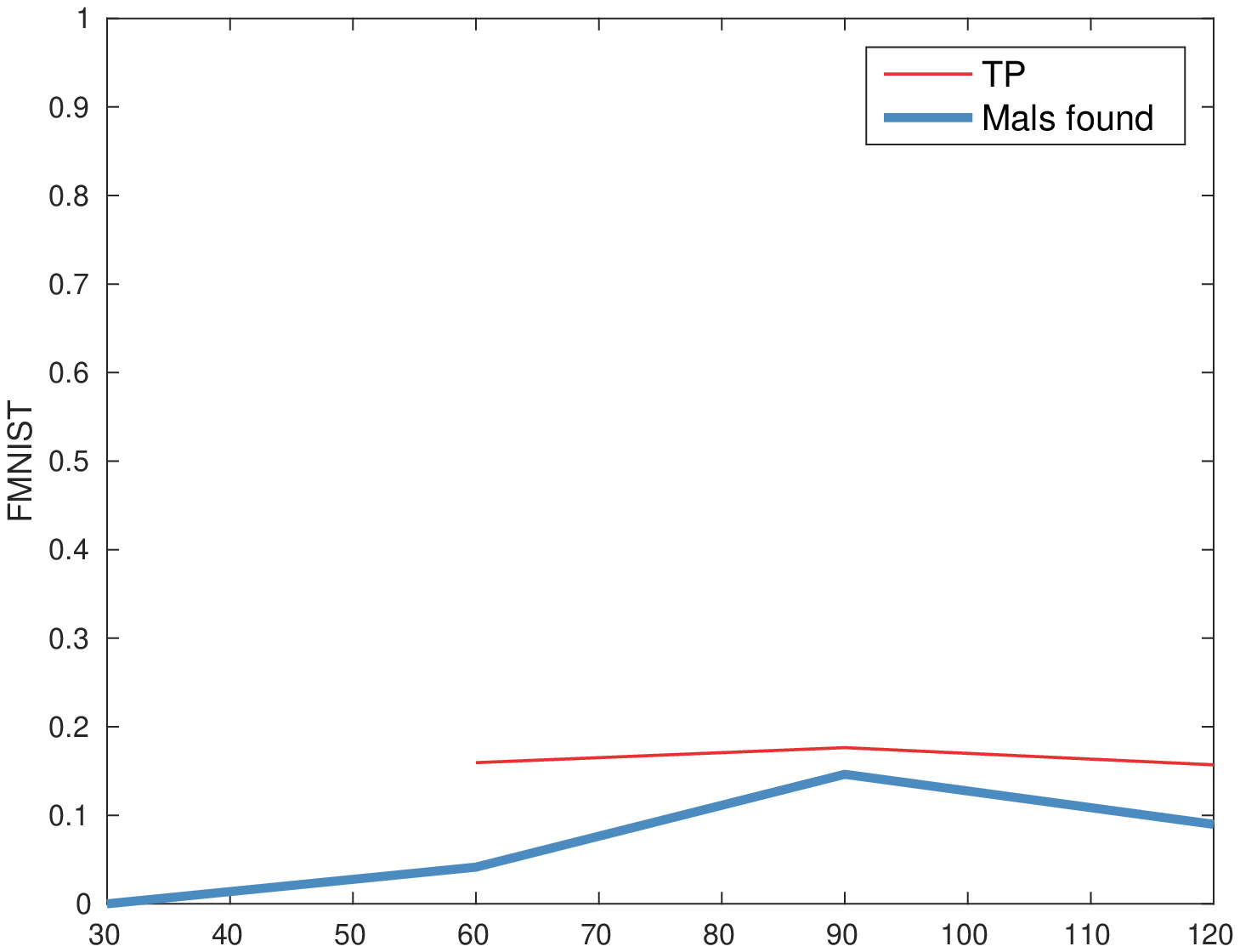}&
    \includegraphics*[width=0.5\linewidth]{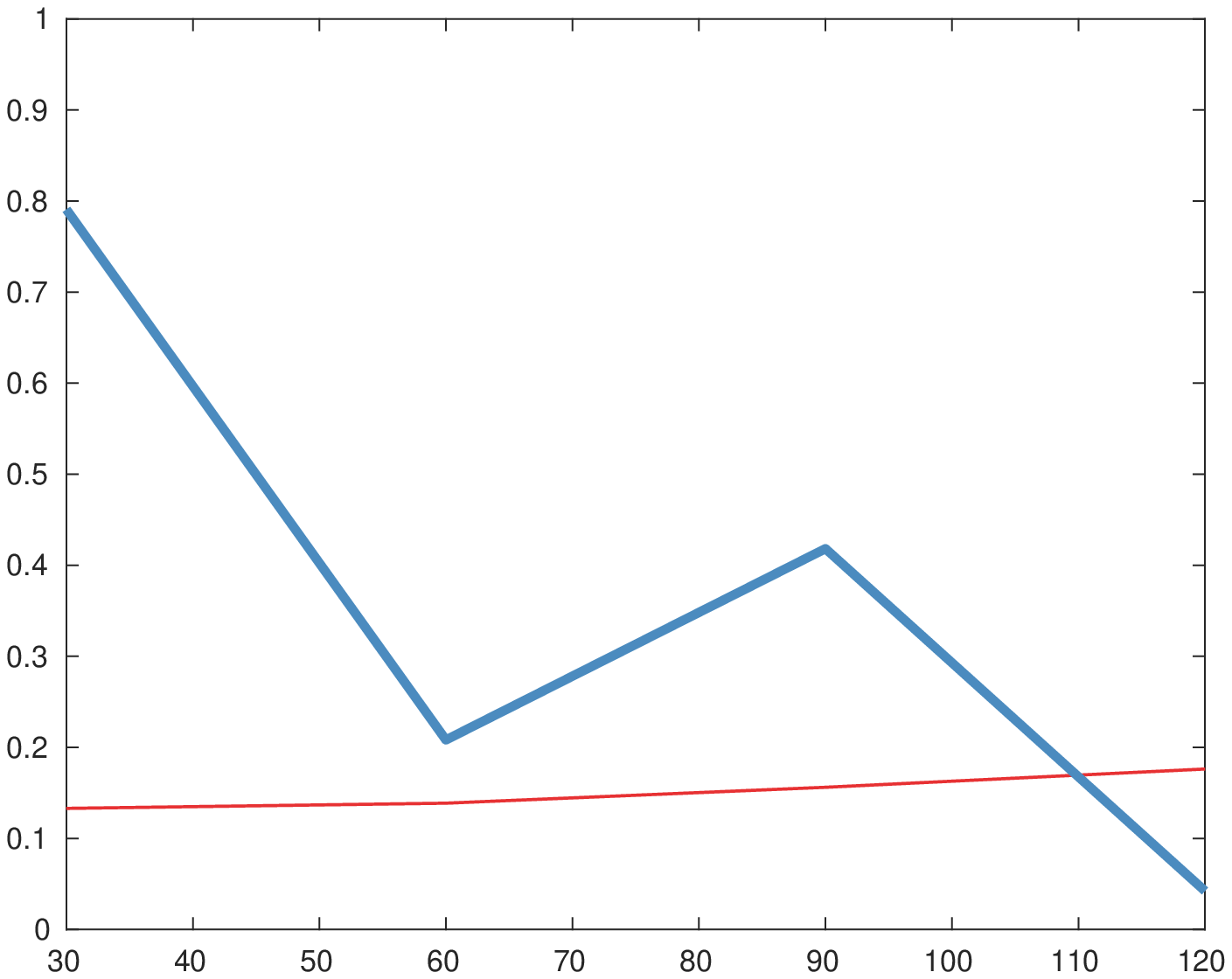}
    \\
    \includegraphics*[width=0.5\linewidth]{grf/pendigits/5_80_30_def_TP.eps}&
    \includegraphics*[width=0.5\linewidth]{grf/pendigits/10_80_30_def_TP.eps}
    \\
  \end{tabular}
  \caption{The vertical axis represents the ratio. The horizontal axis presents the number of base models. The red line shows the TP and blue line shows the ratio of found malicious samples. The curves correspond to the same experiments as in figure \ref{fig:defs-80-30} of the supplementary materials.}
\label{fig:TP-80-30}
\end{figure} 

\begin{figure}[!ht]
\centering
    \begin{tabular}{c@{}c@{}c@{}}
    ratios with $5\%$ attack & ratios with $10\%$ attack
    \\
    \includegraphics*[width=0.5\linewidth]{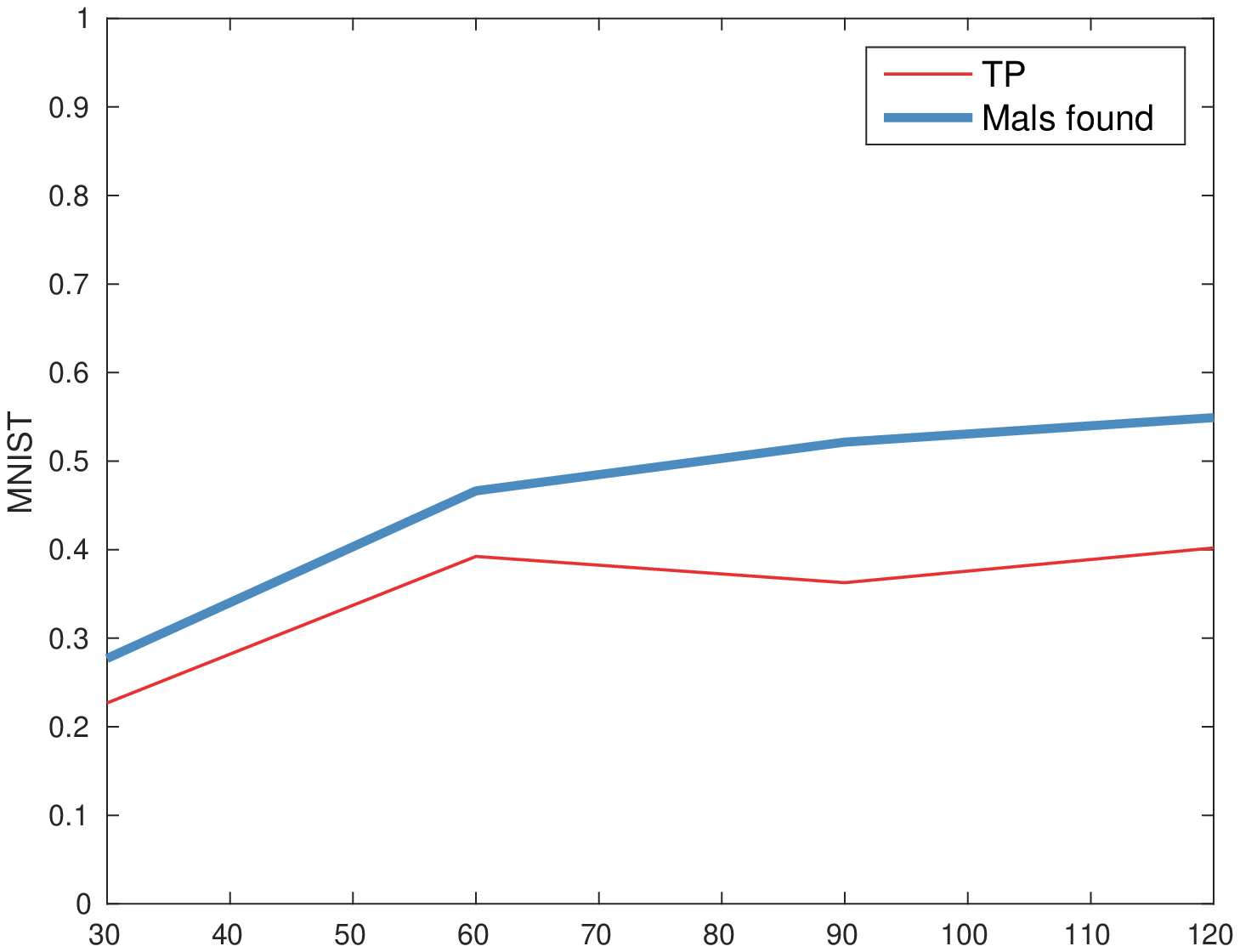}&
    \includegraphics*[width=0.5\linewidth]{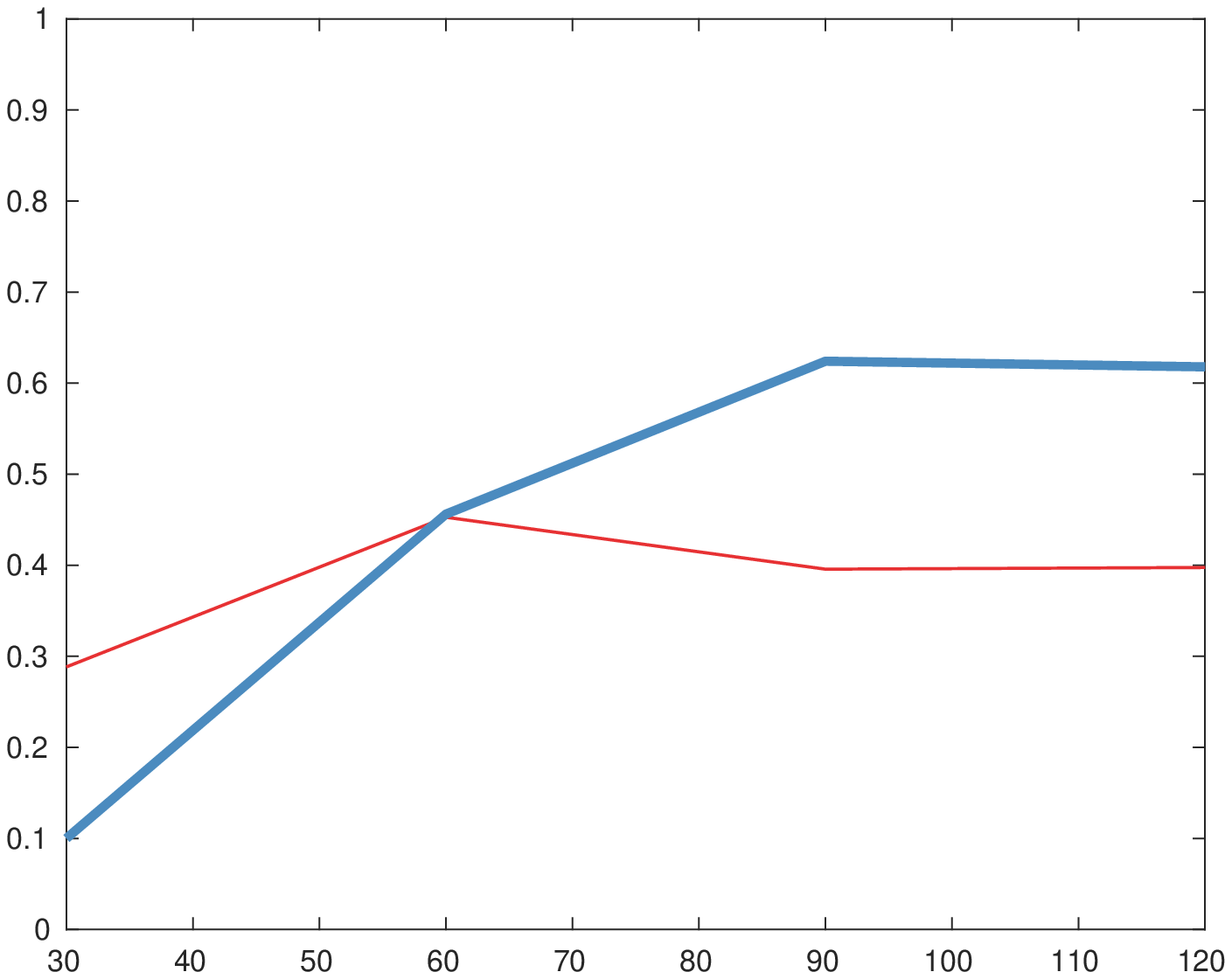}
    \\
    \includegraphics*[width=0.5\linewidth]{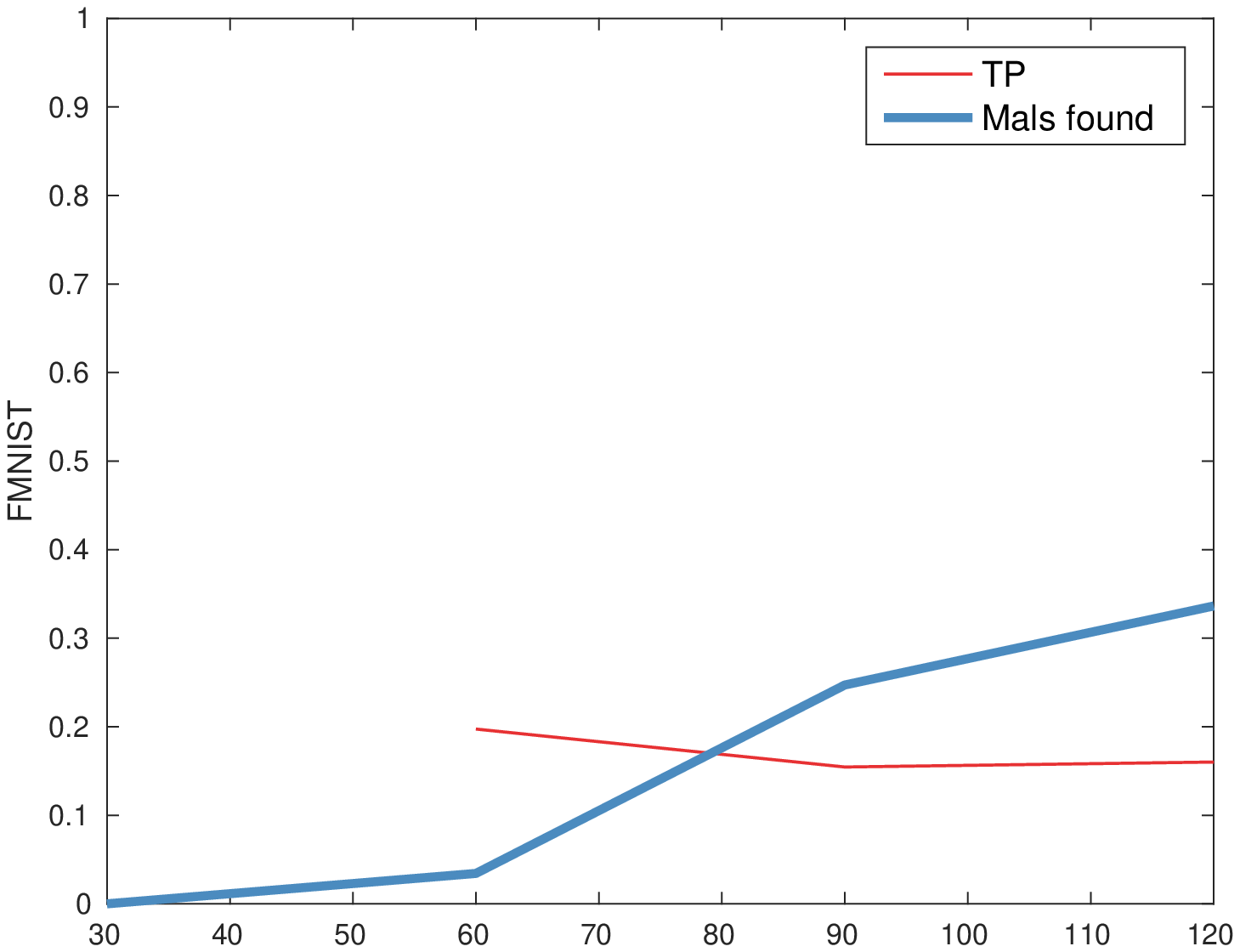}&
    \includegraphics*[width=0.5\linewidth]{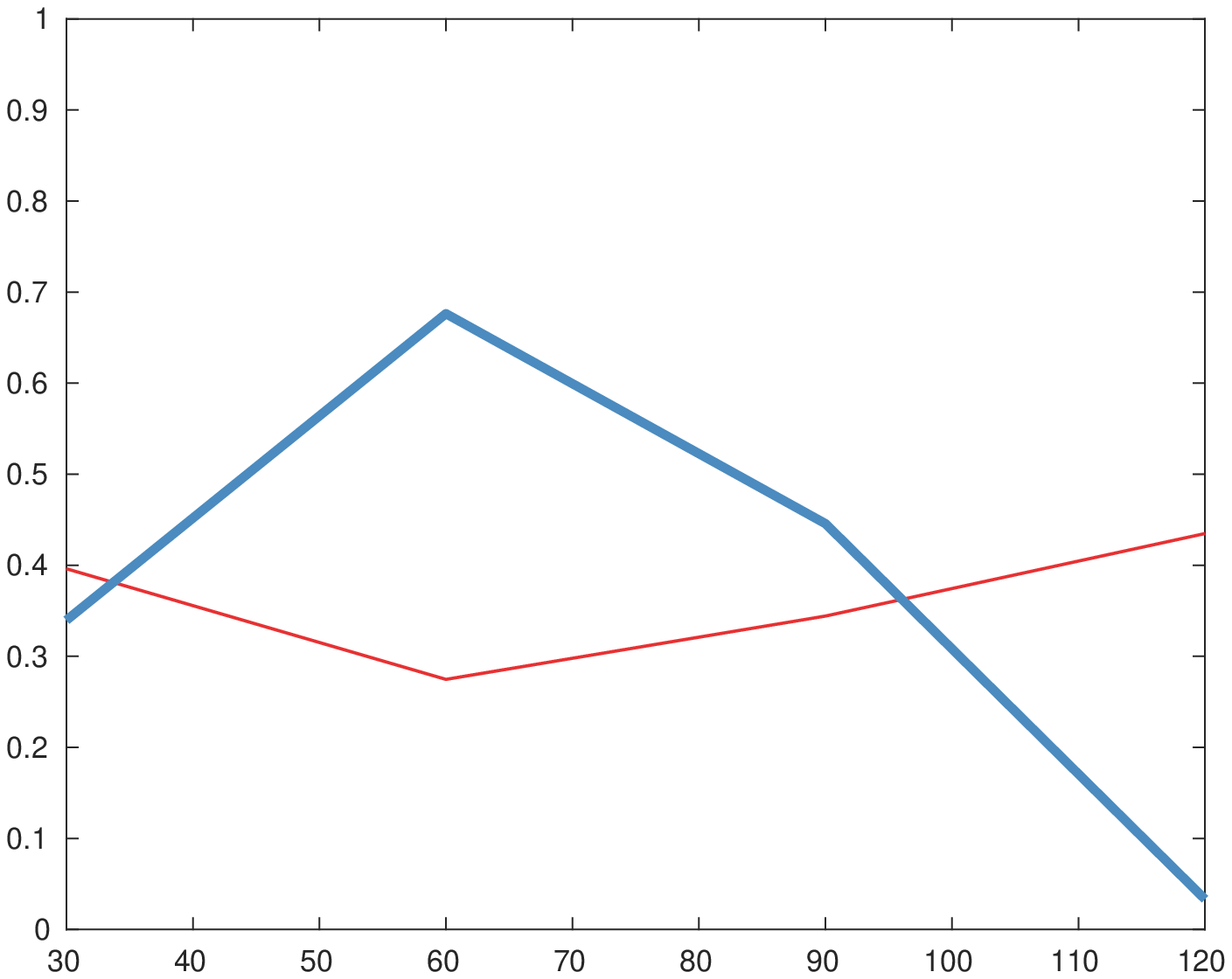}
    \\
    \includegraphics*[width=0.5\linewidth]{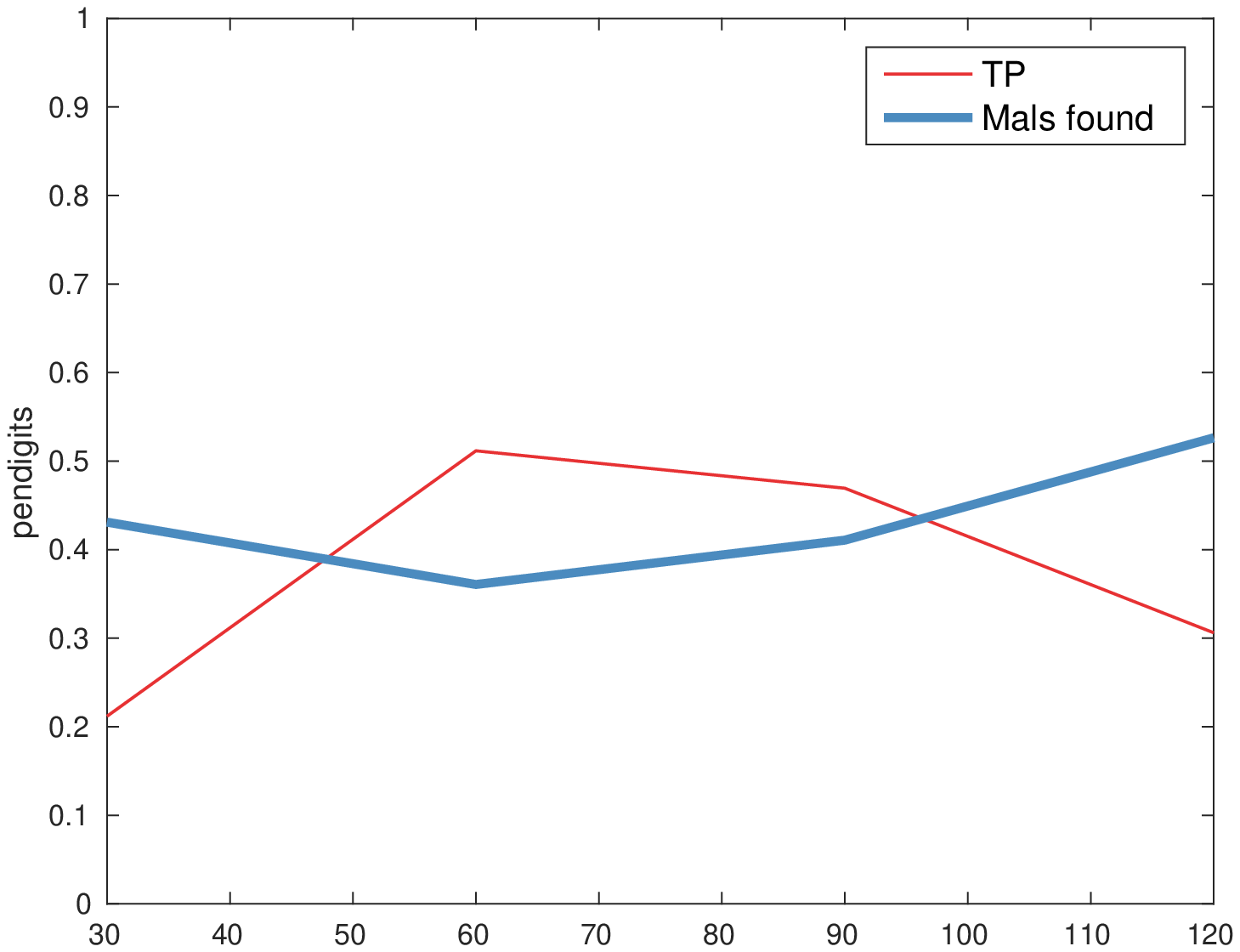}&
    \includegraphics*[width=0.5\linewidth]{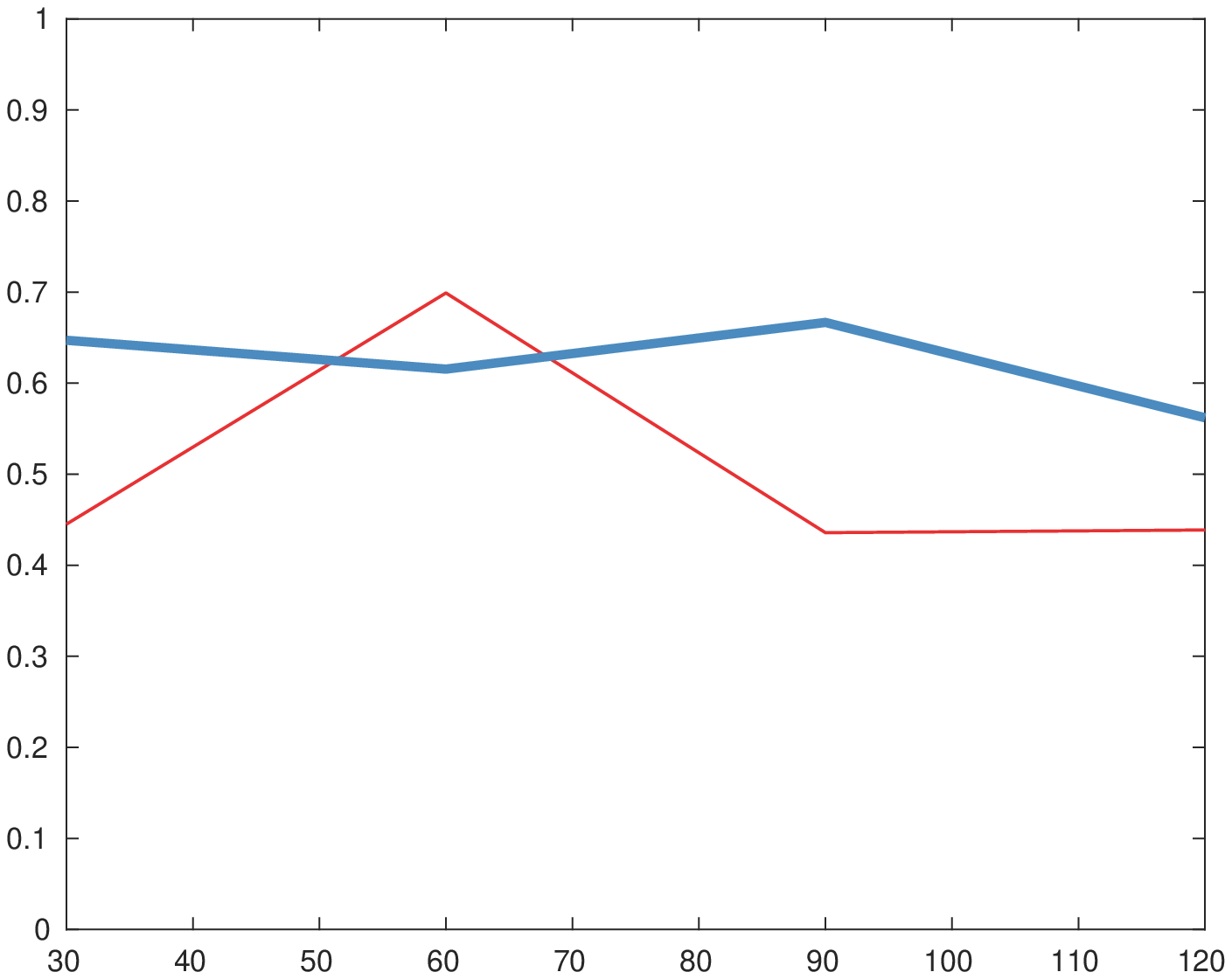}
    \\
    \includegraphics*[width=0.5\linewidth]{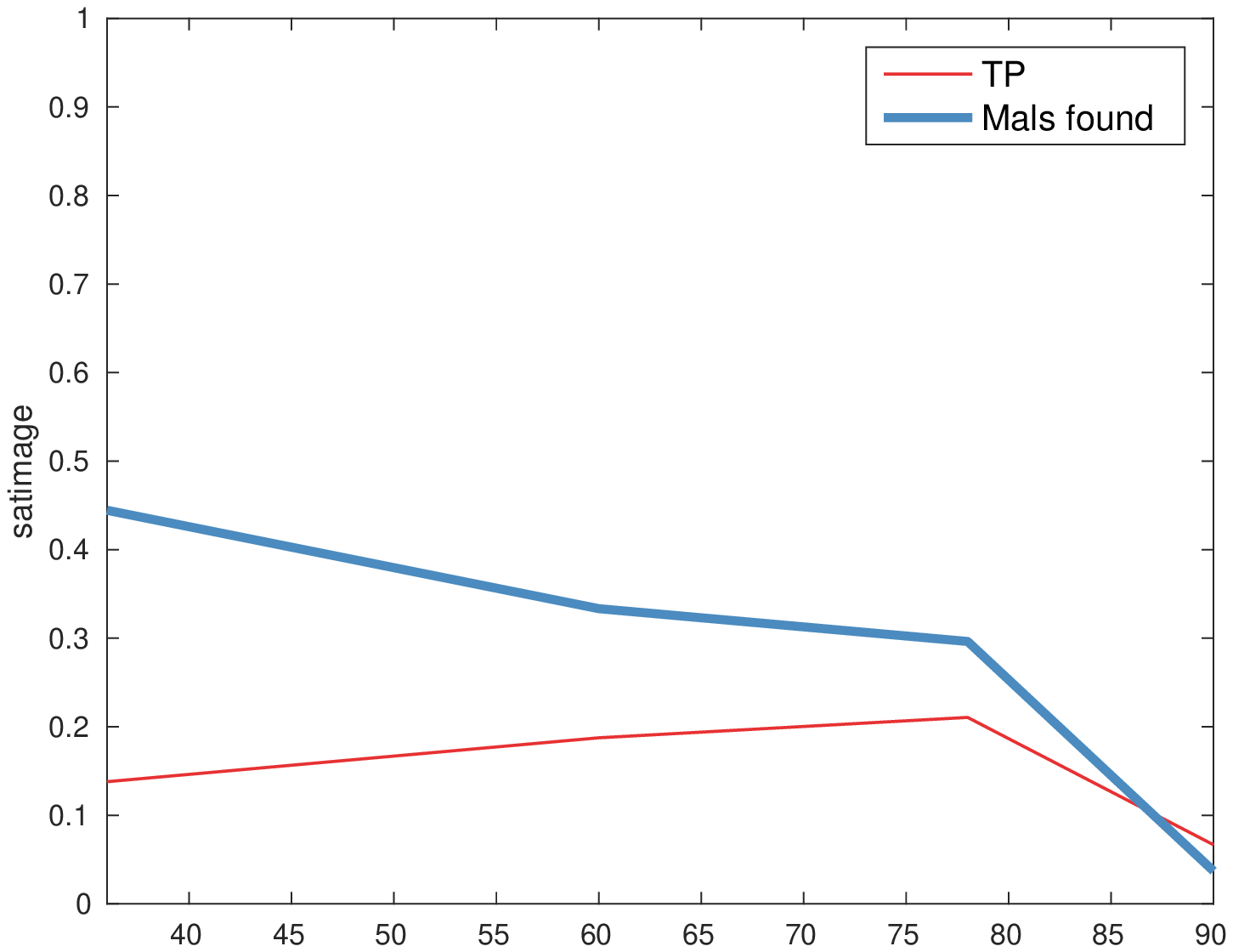}&
    \includegraphics*[width=0.5\linewidth]{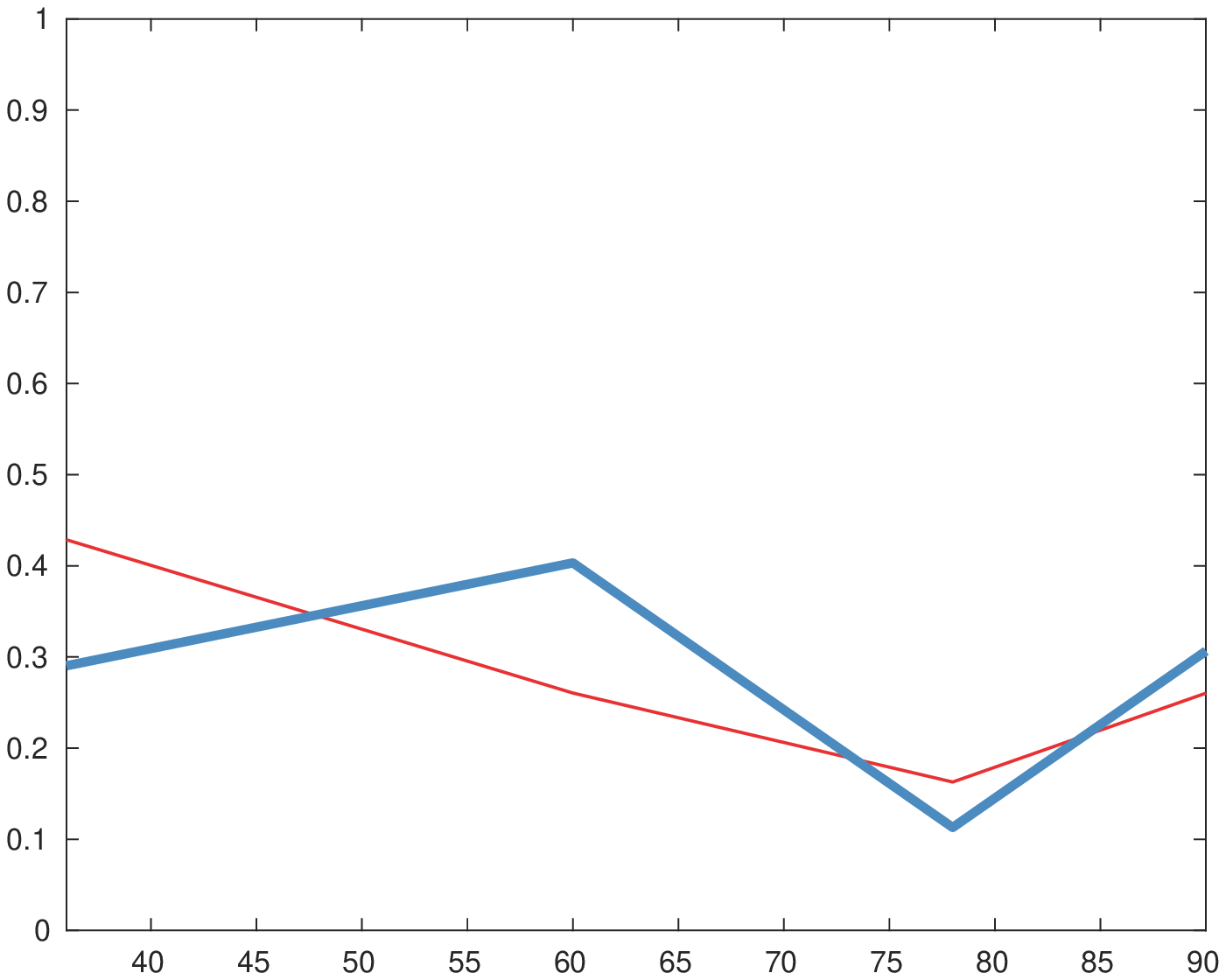}
    \\
  \end{tabular}
  \caption{The vertical axis represents the ratio. The horizontal axis presents the number of base models. The red line shows the TP and blue line shows the ratio of found malicious samples. The curves correspond to the same experiments as in figure \ref{fig:defs-20-60} of the supplementary materials.}
\label{fig:TP-20-60}
\end{figure} 

{\small
\bibliographystyle{ieee_fullname}
\bibliography{refs}
}

\end{document}